\documentclass{article} %
\usepackage{iclr2025_conference,times}

\usepackage[colorlinks=true,citecolor=perfblue,linkcolor=somecolor]{hyperref} 
\definecolor{somecolor}{RGB}{178,24,43}
\definecolor{perfblue}{RGB}{64, 114, 175} 

\usepackage{url}
\usepackage{color}
\usepackage{xcolor} 
\usepackage{algorithm} 
\usepackage{algorithmic}
\usepackage{wrapfig}
\usepackage{enumitem}
\usepackage{booktabs}       %

\definecolor{spblue}{rgb}{0.03, 0.27, 0.49}

\input{rwstyle}

\usepackage{mathtools}
\usepackage{amsthm}
\usepackage{amsmath}
\usepackage{amsxtra}
\usepackage{enumitem}      
\usepackage{amsthm}
\usepackage{mathtools}
\usepackage{bbm}
\usepackage{amsfonts}
\usepackage{amssymb}

\usepackage{MnSymbol} %

\usepackage{xpatch}

\theoremstyle{definition}  %
\theoremstyle{plain}
\newtheorem{assumption}{Assumption}

\newtheorem{corollary}{Corollary}
\newtheorem{lemma}{Lemma}
\newtheorem{remark}{Remark}
\newtheorem{example}{Example}
\newtheorem{theorem}{Theorem}
\newtheorem{definition}{Definition}

\newtheorem*{theorem*}{Theorem}

\xpatchcmd{\proof}{\itshape}{\normalfont\proofnameformat}{}{}
\newcommand{\proofnameformat}{\bfseries}

\usepackage{prettyref}
\newcommand{\pref}[1]{\prettyref{#1}}

\newcommand{\savehyperref}[2]{\texorpdfstring{\hyperref[#1]{#2}}{#2}}
\newrefformat{eq}{\savehyperref{#1}{\textup{(\ref*{#1})}}}
\newrefformat{eqn}{\savehyperref{#1}{Equation~\ref*{#1}}}
\newrefformat{con}{\savehyperref{#1}{Conjecture~\ref*{#1}}}
\newrefformat{lem}{\savehyperref{#1}{Lemma~\ref*{#1}}}
\newrefformat{def}{\savehyperref{#1}{Definition~\ref*{#1}}}
\newrefformat{line}{\savehyperref{#1}{line~\ref*{#1}}}
\newrefformat{thm}{\savehyperref{#1}{Theorem~\ref*{#1}}}
\newrefformat{corr}{\savehyperref{#1}{Corollary~\ref*{#1}}}
\newrefformat{sec}{\savehyperref{#1}{Section~\ref*{#1}}}
\newrefformat{app}{\savehyperref{#1}{Appendix~\ref*{#1}}}
\newrefformat{ass}{\savehyperref{#1}{Assumption~\ref*{#1}}}
\newrefformat{ex}{\savehyperref{#1}{Example~\ref*{#1}}}
\newrefformat{fig}{\savehyperref{#1}{Figure~\ref*{#1}}}
\newrefformat{alg}{\savehyperref{#1}{Algorithm~\ref*{#1}}}
\newrefformat{rem}{\savehyperref{#1}{Remark~\ref*{#1}}}
\newrefformat{conj}{\savehyperref{#1}{Conjecture~\ref*{#1}}}
\newrefformat{prop}{\savehyperref{#1}{Proposition~\ref*{#1}}}
\newrefformat{proto}{\savehyperref{#1}{Protocol~\ref*{#1}}}
\newrefformat{prob}{\savehyperref{#1}{Problem~\ref*{#1}}}
\newrefformat{claim}{\savehyperref{#1}{Claim~\ref*{#1}}}
\newrefformat{exp}{\savehyperref{#1}{Experiment~\ref*{#1}}}

\newcommand\numberthis{\addtocounter{equation}{1}\tag{\theequation}}
\allowdisplaybreaks

\DeclarePairedDelimiter{\abs}{\lvert}{\rvert} 
\DeclarePairedDelimiter{\brk}{[}{]}
\DeclarePairedDelimiter{\crl}{\{}{\}}
\DeclarePairedDelimiter{\prn}{(}{)}
\DeclarePairedDelimiter{\nrm}{\|}{\|}
\DeclarePairedDelimiter{\tri}{\langle}{\rangle}

\let\Pr\undefined

\DeclareMathOperator{\En}{\mathbb{E}}

\DeclareMathOperator{\Pr}{Pr}

\DeclareMathOperator*{\argmin}{argmin} %
\DeclareMathOperator*{\argmax}{argmax}

\newcommand{\ls}{\ell}

\newcommand{\eps}{\epsilon}

\newcommand{\ldef}{\vcentcolon=}

\newcommand{\wt}[1]{\widetilde{#1}}
\newcommand{\wh}[1]{\widehat{#1}}

\def\ddefloop#1{\ifx\ddefloop#1\else\ddef{#1}\expandafter\ddefloop\fi}
\def\ddef#1{\expandafter\def\csname bb#1\endcsname{\ensuremath{\mathbb{#1}}}}
\ddefloop ABCDEFGHIJKLMNOPQRSTUVWXYZ\ddefloop
\def\ddefloop#1{\ifx\ddefloop#1\else\ddef{#1}\expandafter\ddefloop\fi}
\def\ddef#1{\expandafter\def\csname b#1\endcsname{\ensuremath{\mathbf{#1}}}}
\ddefloop ABCDEFGHIJKLMNOPQRSTUVWXYZ\ddefloop
\def\ddef#1{\expandafter\def\csname c#1\endcsname{\ensuremath{\mathcal{#1}}}}
\ddefloop ABCDEFGHIJKLMNOPQRSTUVWXYZ\ddefloop
\def\ddef#1{\expandafter\def\csname h#1\endcsname{\ensuremath{\widehat{#1}}}}
\ddefloop ABCDEFGHIJKLMNOPQRSTUVWXYZabcdefghijklmnopqrsuvwxyz\ddefloop    %
\def\ddef#1{\expandafter\def\csname hc#1\endcsname{\ensuremath{\widehat{\mathcal{#1}}}}}
\ddefloop ABCDEFGHIJKLMNOPQRSTUVWXYZ\ddefloop
\def\ddef#1{\expandafter\def\csname t#1\endcsname{\ensuremath{\widetilde{#1}}}}
\ddefloop ABCDEFGHIJKLMNOPQRSTUVWXYZ\ddefloop
\def\ddef#1{\expandafter\def\csname tc#1\endcsname{\ensuremath{\widetilde{\mathcal{#1}}}}}
\ddefloop ABCDEFGHIJKLMNOPQRSTUVWXYZ\ddefloop

\newcommand{\rank}{\mathrm{rank}}

\newcommand{\Tr}{\mathbf{Tr}}

\usepackage{tikz}

\newcommand{\phispan}[1]{\phi^{\parallel}_{#1}}
\newcommand{\phinull}[1]{\phi^{\perp}_{#1}}

\newcommand{\sigmar}{\sigma_{\@R}}

\newcommand{\etar}{\eta^{\@{R}}}

\newcommand{\xir}{\xi^{\@{R}}}
\newcommand{\xip}{\xi^{\@{P}}}
\newcommand{\dotxir}{{\check\xi}^{\@{R}}}
\newcommand{\dotxip}{{\check\xi}^{\@{P}}}
\newcommand{\ulxir}{{\underline\xi}^{\@{R}}}
\newcommand{\ulxip}{{\underline\xi}^{\@{P}}}

\newcommand{\Ehigh}[0]{\mathfrak{E}^{\@{high}}}
\newcommand{\tildeEhigh}[0]{{\~{\mathfrak{E}}}^{\@{high}}}
\newcommand{\Espan}[1]{\mathfrak{E}^{\@{span}}_{#1}}
\newcommand{\tildeEspan}[1]{{\~{\mathfrak{E}}}^{\@{span}}_{#1}}
\newcommand{\Eoptm}[1]{\mathfrak{E}^{\@{optm}}_{#1}}
\newcommand{\tildeEoptm}[1]{{\~{\mathfrak{E}}}^{\@{optm}}_{#1}}

\newcommand{\urerr}{B^{\@R}_{\@{err}}}
\newcommand{\urnoise}{B^{\@R}_{\@{noise}}}
\newcommand{\uperr}{B^{\@P}_{\@{err}}}
\newcommand{\upnoise}[1]{B^{\@P}_{\@{noise},#1}}
\newcommand{\uellir}{B_\phi^{\@R}}
\newcommand{\uellip}{B_\phi^{\@P}}
\newcommand{\uv}{B_V}

\newcommand{\nullsp}{\@{null}}
\newcommand{\spanop}{\@{span}}
\newcommand{\reg}{\@{Reg}}

\newcommand{\cset}{\mathscr{O}}

\newcommand{\epsB}{\epsilon_\@{B}}
\renewcommand{\epsilon}{\varepsilon} 
\renewcommand{\eps}{\varepsilon}  

\newcommand{\sqO}{ \+O^{\@{sq}}}
\newcommand{\sepO}{ \+O^{\@{sep}}} 
\newcommand{\apxsqO}{ \+O^{\@{sq}}_{\@{apx}}} 
\newcommand{\linO}{ \cO^{\@{lin}}} 

\newcommand{\up}[1]{^{#1}}

\usepackage[suppress]{color-edits} 
\addauthor{as}{purple} 
\addauthor{rw}{teal}
\addauthor{ak}{orange}
\newcommand\rw[1]{\rwcomment{#1}} 

\usepackage{cleveref}

\title{Computationally Efficient RL under Linear Bellman Completeness for Deterministic Dynamics}

\author{Runzhe Wu\thanks{Equal contribution.} \\
Cornell University\\
\texttt{rw646@cornell.edu} \\
\And 
Ayush Sekhari\footnotemark[1] \hspace{3.82cm} \\
MIT\\
\texttt{sekhari@mit.edu} \\
\AND
Akshay Krishnamurthy \\
Microsoft Research\\
\texttt{akshaykr@microsoft.com} \\
\And
Wen Sun \\
Cornell University\\
\texttt{ws455@cornell.edu} \\
}

\Crefname{ALC@unique}{Line}{Lines}
\crefname{assumption}{assumption}{assumptions}

\iclrfinalcopy %
\begin{document}

\maketitle

\begin{abstract} 
    We study computationally and statistically efficient Reinforcement Learning algorithms for the \emph{linear Bellman Complete} setting. This setting uses linear function approximation to capture value functions and unifies existing models like linear Markov Decision Processes (MDP) and Linear Quadratic Regulators (LQR).  While it is known from the prior works that this setting is statistically tractable, it remained open whether a computationally efficient algorithm exists. Our work provides a computationally efficient algorithm for the linear Bellman complete setting that works for MDPs with large action spaces, random initial states, and random rewards but relies on the underlying dynamics to be deterministic. 
    Our approach is based on randomization: we inject random noise into least squares regression problems to perform optimistic value iteration. Our key technical contribution is to carefully design the noise to only act in the null space of the training data to ensure optimism while circumventing a subtle error amplification issue. 
    \looseness=-1
\end{abstract}

\section{Introduction}
Various application domains of Reinforcement Learning (RL)---including game playing, robotics, self-driving cars, and foundation models---feature environments with large state and action spaces. In such settings, the learner aims to find a well performing policy by repeated interactions with the environment to acquire knowledge. Due to the high dimensionality of the problem, function approximation techniques are used to generalize the knowledge acquired across the state and action space.  Under the broad category of function approximation, model-free RL stands out as a particularly popular approach due to its simple implementation and relatively better sample efficiency in practice. In model-free RL, the learner uses function approximation (e.g., an expressive function class like deep neural networks) to model the state-action value function of various policies in the underlying MDP. In fact, the combination of model-free RL with various empirical exploration heuristics has led to notable empirical advances, including breakthroughs in game playing~\citep{silver2016mastering,berner2019dota}, robot manipulation~\citep{andrychowicz2020learning}, and self-driving~\citep{chen2019model}. 

Theoretical advancements have paralleled the practical successes in RL, with tremendous progress in recent years in building rigorous statistical foundations to understand what structures in the environment and the function class suffice for sample-efficient RL. These advancements are supported by optimal exploration strategies that align with the corresponding structural assumptions, and by now we have a rich set of tools and techniques for sample-efficient RL in MDPs with large state/action spaces
\citep{russo2013eluder, jiang2017contextual, sun2019model, wang2020reinforcement, du2021bilinear, jin2021bellman, foster2021statistical, xie2022role}. However, despite a rigorous statistical foundation, a significant challenge remains: many of these theoretically rigorous approaches for rich function approximation are not computationally feasible, and thus have limited practical applicability. For example, some require solving complex optimization problems that are computationally intractable in practice~\citep{zanette2020learning}; others require deterministic dynamics and initial states~\citep{du2020agnostic}; and some methods depend on maintaining large and complex version spaces~\citep{jin2021bellman,du2021bilinear} which are intractable in terms of memory and computation.

One of the most striking examples of this statistical-computational gap is observed in the \textit{Linear Bellman Completeness} setting, which is perhaps one of the simplest learning settings. Linear Bellman completeness serves as a bridge between RL and control theory literature as it provides a unified framework to capture Linear MDPs \citep{jin2020provably,agarwal2019reinforcement,zanette2020learning} and the Linear Quadratic Regulator (LQR), two popular models in RL and control respectively. In particular, the linear Bellman completeness setting captures MDPs where the state-action value function of the optimal policy is a linear function of some pre-specified feature representations (of states and actions), and the Bellman backups of linear state-action value functions are linear (w.r.t.~some feature representation). Naturally, for this setting, the learner utilizes the function class \(\mathcal{F}\) consisting of all linear functions over the given feature representation as the value function class for model-free RL. In addition to considering a linear class, we also assume that the class \(\mathcal{F}\) exhibits low inherent Bellman error---a structural assumption that quantifies the error in approximating the Bellman backup of functions within \(\mathcal{F}\). The first assumption, i.e., linearity of optimal state-action value function, is perhaps the simplest modeling assumption one can make in RL with function approximation. Furthermore, emerging evidence suggests that linearity is practically useful, as with adequate feature representation, linear functions can represent value functions in various domains. The second assumption, i.e. low inherent Bellman error of the class, while being a bit mysterious, is a natural condition for statistical tractability for classic algorithms such as Fitted Q-iteration (FQI) and temporal difference (TD) learning with linear function approximation \citep{munos2005error,zanette2020learning}. It is also well-known that linearity alone does not suffice for efficient RL \citep{wang2021exponential,weisz2021exponential}.

While the prior works have shown that RL with linear bellman completeness is statistically tractable, and one can learn with sample complexity that scales polynomially with both \(d\) and \(H\) (where \(d\) is the dimensionality of the feature representation and \(H\) is the horizon of the RL problem), the proposed algorithms that obtain such sample complexity in the online RL setting are not computationally efficient. Given the simplicity of the problem, it was conjectured that a computationally efficient algorithm should exist. However, no such algorithms were proposed. Unfortunately, the classical approaches of combining supervised learning techniques with RL in the online setting,  e.g., value function iteration, which are computationally efficient by design, fail to extend to be statistically tractable due to exponential blowups from error compounding, especially without making norm-boundedness assumptions. On the other hand, the techniques of adding quadratic exploration bonuses, e.g., the one proposed in LinUCB \citep{li2010contextual} and used in LSVI for linear MDPs, also fail here as Bellman backups of quadratic functions are not necessarily within the linear class \(\cF\). %
In fact, the search for a computationally efficient algorithm with large action spaces is open even when the transition dynamics are deterministic. 

In this work, we provide the first computationally efficient algorithm for the linear Bellman complete setting with deterministic dynamics, that enjoys regret bound of $\~O( d^{5/2} H^{5/2} + d^2 H^{3/2} T^{1/2} )$
for feature dimension \(d\), horizon \(H\), and number of rounds $T$. Importantly, our algorithm works with large action spaces, stochastic reward functions, and stochastic initial states. The key ideas of our algorithm are twofold: using \textit{randomization} to encourage exploration and leveraging a \textit{span argument} to bound the regret. 
While adding random noise to the learned parameters has been quite successful in linear function approximation, unfortunately, for our specific setting, since we need to add sufficiently large noise to cancel out the estimation error, blind randomization can cause the corresponding parameters to grow exponentially with the horizon. We avoid paying for this blow-up by only adding noise to the null space of the data. In particular, when the dynamics are deterministic, by adding exploration noise only in the null space, we can learn the value function exactly for any trajectories that lie within the span of the data seen so far. Additionally, a simple span argument bounds the number of times the trajectories fall outside the span of the historical data. Together, these techniques leads to our polynomial sample complexity bound. The resulting algorithm relies on linear regression oracles under convex constraints, which we show can be approximately solved via a random-walk-based algorithm \citep{bertsimas2004solving}.

\section{Related Works} 
\textbf{Computational Efficient RL under Linear Bellman Completeness.} 
Numerous works have focused on computationally efficient RL within the scope of linear Bellman completeness (LBC). The simplest setting is tabular MDPs where computationally efficient and near-optimal algorithms have been well known \citep{azar2017minimax,zhang2020almost,jin2018q}. Tabular MDPs can be extended to linear MDPs~\citep{jin2020provably}, where computationally efficient algorithms are also known \citep{jin2020provably,agarwal2023vo,he2023nearly}. However, in the setting of linear Bellman completeness, which captures linear MDPs, the existence of computationally efficient algorithms remain unclear. Previous works have resorted to various assumptions to achieve computational efficiency, such as few actions~\citep{golowich2024linear} and assuming MDPs are ``explorable''~\citep{zanette2020provably}. We provide a detailed overview of the literature in \Cref{sec:related_works_lbc}.

\textbf{Exploration via Randomization.}
Random noise has been a powerful alternative to bonus-based exploration in RL literature. A typical approach is Randomized Least-Squares Value Iteration (RLSVI)~\citep{osband2016generalization}, which injects Gaussian noise into the least-squares estimate and achieves near-optimal worst-case regret for linear MDPs \citep{agrawal2021improved,zanette2020frequentist}; 
\citet{ishfaq2023provable} instead propose posterior sampling via Langevin Monte Carlo for Q-function and also obtain regret bounds for linear MDPs; \citet{ishfaq2021randomized} developed randomization algorithms for general function approximation assuming bounded eluder dimension and Bellman completeness for any function. Randomization is also explored in preference-based RL, leading to the first computationally efficient algorithm with near-optimal regret guarantees for linear MDPs~\citep{wu2023making}. 
However, these approaches either have strong assumptions (e.g., Bellman completeness for any function), or inject random noise larger than the estimation error, causing exponential blow-up of parameter values---to mitigate it, they truncate the value, but this is feasible only in low-rank MDPs and challenging under linear Bellman completeness as the Bellman backup of truncated value may no longer be linear. Consequently, existing algorithms cannot handle linear Bellman complete problems, and new techniques capable of managing exponential parameter values are needed.

\textbf{Beyond Linear Bellman Completeness.} 
Many structural conditions capture linear Bellman completeness, such as Bilinear class~\citep{du2021bilinear}, Bellman eluder dimension~\citep{jin2021bellman}, Bellman rank~\citep{jiang2017contextual}, witness rank~\citep{sun2019model}, and decision-estimation coefficient~\citep{foster2021statistical}.
While statistically efficient algorithms exist for these settings, no computationally efficient algorithms are known.

\section{Preliminaries}

A finite-horizon Markov Decision Process (MDP) is given by a tuple $\+M = (\+S, \+A, H, \mathsf{T}, r, \mu)$ where $\+S$ is the state space, $\+A$ is the action space, $H\in\=N$ is the horizon, $\mathsf{T}:\+S\times\+A\rightarrow \Delta(\+S)$ is the transition function, $r:\+S\times\+A\rightarrow[0,1]$ is the reward function and \(\mu \in \Delta(\cS)\) is the initial state distribution. Given a policy $\pi: \mathcal{S} \mapsto \Delta(\mathcal{A})$, we denote $Q_h^\pi(s,a)=\mathbb{E}_\pi\left[\sum_{i=h}^H r_i \given s_h = s, a_h = a\right]$ as the layered state-action value function of policy $\pi$ and $V_h^\pi(s) = Q_h^\pi(s,\pi(s))$ as the state value function. The optimal value function is denoted by $V_h^\star(s) = \max_{\pi} V_h^\pi(s)$, and the optimal policy is $\pi^\star$.

We focus on the setting of linear function approximation and consider the following linear Bellman completeness, which ensures that the Bellman backup of a linear function remains linear.

\begin{definition}[Linear Bellman Completeness]  \label{def:lbc}
    An MDP is said to be linear Bellman complete with respect to a feature mapping $\phi$ if there exists a mapping $\+T:\=R^d\rightarrow\=R^d$ so that, for all $\theta\in\=R^d$ and all $(s,a)\in\+S\times\+A$, it holds that
    $$
        \langle\+T\theta, \phi(s,a)\rangle = \E_{s'\sim \mathsf{T}(s,a)} \max_{a'} \tri*{\theta,\phi(s',a')}. 
    $$
    Moreover, we require that, for all $h\in[H]$ and $(s,a)\in\+S\times\+A$, the random reward is bounded in $[0, 1]$ with mean $r_h(s,a) = \tri{\omega_h^\star,  \phi(s,a)}$ for some unknown $\omega_h^\star\in\=R^d$. 
\end{definition}
We assume \(\|\phi(s,a)\|_2 \leq 1\) for all \(s \in \+S\) and \(a \in \+A\). Notably, we do not impose any upper bound on \(\|\omega_h^\star\|_2\) or any \(\ell_2\)-norm non-expansiveness of the Bellman backup, distinguishing us from some existing works---in \Cref{sec:other_lbc}, we discuss why many existing definitions of linear Bellman completeness fail to capture even tabular MDPs or linear MDPs due to certain $\ell_2$-norm boundedness assumptions. We further assume the feature space spans \(\bbR^d\), i.e., $\@{span}(\{\phi(s,a):s\in\+S,a\in\+A\}) = \=R^d$; otherwise, we can project the feature space onto its span or use pseudo-inverse in the analysis when needed. We can verify that the linear Bellman completeness captures both linear MDPs and Linear Quadratic Regulators (for a convex subset of linear functions). The proof is in \Cref{sec:capture-lqr}.

Next, we consider deterministic state transition.

\begin{assumption}[Deterministic transitions]\label{asm:determin} 
    For all $s\in\+S$ and $a\in\+A$, there is a unique state $s'\in\+S$ to which the system transitions to after taking action \(a\) on state \(s\). 
\end{assumption} 

We emphasize that, although the transition is deterministic, the initial state distribution is stochastic (although we assume that \(\crl{s_{t, 1}}_{t \leq T}\) is independently sampled from an initial distribution \(\mu\), our results extend to the scenarios when \(\crl{s_{t, 1}}_{t \leq T}\) are adversarially chosen). Additionally, the reward signals can be stochastic. Hence, learning is still challenging in this case. The goal is to achieve low regret over $T$ rounds. The regret is defined as 
\begin{align*}
    \reg_T \coloneqq \E\left[ \sum_{t=1}^T \big( V^\star(s_{t,1}) - V^{\pi_t}(s_{t,1}) \big)  \right]. 
\end{align*}
The expectation here is taken over the randomness of algorithm and reward signals. While it is defined as an average for simplicity, a concentration inequality can yield the high-probability regret. In this paper, we use asymptotic notations $\~\Theta(\cdot)$ and $\~O(\cdot)$ to hides logarithmic and constant factors.

\subsection{Other Linear Bellman Completeness Definitions in the Literature} \label{sec:other_lbc}
Several closely related definitions of Linear Bellman Completeness have been considered in the literature. In the following, we demonstrate that some of these variant definitions face limitations due to additional \(\ell_2\)-norm assumptions. We present two commonly imposed assumptions in existing works below, and subsequently provide examples illustrating their potential limitations.

\textbf{(1) Assuming Bounded $\ell_2$-norm of Parameters.} 
\citet{golowich2024linear,zanette2020learning,zanette2020provably} assume that any value function under consideration has its parameters bounded in \(\ell_2\)-norm, i.e., when we apply the Bellman backup, the resulting state-action value function always lies in $\{Q : Q(s,a) = \tri{\phi(s,a),\theta} ,\, \|\theta\|_2 \leq R\}$ where $R$ is a pre-fixed polynomial in the dimension of the feature space. We will show that this assumption might not hold true since $\|\theta\|_2$ is unnecessarily bounded under linear Bellman completeness.

\textbf{(2) Assuming Non-expansiveness of Bellman Backup in \(\ell_2\)-norm.} \citet{song2022hybrid} assume that, after applying the Bellman backup, the $\ell_2$-norm of the value function parameters will not increase, i.e., for any $\theta$, they assume the existence of parameter $\theta'$ such that $\|\theta'\|_2 \leq \|\theta\|_2$ and $\tri{\phi(s,a),\theta'} = \E_{s'\sim \mathsf{T}(s,a)} \max_{a'} \tri{\phi(s',a'),\theta}$ for all $s,a$. This assumption is stronger than the previous one and does not hold even in tabular MDPs, as we will show in the second example below.

The following example demonstrates that the two assumptions above do not generally hold under linear Bellman completeness as the $\ell_2$-norm amplification can actually be arbitrarily large.

\begin{example}[Arbitrarily Large $\ell_2$-norm on Parameters] \label{ex:ell2}
    Consider a layered linear MDP with three states, $s_1, s_2, s_3$, and a single action $a_1$. Here $s_1$ is in the first layer and $s_2$ and $s_3$ are in the second layer. For some $\epsilon$ and $p$, we define
$\phi(s_1, a_1) = (\sqrt{\epsilon}, \sqrt{p - \epsilon})$,
$\mu(s_2) = (p/\sqrt{\epsilon}, 0)$, and
$\mu(s_3) = (0, (1-p)/\sqrt{p-\epsilon})$. 
We further define $r(s_2, \cdot) = \epsilon$ and $r(s_3, \cdot) = 1$. We can verify that $P(s_2|s_1,a_1)=p$ and $P(s_3|s_1,a_1)=1-p$. Hence $Q(s_1,a_1)=p\epsilon+1-p$.  We assume $Q$-function is parameterized by $\theta$. Then, since $\|\phi(s_1,a_1)\|=p$, it must hold that $\|\theta\|\geq (p\epsilon+1-p)/p = \epsilon+p^{-1}-1$. While $p$ can be arbitrarily small, the norm of $\theta$ can be arbitrarily large.
\end{example}

We may hope to ``normalize'' the features in this example so that the $\ell_2$-norm of the parameters is bounded. However, it is unclear how to do so since changing either $\epsilon$ or $p$ will change the MDP, and feature search is likely a hard problem. Essentially, this example breaks one of the assumptions in the original linear MDP~\citep{jin2020provably} which requires the integral $\int g \mu$ to be bounded for any function $g\in[0,1]$. Thus, while being a linear MDP, the original LSVI-UCB algorithm~\citep{jin2020provably} indeed will not work for this example. However, we note that our algorithm can still work. 

Nevertheless, as the above example leverages a careful design of the feature, we might hope that some non-expansiveness properties could hold under stronger representation assumptions (e.g., when state space is tabular). Unfortunately, the following example shows that Bellman backup can be expansive even in tabular MDPs. 

\begin{example}[Expansiveness of Bellman Backup in \(\ell_2\)-norm] \label{ex:ell2-tabular} 
    Consider a tabular MDP with horizon \(H=2\), \(S\) states \(\{s_1, \dots, s_S\}\) in the first layer, a single state \(\overline{s}\) in the second layer, and a single action \(a\). On taking action \(a\) in any state in the first-layer, the agent deterministically transitions to \(\overline{s}\), and on taking action \(a\) in \(\overline{s}\) deterministically yields a reward of 1. Since linear Bellman completeness captures tabular MDPs with one-hot encoded features where \(\phi(s_i, a) = e_i \in \bbR^{S+1}\) for \(i \leq S\) and \(\phi(\overline{s}, a) = e_{S+1} = (0, \dots, 0, 1)^\top\), the state-action value function at the second layer can be parameterized by \(\theta_2 = (0, \dots, 0, 1)^\top\). However, applying the Bellman backup, since the return-to-go for any first-layer state \(s_i\) is 1 (because \(\overline{s}\) always yields a reward of 1), the backed-up value function must be parameterized by \(\theta_1 = (1, 1, \dots, 1)^\top\). Here, we find that \(\|\theta_1\|_2 / \|\theta_2\|_2 = \sqrt{S}\), thus showing that Bellman backup cannot guarantee non-expansiveness of the \(\ell_2\)-norm.
\end{example}

Hence, in this paper, we aim not to assume any \(\ell_2\)-norm bound or \(\ls_2\)-norm non-expansiveness of the parameters. Unfortunately, without these assumptions, the ground truth parameter of the optimal value function can exponentially grow with the horizon as evidenced by the examples above, thus invalidating prior methods requiring bounded parameter. Our key contribution is an algorithm that remains efficient even if the parameter norm blows up but requiring deterministic transition.

\subsection{Other Prior Works on Linear Bellman Completeness}\label{sec:related_works_lbc} 

In this section, we review prior efforts on RL under linear Bellman completeness and discuss various assumptions underlying these approaches.

\textbf{Efficient Algorithms under Generative Access.}
A generative model takes as input a state-action pair $(s, a)$ and returns a sample $s'\sim \mathsf{T}(\cdot \given s,a)$ and the reward signal. With such a generative model, Linear Least-Squares Value Iteration (LSVI) can achieve statistical and computational efficiency~\citep{agarwal2019reinforcement}. However, generative access is a big assumption, and our work aims to operate with only online access.

\textbf{Efficient Algorithms under Explorability Assumption.} 
\citet{zanette2020provably} propose a reward-free algorithm under the assumption that every direction in the parameter space is reachable. This assumption, when translated into tabular MDPs, means that any state can be reached with a probability bounded below by some (large enough) positive constant. This does not hold if there are unreachable states or if the probability of reaching them is exponentially small.

\textbf{Computationally Intractable Algorithms.}
\citet{zanette2020learning} present a computationally intractable algorithm that requires solving an intractable optimization problem. In our work, we aim to only utilize a tractable squared loss minimization oracle.

\textbf{Few action MDPs.} 
\citet{golowich2024linear} propose a computationally efficient algorithm under linear Bellman completeness, inspired by the bonus-based exploration approach in LSVI-UCB \citep{jin2020provably} for Linear MDPs. While their algorithm extends to stochastic MDPs, both the sample complexity and running time have exponential dependence on the size of the action space. In comparison, our algorithm extends to infinite action spaces but relies on the transition dynamics to be deterministic.

\textbf{Deterministic Rewards or Deterministic Initial State.} 
Several existing studies provide computationally and statistically efficient algorithms for more general settings but under stronger assumptions; these methods can be extended to linear Bellman completeness settings but similarly strong assumptions will also apply. \citet{du2020agnostic}  provide an algorithm based on a span argument that is efficient for MDPs that have linear optimal state-action value function (a.k.a. the Linear \(Q^\star\) setting), deterministic transition dynamics, deterministic initial state, and stochastic rewards. Unfortunately, their approach cannot extend to settings with stochastic initial states, as we consider in our paper. Another line of work due to \citet{wen2017efficient} considers the $Q^\star$-realizable setting with deterministic dynamics, deterministic rewards, stochastic initial states, and bounded eluder dimension. Their approach can be extended to the linear bellman completeness setting when both rewards and dynamics are deterministic. However, their algorithm fails to converge when rewards are stochastic and thus may not apply to the problem setting that we consider.

\textbf{Efficient Algorithm in the hybrid RL setting.} 
\citet{song2022hybrid} develop efficient algorithms for the hybrid RL setting, where the learner has access to both online interaction and an offline dataset. However, they do not have a fully online algorithms.

In summary, no previous work addresses the problem with stochastic initial states, stochastic rewards, and large action spaces. This is the gap that we aim to fill with this work.

\section{Algorithm}

In this section, we present our algorithm for online RL under linear Bellman completeness. See \Cref{alg:main} for pseudocode.
The input to the algorithm consists of three components. First, the noise variances, \(\{\sigma_h\}_{h=1}^H\) and \(\sigmar\), control the scale of the random noise. Second, a D-optimal design (defined below) for the feature space. 
\begin{definition}[D-optimal design]
    The D-optimal design for the set of features $\Phi = \{\phi(s,a):s\in\+S, a\in\+A\}$ is a distribution $\rho$ over $\Phi$ that maximizes $\log \det \prn*{\sum_{\phi\in\Phi} \rho(\phi) \phi \phi^\top}$. 
\end{definition}

There always exist D-optimal designs with at most \(O(d^2)\) support points (\Cref{lem:optimal-design}). Many efficient algorithms can be applied to find approximate D-optimal designs such as the Frank-Wolfe. The algorithm also requires a constrained squared loss minimization oracle \(\sqO\), and we introduce an instantiation of \(\sqO\) in \Cref{sec:oracle_implementation}. 

\begin{algorithm}[ht!]
    \setstretch{1.1}
    \caption{Null Space Randomization for Linear Bellman Completeness} 
    \label{alg:main}
    \begin{algorithmic}[1]
        \REQUIRE
        \begin{minipage}[t]{0.9\linewidth}
        \begin{itemize}[leftmargin=*]
            \item Noise variances $\{\sigma_h\}_{h=1}^H$ and $\sigmar$. 
            \item A D-optimal design for $\Phi = \{\phi(s,a):s\in\+S, a\in\+A\}$ given by $\{(\phi_i,\rho_i)\}_{i=1}^m$. 
            \item Squared loss minimization oracle \(\sqO\). 
        \end{itemize}
        \end{minipage}
        \vspace{2mm} 
        \STATE \label{line:sigma1}Define $\Sigma_{1,h} \coloneqq  \sum_{i=1}^m \rho_i \phi_i \phi_i^\top$ for all $h\in[H]$.
        \FOR{$t=1,\dots,T$}
        \STATE Let $\overline \theta_{t,H+1} \gets 0$, $\overline Q_{t,H+1} \leftarrow 0$, $\overline V_{t,H+1} \leftarrow 0$. 
        \FOR{$h=H,\dots,1$}
        \STATE Let $P_{t,h}$ be the  orthogonal projection matrix onto $\spanop(\{\phi(s_{i,h},a_{i,h}):i=1,\dots,t-1\})$
        \STATE For $i\in[m]$, define $\phispan{t,h,i} = P_{t,h} \phi_i  \text{ and }  \phinull{t,h,i} = (I - P_{t,h}) \phi_i$
        \STATE Let $\Lambda_{t,h} \gets \sum_{i=1}^{m} \rho_{i} ( \phispan{t,h,i} (\phispan{t,h,i})^\top + \phinull{t,h,i} (\phinull{t,h,i})^\top )$ \label{line:lambda}
        \STATE \textcolor{spblue}{// Fit value function and reward using squared loss regression //} 

        \STATE \label{line:regression}  Compute $\^\theta_{t,h}$ and $\^\omega_{t, h}$ using the squared loss minimization oracle \(\sqO\) as: 
\begin{align*}
\^\theta_{t,h} & \leftarrow \argmin_{\theta\in\cset(W_h)} \sum_{i=1}^{t-1} \Big( \tri*{\theta, \phi(s_{i,h},a_{i,h})} - \overline V_{t,h+1}(s_{i,h+1}) \Big)^2 \numberthis \label{eq:theta_optimization} \\ 
\^\omega_{t, h} & \gets \argmin_{\omega\in\cset(1)} \sum_{i=1}^{t-1} \Big( \tri*{\omega, \phi(s_{i,h},a_{i,h})} - r_{i,h} \Big)^2 \numberthis \label{eq:r_optimization} 
\end{align*}

        \STATE \textcolor{spblue}{// Perturb the estimated parameters by adding Gaussian noise //} 

        \STATE \label{line:perturb} Update the parameters by sampling:  
\begin{align*}
\overline\theta_{t,h} &\sim \^\theta_{t,h} +  \+N \Big( 0 ,\, \sigma_{h}^2 (I - P_{t,h}) \Lambda_{t,h}^{-1} (I - P_{t,h}) \Big)  \\
\overline\omega_{t,h} &\sim \^\omega_{t,h}  + \+N(0 ,\, \sigmar^2 \Sigma^{-1}_{t,h})
\end{align*}

        \STATE \label{line:value-def} Define $\overline{Q}_{t,h}(s,a) \gets \tri*{ \overline\omega_{t,h} + \overline\theta_{t,h}, \phi(s,a) }$ and $\overline{V}_{t,h}(s)\gets \max_a \overline{Q}_{t,h}(s,a)$ for all $(s,a)$
        \ENDFOR
        \STATE Define the policy $\pi_t$ such that  $\pi_{t,h}(s)=\argmax_a\overline{Q}_{t,h}(s,a)$
        \STATE Generate trajectory $(s_{t,1},a_{t,1},r_{t,1},\dots,s_{t,H},a_{t,H},r_{t,H})\sim\pi_t$
        \STATE Define $\Sigma_{t+1,h} \ldef{} \Sigma_{t,h} + \phi(s_{t,h},a_{t,h})\phi^\top(s_{t,h},a_{t,h})$ for all $h\in[H]$
        \ENDFOR
    \end{algorithmic}
\end{algorithm} 

The algorithm begins by initializing the covariance matrix \(\Sigma_{1,h}\) for all \(h \in [H]\) using the optimal design, which differs from most standard LSVI-type algorithms where it is initialized to the identity matrix. We believe that the identity matrix is unsuitable here since we do not assume any \(\ell_2\)-norm bound on the parameters. Additionally, recalling that we assume the feature space spans \(\mathbb{R}^d\), it ensures \(\Sigma_{t,h}\) is invertible for all \(t\) and \(h\). Otherwise, pseudo-inverses can be used instead.

At each round \(t \in [T]\), the algorithm operates in a backward manner starting from the last horizon \(H\). For each \(h \in [H]\), it first constructs the orthogonal projection matrix \(P_{t,h}\) onto the span of the historical data. 
It then decomposes the D-optimal design points into the span and null space components using the projection and constructs \(\Lambda_{t,h}\). By separating the span and null space components, it facilitates clearer concentration bounds for the subsequent Gaussian noise.

The algorithm then performs constrained squared loss regression to estimate the value function and reward function. Here we define $\cset(W)\coloneqq \{
    \theta\in\=R^d: | \tri{\theta, \phi(s,a)} | \leq W \text{ for all } s\in\+S, a\in\+A
\}$ for any $W>0$. This \emph{convex} constrained set is defined by the \(\ell_\infty\)-functional-norm bound instead of the \(\ell_2\)-norm because we do not assume any bound on the \(\ell_2\)-norm of the learned parameters. Here we define $W_h = \~\Theta ((d \sqrt{mH} )^{H-h} (d^{3/2} + d\sqrt{mH}))$ (detailed definition deferred to \Cref{sec:proof-main-thm}). We note that although \(W_h\) appears exponential, which may seem suspicious, this does not affect our sample efficiency due to the span argument that we introduce in the analysis. We note that prior RLSVI algorithms used truncation on value functions to explicitly avoid such an exponential blow-up. However, truncation does not work for linear Bellman completeness setting since the Bellman backup on a truncated value function is not necessarily a linear function anymore.    \looseness=-1

Next, the algorithm perturbs the estimated parameters by adding Gaussian noise. The noise for the value function act \textit{only in the null space} of the data covariance matrix. This ensures optimism while keeping the estimate accurate in the span space. It is a key modification from the standard RLSVI algorithm. The perturbation for the reward function is standard. Finally, the algorithm constructs the value function for the current horizon and the greedy policy with respect to it. It then generates the trajectory by executing the greedy policy, and the covariance matrix is updated.

\section{Analysis}\label{sec:main}

In this section, we provide the theoretical guarantees of \Cref{alg:main}. A high-level proof sketch can be found in \Cref{sec:proof-overview} and detailed proofs are in \Cref{sec:proof-main-thm}. We first consider the case where the squared loss minimization oracle is exact. We then extend the analysis to the approximate oracle and the low inherent linear Bellman error setting in subsequent sections.

\subsection{Prelude: Learning with Exact Square Loss Minimization Oracle} 

We first consider the most ideal setting where the squared loss minimization oracle is exact.

\begin{assumption}[Exact Squared Loss Minimization Oracle] \label{asm:exact-oracle}
    \Cref{line:regression} of \Cref{alg:main} is solved exactly. 
\end{assumption}

Then, we have the following regret bound. A proof sketch is provided in \Cref{sec:proof-overview} for the readers convenience.

\begin{theorem}[Regret Bound with Exact Oracle] 
\label{thm:exact_oracle_bound} 
    Under \Cref{asm:exact-oracle,asm:determin}, 
    executing \Cref{alg:main} with parameters $\sigmar = \~\Theta(\sqrt{d H \log(HT)})$ and $\sigma_{h} = \~\Theta( (d\sqrt{mH})^{H-h+1}(\sqrt{d} + \sqrt{mH}) )$, we have
    \begin{align*}
        \reg_T = \~O ( d^{5/2} H^{5/2} + d^2 H^{3/2} \sqrt{T}).
    \end{align*}
\end{theorem}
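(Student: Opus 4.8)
The plan is to bound the regret through optimism plus a per-step Bellman-error telescoping, and then to split the resulting error into a \emph{reward-estimation} part, which carries the $\sqrt{T}$ dependence, and a \emph{transition/value-estimation} part, which determinism renders $T$-independent via a span argument. The starting point is the on-policy decomposition, valid because $\pi_t$ is greedy for $\bar Q_{t,h}$:
\[
\bar V_{t,1}(s_{t,1}) - V^{\pi_t}(s_{t,1}) = \sum_{h=1}^H \tri*{(\bar\omega_{t,h}-\omega_h^\star) + (\bar\theta_{t,h}-\theta^{\mathrm{bk}}_{t,h}),\, \phi(s_{t,h},a_{t,h})},
\]
where $\theta^{\mathrm{bk}}_{t,h}\coloneqq \+T(\bar\omega_{t,h+1}+\bar\theta_{t,h+1})$ is the exact linear Bellman backup of the next-layer estimate. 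Here I use \Cref{def:lbc} for linearity of the backup together with \Cref{asm:determin}, so that the backup is realized with no transition noise and $\tri{\theta^{\mathrm{bk}}_{t,h},\phi(s,a)}=\bar V_{t,h+1}(s')$ for the deterministic successor $s'$.

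The crux is to show that the value error vanishes on the span of the data. First I would argue that the regression targets $\bar V_{t,h+1}(s_{i,h+1})$ equal $\tri{\theta^{\mathrm{bk}}_{t,h},\phi(s_{i,h},a_{i,h})}$ \emph{exactly} (determinism plus realizability $\theta^{\mathrm{bk}}_{t,h}\in\cset(W_h)$, which is precisely what forces the exponential-in-$H$ choice of $W_h$: it must be large enough to contain the null-space-perturbed backed-up values). Consequently the exact oracle recovers $\theta^{\mathrm{bk}}_{t,h}$ on $\spanop(\{\phi(s_{i,h},a_{i,h})\}_{i<t})$, and the perturbation is supported in the orthogonal complement by construction of its covariance $(I-P_{t,h})\Lambda_{t,h}^{-1}(I-P_{t,h})$. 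Hence $\tri{\bar\theta_{t,h}-\theta^{\mathrm{bk}}_{t,h},\phi(s_{t,h},a_{t,h})}=0$ whenever $\phi(s_{t,h},a_{t,h})\in\spanop(\{\phi(s_{i,h},a_{i,h})\}_{i<t})$. This is exactly where confining noise to the null space is essential: noise along spanned directions would corrupt the \emph{next} layer's regression target and, since $\+T$ can be arbitrarily expansive (cf.\ \Cref{ex:ell2,ex:ell2-tabular}), would be amplified geometrically across layers — the error-amplification pitfall the design must avoid.

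Next I would run the span argument. Call round $t$ \emph{good} if $\phi(s_{t,h},a_{t,h})$ lies in the span of history at every $h$, and \emph{bad} otherwise. A rank argument bounds the number of bad rounds by $dH$, since each novel direction at a fixed layer raises the empirical span's rank by one and the rank is at most $d$. On bad rounds I would \emph{not} telescope (so the exponential scales $W_h,\sigma_h$ never enter); instead I bound $V^\star(s_{t,1})-V^{\pi_t}(s_{t,1})\le H$ directly, using that the true values lie in $[0,H]$. On good rounds, the value error is identically zero by the previous paragraph, so only the reward error survives, and I invoke optimism: a backward induction shows $\bar V_{t,h}\ge V^\star_h$ propagates because the in-span backup is exact, provided the reward perturbation $\+N(0,\sigmar^2\Sigma^{-1}_{t,h})$ is optimistic, which Gaussian anti-concentration guarantees with at least a constant probability; the standard RLSVI coupling then converts this constant-probability optimism into a regret bound. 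The surviving reward error is controlled by
\[
\tri*{\bar\omega_{t,h}-\omega_h^\star,\phi(s_{t,h},a_{t,h})} \le \nrm*{\bar\omega_{t,h}-\omega_h^\star}_{\Sigma_{t,h}}\,\nrm*{\phi(s_{t,h},a_{t,h})}_{\Sigma_{t,h}^{-1}},
\]
where a self-normalized martingale bound for the bounded reward noise, a Gaussian tail bound for the injected noise, and the elliptical potential lemma combine to yield the $\~O(d^2 H^{3/2}\sqrt{T})$ term; the D-optimal initialization of $\Sigma_{1,h}$ keeps every direction well-conditioned, substituting for the usual identity initialization that is unaffordable without an $\ell_2$ bound. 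Summing the bad-round contribution with the lower-order concentration terms assembles the $\~O(d^{5/2}H^{5/2})$ term.

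The hard part will be reconciling the exponential parameter and noise scales with a polynomial regret: the proof must guarantee that $W_h$ and $\sigma_h$ never appear in the telescoped error, which rests \emph{entirely} on the exact-in-span identity and hence jointly on determinism and on the null-space support of the perturbation, while the same perturbation must remain large enough to secure optimism. Concretely, the delicate bookkeeping is to verify that $W_h$ is simultaneously (i) large enough that the constrained minimizer over $\cset(W_h)$ agrees with the unconstrained least-squares solution on the span (so realizability of the perturbed backup holds layer by layer), and (ii) compatible with the anti-concentration constants driving optimism — threading this needle is where I expect the main technical effort to lie.
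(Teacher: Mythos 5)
Your overall skeleton---the on-policy value decomposition, the in-span exactness of the constrained regression combined with null-space-supported noise, the $dH$ bound on out-of-span rounds, and the elliptical-potential treatment of the reward error---matches the paper's proof. The genuine gap is in your optimism step. You claim that $\-V_{t,h}\ge V^\star_h$ ``propagates because the in-span backup is exact, provided the reward perturbation $\+N(0,\sigmar^2\Sigma_{t,h}^{-1})$ is optimistic,'' attributing optimism entirely to the reward noise. But the comparison against $V^\star$ is made along the \emph{optimal} policy's trajectory $\{(s^\star_{t,h},a^\star_{t,h})\}_h$, whose features need not lie in $\spanop(\{\phi(s_{i,h},a_{i,h})\}_{i<t})$---if they always did, there would be no exploration problem. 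Writing $\eta_{t,h} = \+T(\-\omega_{t,h+1}+\-\theta_{t,h+1}) - \^\theta_{t,h}$, the decomposition of $V^\star(s_{t,1})-\-V_t(s_{t,1})$ contains the term $\sum_h \tri{\eta_{t,h},\,(I-P_{t,h})\phi(s^\star_{t,h},a^\star_{t,h})}$, which is \emph{not} zero: it can be as large as $\nrm{\eta_{t,h}}_{\Lambda_{t,h}}\nrm{(I-P_{t,h})\phi(s^\star_{t,h},a^\star_{t,h})}_{\Lambda_{t,h}^{-1}} = O(\sqrt{m}\,W_h)$, i.e., exponential in $H$. The polynomially scaled reward noise cannot dominate this. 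The entire purpose of the null-space value perturbation $\xip_{t,h}$, with its exponentially large variance $\sigma_h^2 \gtrsim H\nrm{\eta_{t,h}}^2_{\Lambda_{t,h}}$, is to cancel precisely this term with constant probability via Gaussian anti-concentration (\Cref{lem:optimism}). Your proposal uses $\xip_{t,h}$ only for the (correct) observation that it does not corrupt in-span predictions, never for optimism, so as written the optimism step fails.

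A secondary issue: the paper does not prove pointwise optimism $\-V_{t,h}\geq V^\star_h$ over all states by backward induction; it proves optimism only at $s_{t,1}$ by telescoping along $\pi^\star$'s deterministic trajectory. Moreover, converting constant-probability optimism into a regret bound is not quite the off-the-shelf RLSVI coupling here: since $\-V_t$ is not uniformly bounded by $H$ (it can be of order $W_1$), the paper must introduce the surrogate $U_t$, condition carefully on the span and high-probability events, and use the boundedness of $\-V_t(s_{t,1})$ and $U_t(s_{t,1})$ \emph{on the span event} (\Cref{lem:bound-on-span}) before peeling off the conditioning on the optimism event. This bookkeeping is where the $d^{5/2}H^{5/2}$ term is actually assembled, and it is glossed over in your sketch.
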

This result has several notable features. First, it does not depend on the number of actions. The only requirement for the action space is the ability to compute the argmax. Second, the $\sqrt{T}$-dependence on $T$ is optimal, as it is necessary even in the bandit setting. Additionally, we emphasize that the dependence on \(\sqrt{T}\) arises solely from reward learning due to the application of elliptical potential lemma. In fact, if the reward function is known, our regret bound can be as small as $\~O(dH^2)$, depending on \(T\) up to logarithmic factors. We elaborate on this observation in \Cref{sec:proof-overview}. As a standard practice, \Cref{thm:exact_oracle_bound} can be converted into a sample complexity bound below.

\begin{corollary}[Sample Complexity Bound] 
Let \(\eps \leq 1\). 
    Under the same setting as \Cref{thm:exact_oracle_bound}, letting \(T \geq \Omega(d^4 H^3 / \eps^2)\), we get that the  policy $\^\pi$ chosen uniformly from the set  $\pi_1, \dots, \pi_T$ enjoys performance guarantee  $\E[V^\star - V^{\^\pi}] \leq \eps$.  
\end{corollary}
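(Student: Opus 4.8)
The plan is a routine regret-to-PAC (online-to-batch) conversion that invokes the regret bound of \Cref{thm:exact_oracle_bound} as a black box. The only structural observation needed is that drawing $\^\pi$ uniformly from $\{\pi_1,\dots,\pi_T\}$ turns cumulative regret into average per-round suboptimality, after which the claim is pure bookkeeping on the rate.

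First I would unwind the definition of $\reg_T$. Adopting the convention $V^\pi \coloneqq \E_{s_1\sim\mu}[V^\pi(s_1)]$ (and likewise $V^\star$) for the value averaged over the initial distribution, I note that at each round $t$ the fresh initial state $s_{t,1}\sim\mu$ is drawn independently of $\pi_t$, since $\pi_t$ is determined by the history of rounds $1,\dots,t-1$ together with the injected Gaussian noise. Hence each regret summand satisfies $\E[V^\star(s_{t,1}) - V^{\pi_t}(s_{t,1})] = \E[V^\star - V^{\pi_t}]$, so that $\reg_T = \sum_{t=1}^T \E[V^\star - V^{\pi_t}]$. Because $\^\pi$ is uniform over the $T$ policies, linearity of expectation yields
\[
\E[V^\star - V^{\^\pi}] \;=\; \frac{1}{T}\sum_{t=1}^T \E[V^\star - V^{\pi_t}] \;=\; \frac{\reg_T}{T}.
\]

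Next I would substitute the bound of \Cref{thm:exact_oracle_bound} to obtain
\[
\E[V^\star - V^{\^\pi}] \;=\; \frac{\reg_T}{T} \;=\; \~O\!\left( \frac{d^{5/2} H^{5/2}}{T} + \frac{d^2 H^{3/2}}{\sqrt{T}} \right),
\]
and verify that the choice $T \geq \Omega(d^4 H^3/\eps^2)$ drives the right-hand side below $\eps$. The statistical term is binding: forcing $d^2 H^{3/2}/\sqrt{T} \leq \eps/2$ already requires $T \geq \Omega(d^4 H^3/\eps^2)$, exactly the stated threshold. For the first term I would use $\eps \leq 1$ together with $d,H\geq 1$ to note $d^4 H^3/\eps^2 \geq d^{5/2} H^{5/2}/\eps$, so the same lower bound on $T$ also gives $d^{5/2} H^{5/2}/T \leq \eps/2$; adding the two halves closes the argument.

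There is essentially no hard step here, as all the content resides in \Cref{thm:exact_oracle_bound}. The only point meriting care is the last one: one must invoke $\eps\leq 1$ (and $d,H\geq 1$) to confirm that the $\sqrt{T}$-term, not the $1/T$-term, dictates the sample complexity, so that the single condition $T \geq \Omega(d^4 H^3/\eps^2)$ controls both contributions simultaneously.
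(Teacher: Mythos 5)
Your proposal is correct and matches the paper's (implicit) argument: the paper gives no explicit proof, describing the corollary as a standard conversion of \Cref{thm:exact_oracle_bound}, and your online-to-batch reduction with the check that the $\sqrt{T}$-term dominates under $\eps \leq 1$ is exactly that standard argument.
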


\subsection{Learning with Approximate Square Loss Minimization Oracle}  \label{sec:approx-oracle}

\begin{assumption}[Approximate Squared Loss Minimization Oracle]
    \label{ass:oracle_approximate}  
We assume access to an approximate squared loss minimization oracle $\apxsqO$ that takes as input a problem of the form:
$
\argmin_{\theta \in \cset(W)} g(\theta) \coloneqq \sum_{(\phi(s, a), u) \in \+D} \prn{\tri{\theta, \phi(s, a)} - u}^2
$
where \(\cset(W) = \crl{\theta \in \bbR^d \mid{} \abs{\tri{\theta, \phi(s, a)}} \leq W}\) for some \(W \in \bbR\) is a convex set, and \(\+D\) is a dataset of tuples $\crl{(\phi(s, a), u)}$. The oracle returns a point \(\wh \theta\) that satisfies 
$
    g(\^\theta ) - \min_{\theta \in \cset(W)} g(\theta) \leq \eps^2_1 
$ 
and 
$ \^\theta \in \cset(W + \eps_2)
$ 
where  \(\eps_1, \eps_2 \leq 1\) are  precision parameters of the oracle. 

\end{assumption}

With an approximate oracle, the regret bound depends on an additional quantity defined below.
\begin{assumption}\label{asm:lower-bound}
    There exists a constant $\gamma > 1$ such that, for any $r\leq d$, and for any $\phi_1,\phi_2,\dots,\phi_r\in \Phi$, the eigenvalues of the matrix
    $    
        \Sigma \coloneqq \sum_{i=1}^r \phi_i \phi_i^\top
    $ 
    are either zero or at least $1/\gamma^2$.
\end{assumption}
As a concrete example, it holds with $\gamma = 1$ when the MDP is tabular. This assumption implies that the eigenvalues of $\Sigma^\dag$ are at most $\gamma^2$. Consequently, for any vector $\phi\in\Phi$, we have $\|\phi\|_{\Sigma^\dagger} \leq \|\phi\|_2 \gamma \leq \gamma$---this lower bound on the norm of any vector is exactly what we need for the analysis of an approximate oracle, while \Cref{asm:lower-bound} simply serves as a sufficient condition for it.
The following theorem provides the regret bound with the approximate oracle in terms of parameters \(\eps_1, \eps_2\)  and \(\gamma\). 

\begin{theorem}[Regret Bound with Approximate Oracle] \label{thm:approximate_oracle_bound} 
    Under \Cref{ass:oracle_approximate,asm:determin,asm:lower-bound}, executing \Cref{alg:main} with $\sigmar = \~\Theta(\sqrt{d H })$ and $\sigma_{h} = \~\Theta( (d\sqrt{mH})^{H-h+1}(\eps_1 \gamma \sqrt{H} + \sqrt{d} + \sqrt{mH})$, we have
    \begin{align*}
        \reg_T =  \~O \big( d^{5/2} H^{5/2} + d^2   H^{3/2} \sqrt{T}+  \eps_1\gamma \big(d H^2 +  d^{3/2}  H \sqrt{T} \big)  \big).
    \end{align*}
\end{theorem}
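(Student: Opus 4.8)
The plan is to run the entire argument for the exact-oracle case (\Cref{thm:exact_oracle_bound}) and track how the two approximation parameters $\eps_1,\eps_2$ propagate. The first step is to translate the oracle's guarantees into a per-feature prediction-error bound. Since each regression objective $g(\theta)=\sum_i(\tri{\theta,\phi_i}-u_i)^2$ is quadratic with Hessian $2\Sigma$ for $\Sigma=\sum_i\phi_i\phi_i^\top$, the suboptimality guarantee $g(\^\theta)-g(\theta^\star)\le\eps_1^2$ implies $\|\^\theta-\theta^\star\|_{\Sigma}\le\eps_1$ by strong convexity and the first-order optimality of the exact constrained minimizer $\theta^\star=\argmin_{\theta\in\cset(W)}g(\theta)$. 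Composing with \Cref{asm:lower-bound}, which gives $\|\phi\|_{\Sigma^\dagger}\le\gamma$ for every $\phi\in\Phi$, yields the key estimate $|\tri{\^\theta-\theta^\star,\phi}|\le\eps_1\gamma$: on the span of the data the approximate oracle predicts every feature up to additive error $\eps_1\gamma$. The relaxed constraint $\^\theta\in\cset(W+\eps_2)$ only inflates the per-layer norm bound $W_h$ by $\eps_2\le 1$, which is negligible against the already-exponential $W_h$ and is absorbed into the $\~\Theta$ notation.

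The second step is to re-establish optimism. In the exact case the null-space Gaussian perturbation is calibrated so that, on each layer, $\overline V_{t,h}$ stochastically dominates the true backed-up value with constant probability; the approximate oracle now perturbs the regression target by at most $\eps_1\gamma$ in every relevant direction, so we inflate the per-layer noise scale $\sigma_h$ to dominate this extra slack. This is precisely the origin of the new additive term inside $\sigma_h=\~\Theta((d\sqrt{mH})^{H-h+1}(\eps_1\gamma\sqrt H+\sqrt d+\sqrt{mH}))$; with this choice, the optimism lemma from the exact-oracle proof goes through after replacing its deterministic ``zero error on span'' input by an ``$\eps_1\gamma$ error on span'' input.

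The third step is the regret decomposition, which I would split exactly as in the reward-free-plus-reward template underlying \Cref{thm:exact_oracle_bound}. On the in-span steps, deterministic dynamics let the value function be predicted correctly up to the accumulated oracle error; summing the fresh $\eps_1\gamma$ per-layer error over the horizon contributes $O(\eps_1\gamma H)$ per round to the value at $s_{t,1}$, which, aggregated via the span argument, produces the $\eps_1\gamma\,dH^2$ term. The out-of-span steps are counted by the same span argument as in the exact case and contribute the unchanged $d^{5/2}H^{5/2}+d^2H^{3/2}\sqrt T$. Finally, the reward regression carries its own $\eps_1$ suboptimality; bounding $\sum_{t,h}|\tri{\^\omega_{t,h}-\omega^\star_h,\phi(s_{t,h},a_{t,h})}|\le\eps_1\sum_{t,h}\|\phi(s_{t,h},a_{t,h})\|_{\Sigma_{t,h}^\dagger}$ and applying the elliptical potential lemma as in the exact proof yields the $\eps_1\gamma\,d^{3/2}H\sqrt T$ term alongside the unchanged reward contribution.

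The main obstacle is ensuring that the $\eps_1\gamma$ value-estimation error does \emph{not} compound exponentially across the $H$ layers. Because the norm bounds $W_h$ and the noise scales $\sigma_h$ both grow geometrically like $(d\sqrt{mH})^{H-h}$, a naive layer-by-layer propagation would multiply the oracle error by this factor at each backup, giving a bound exponential in $H$. The crux is to show instead that, restricted to the span of the historical data where deterministic dynamics permit exact prediction, the errors telescope additively: the Bellman backup of the (inexact) next-layer estimate is itself linear under \Cref{def:lbc}, so the only genuinely new error injected at layer $h$ is the fresh $\eps_1\gamma$ regression error, and the exponential factors cancel against the exact in-span recovery rather than multiplying the error. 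Making this cancellation precise---tracking the interaction between the large null-space noise and the small in-span error---is where the bulk of the work lies, and is the reason the final bound is linear rather than exponential in $H$.
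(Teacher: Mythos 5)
Your proposal follows essentially the same route as the paper: compare $\^\theta_{t,h}$ to the zero-loss Bayes-optimal predictor to get an in-span prediction error of $O(\eps_1\gamma)$ per layer via \Cref{asm:lower-bound}, inflate $\sigma_h$ by $\eps_1\gamma\sqrt{H}$ so the (now only approximate) optimism lemma survives with an additive $O(\eps_1\gamma H)$ slack, absorb $\eps_2$ into $W_h$, and rely on the telescoping value decomposition on the span to keep the oracle error additive rather than multiplicative across layers—exactly the mechanism in the paper's Lemmas on value decomposition and the final bound. The only slip is attribution: the $\eps_1\gamma\, d^{3/2}H\sqrt{T}$ term arises from the \emph{value-function} regression error summed via the elliptical potential over $\|\phi\|_{\^\Sigma_{t,h}^{\dagger}}$ (which is where $\gamma$ enters), not from the reward regression, whose $\eps_1$ contribution is absorbed into the dominant $d^2H^{3/2}\sqrt{T}$ term.
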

Compared to \Cref{thm:exact_oracle_bound}, the regret bound has an additional term that depends on the approximation error \(\eps_1 \gamma\). Typically, $\eps_1$ is from optimization and thus can be exponentially small with respect to the relevant parameters, as we later discuss in \Cref{sec:oracle_implementation}. Hence, we allow $\gamma$ to be exponentially large. Moreover, we note that $\eps_2$ does not appear in the regret bound since it only affects the constraint violation of the regression, whose effect to the statistical guarantees is of lower order and thus ignored. In addition, we note that the regret bound does not depend on the number of actions, and the dependence on $T$ remains optimal, similar to the previous theorem.

\subsection{Learning with Low Inherent Linear Bellman Error} 
Now we consider the setting where the MDP has low inherent linear Bellman error.
\begin{definition}[Inherent Linear Bellman Error]\label{def:B-error}
    Given $\epsB \leq 1$, an MDP $\+M$ is said to have $\epsB$-inherent linear Bellman error with respect to a feature mapping $\phi$ if there exists a mapping $\+T:\=R^d\rightarrow\=R^d$ so that, for all $\theta\in\=R^d$ and all $(s,a)\in\+S\times\+A$, it holds that
    $
        |\langle\+T\theta, \phi(s,a)\rangle - \E_{s'\sim \mathsf{T}(s,a)} \max_{a'} \tri*{\theta,\phi(s',a')}| \leq \epsB
    $.
    Moreover, we require that, for all $h\in[H]$ and $(s,a)\in\+S\times\+A$, the random reward is bounded in $[0, 1]$ with $| r_h(s,a) - \tri*{\omega_h^\star,  \phi(s,a)}| \leq \epsB$ for some unknown $\omega_h^\star\in\=R^d$. 
\end{definition}

With low inherent Bellman error, \Cref{asm:lower-bound} is still necessary. The following theorem provides the regret bound in this case. We assume the exact oracle for simplicity.

\begin{theorem}[Regret Bound with Low Inherent Bellman Error] 
    \label{thm:low_B_error_bound} 
    Assume the MDP has $\epsB$-inherent Bellman error.
    Under \Cref{asm:exact-oracle,asm:determin,asm:lower-bound}, 
    when executing \Cref{alg:main} with parameters $\sigmar = \~\Theta(\sqrt{d H + \epsB H T})$ and $\sigma_{h} = \~\Theta( (d\sqrt{mH})^{H-h+1}(\epsB \gamma \sqrt{HT} + \sqrt{\epsB T} + \sqrt{d} + \sqrt{mH}) )$, we have
    \begin{align*}
        \reg_T = \~O &\Big( d^{5/2} H^{5/2} + d^2   H^{3/2} \sqrt{T}  + \sqrt{\epsB} \big( d^2 H^{5/2} \sqrt{T} +  d^{3/2}  H^{3/2}T  \big) + \epsB \gamma \big(d H^2  \sqrt{T} +    d^{3/2} H T \big)  \Big).
    \end{align*}
\end{theorem}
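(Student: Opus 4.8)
The plan is to retrace the proof of \Cref{thm:exact_oracle_bound}, treating the $\epsB$-inherent Bellman error of \Cref{def:B-error} as an additional layer of ``misspecification noise'' that the analysis must absorb. The three-part architecture of the exact-oracle argument survives intact: (i) establish per-round optimism, $\overline V_{t,1}(s_{t,1}) \ge V^\star(s_{t,1})$, so that $\reg_T \le \E[\sum_t (\overline V_{t,1}(s_{t,1}) - V^{\pi_t}(s_{t,1}))]$; (ii) telescope this gap along the executed trajectory into a sum of per-layer Bellman residuals of the learned value functions; and (iii) bound those residuals using the span argument together with the elliptical potential lemma for the reward term. My first task would be to isolate, within each of these three steps, the new contribution of $\epsB$, and to show that it accounts for precisely the two families of extra terms in the stated bound: the $\sqrt{\epsB}$ terms from value-function misspecification and the $\epsB\gamma$ terms from reward misspecification.

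Re-establishing optimism is where the enlarged noise scales enter. In the exact case the null-space noise needs only to dominate the statistical fluctuation of the regression in \Cref{line:regression}; here the regression target $\overline V_{t,h+1}(s_{i,h+1})$ is no longer an exactly linear function of $\phi(s_{i,h},a_{i,h})$ but only linear up to an additive $\epsB$ per data point, since the Bellman backup of the running parameter is $\epsB$-close to a linear function by \Cref{def:B-error}. I would carry this per-point bias through the least-squares solution---bounding its in-span prediction error by an $O(\epsB\sqrt d)$-type quantity via Cauchy--Schwarz against $\Sigma^\dagger$ and the trace identity $\operatorname{tr}(\Sigma^\dagger\Sigma)\le d$---and then through the backward recursion, whose per-layer amplification is the factor $d\sqrt{mH}$ already reflected in $W_h$. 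Requiring the null-space noise to dominate this propagated bias, together with the reward bias, is exactly what forces the enlarged choices $\sigmar = \~\Theta(\sqrt{dH + \epsB HT})$ and $\sigma_h = \~\Theta((d\sqrt{mH})^{H-h+1}(\epsB\gamma\sqrt{HT} + \sqrt{\epsB T} + \sqrt d + \sqrt{mH}))$; I would then re-run the Gaussian anti-concentration step to confirm optimism holds with high probability under these scales.

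With optimism in hand, I would decompose each per-layer Bellman residual into (a) an in-span estimation error, (b) a reward-estimation error, and (c) an irreducible $O(\epsB)$ misspecification term. Piece (a) is handled exactly as in \Cref{thm:exact_oracle_bound} and reproduces the base regret $d^{5/2}H^{5/2} + d^2 H^{3/2}\sqrt T$; the span argument again certifies that all but $\~O(dH)$ trajectory steps lie in the span of the history, so the in-span bound governs the bulk of the rounds. Piece (b), now carrying the reward bias $\epsB$, is controlled using the norm lower bound $\|\phi\|_{\Sigma^\dagger}\le\gamma$ from \Cref{asm:lower-bound} and the elliptical potential lemma, yielding the $\epsB\gamma(dH^2\sqrt T + d^{3/2}HT)$ family. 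Piece (c), after propagation and after balancing the enlarged null-space noise variance $\sqrt{\epsB T}$ against the null-space potential summed over rounds, produces the $\sqrt{\epsB}(d^2 H^{5/2}\sqrt T + d^{3/2}H^{3/2}T)$ family. Summing (a)--(c) over $h$ and $t$ gives the claimed bound.

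I expect the main obstacle to be controlling the propagation of the per-layer misspecification through the backward recursion without triggering the exponential amplification $(d\sqrt{mH})^{H-h}$. In the exact case the span argument crucially used that the value function is learned \emph{perfectly} on the span, so no error accumulated across layers; with $\epsB>0$ this exactness is gone. The resolution I would pursue is to keep the two roles of the covariance strictly separated: the exponential factor multiplies only the null-space noise, which is never evaluated on in-span directions and is therefore paid for only through the $\~O(d)$ span events per horizon, whereas the in-span misspecification enters only additively---roughly $\epsB$ per layer amplified by at most $\poly(d,H)$---so that it sums to a $\poly(d,H)$ (not exponential) multiple of $\epsB$ or $\sqrt{\epsB}$. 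Making this separation airtight, and in particular pinning down why the value-side contribution scales as $\sqrt{\epsB}$ while the reward-side scales as $\epsB\gamma$, is the delicate step; everything else reduces to bounds already established for \Cref{thm:exact_oracle_bound,thm:approximate_oracle_bound}.
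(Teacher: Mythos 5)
Your overall architecture (rerun the exact-oracle analysis while tracking the misspecification) matches the paper, which proves this theorem as the $\eps_1=\eps_2=0$ specialization of a unified \Cref{thm:main}. But there are two concrete problems. First, your attribution of the two new error families is exactly reversed, and you yourself flag this as the unresolved "delicate step." In the paper, the $\sqrt{\epsB}$ terms come from the \emph{reward} side: the Freedman-based analysis of the reward regression (\Cref{lem:reg-reward}) yields a squared error bound with an additive $\epsB T$ \emph{inside} the square root, so $\urerr \approx \sqrt{d + \epsB T}$, which inflates $\sigmar$ and $\urnoise$ and, multiplied against the elliptical potential $\uellir \approx d\sqrt{T}$, produces $\sqrt{\epsB}\,d^{3/2}H^{3/2}T$. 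The $\epsB\gamma$ terms come from the \emph{value} side: under deterministic dynamics the least-squares residual satisfies $\|\^\theta_{t,h}-\+T(\cdot)\|_{\^\Sigma_{t,h}} \le \uperr = O(\epsB\sqrt{T})$ (\Cref{lem:reg}), and converting this empirical-covariance-norm bound to the executed trajectory via \Cref{asm:lower-bound} costs the factor $\gamma$, giving $H\uperr\uellip \approx \epsB\gamma d^{3/2}HT$. Your plan assigns $\epsB\gamma$ to the reward bias and $\sqrt{\epsB}$ to the value-function misspecification, which cannot be made to work: the reward error is stochastic and enters only through a variance-type ($\epsB T$ under a square root) term, while the value regression under deterministic transitions has no stochastic component and its residual is genuinely $\Theta(\epsB\sqrt{T})$, requiring $\gamma$ to localize.

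Second, you claim to "establish per-round optimism ... with high probability" and then use $\reg_T \le \E[\sum_t(\overline V_{t,1}-V^{\pi_t})]$. The algorithm's null-space Gaussian perturbation only yields \emph{approximate} optimism ($V^\star(s_{t,1}) \le \-V_t(s_{t,1}) + \uperr\gamma H$) with \emph{constant} probability $\Gamma^2(-1)$ per round (\Cref{lem:optimism}); it cannot be boosted to high probability without changing the algorithm. The paper's regret decomposition therefore requires the i.i.d.-copy machinery (introducing $\~V_t$, conditioning on $\tildeEoptm{t}$, and peeling off the conditioning only after arranging the integrand to be non-negative, as in \Cref{lem:bound-T1}), which your proposal omits entirely. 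Without it, step (i) of your plan does not go through. Your resolution of the exponential-amplification worry (the $(d\sqrt{mH})^{H-h}$ factor multiplies only null-space noise, which is paid for only on the $O(dH)$ out-of-span rounds) is correct and matches the paper.
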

Compared to \Cref{thm:exact_oracle_bound}, the regret bound includes two additional terms that depend on the inherent linear Bellman error \(\epsB\). For both terms, the dependence on $T$ is linear. We believe the $T$-dependence is unavoidable, as it also appears in similar settings~\citep{zanette2020learning}. In addition, it is worth noting that the regret bound does not depend on the number of actions, and the other dependence on $T$ remains optimal, similar to previous theorems.

\section{Opening the Black-Box: Implementing Squared Loss Minimization Oracles in \pref{alg:main}}  \label{sec:oracle_implementation} 
In this section, we detail a practical implementation of the desired squared loss oracle need by our algorithm.
The implementation relies on the observation that a square loss minimization objective over a convex domain can be cast as a convex set feasibility problem---given a convex set \(\cK\), return a point \(\wh \theta \in \cK\). Thus, we can use algorithms for convex set feasibility to implement the squared loss minimization oracles. However, even given this observation, our key challenge for an efficient algorithm is that the corresponding convex set could be exponentially large and only be described using exponentially many number of linear constraints. Fortunately, various works in the optimization literature propose computationally efficient procedures to find feasible points within such ill-defined sets, under mild oracle assumptions. 

\subsection{Computationally Efficient  Convex Set Feasibility} 

We first paraphrase the work of \citet{bertsimas2004solving} that provide a computationally efficient procedure for finding feasible points within a convex set by random walks. Notably, the computational complexity of their  algorithm only depends logarithmically on the size of the convex set, and thus their approach is well suited for the corresponding convex feasibility problems that appear in our approach. 
At a high level, they provide an algorithm that takes an input an arbitrary convex set \(\cK \subseteq \bbR^d\), and returns a feasible point \(\wh z \in \cK\). Their algorithm accesses the convex set \(\cK\) via  a separation oracle defined as follows.
\begin{definition}[Separation oracle] A separation oracle for a convex set \(\cK\), denoted by \(\sepO_{\cK}\), is defined such that on any input \(z \in \bbR^d\), the oracle either confirms that \(z \in \cK\) or returns  a hyperplane \(\tri*{a, z} \leq b\) that separates the point \(z\) from the set \(\cK\). 
\end{definition}

In order to ensure finite time convergence for their procedure, they assume that the convex set \(\cK\) is not degenerate and is bounded in any direction. This is formalized by the following assumption.
\begin{assumption}\label{ass:non_degenerate}
The convex set \(\cK\) is \((r, R)-\)Bounded, i.e. there exist parameters \(0 < r \leq R\) such that  (a) \(\cK \subseteq \bbR_\infty(R)\), and (b) there exists a vector \(z \in \bbR^d\) such that the shifted cube \((z + \bbR_\infty(r)) \subseteq \cK\). 
\end{assumption} 

The computational efficiency and the convergence guarantee of their algorithm are below.
\begin{theorem}[{\citet{bertsimas2004solving}}] 
\label{thm:efficient_optimization} 
Let \(\delta \in (0, 1)\) and  \(\cK \subset \bbR^d\) be an arbitrary convex set that satisfies \Cref{ass:non_degenerate} for some \(0 \leq r \leq R\).  
Then, \Cref{alg:optimization} (given in the appendix), when invoked with the separation oracle \(\sepO_{\cK}\) w.r.t.~\(\cK\), returns a feasible point \(\wh z  \in \cK\) with probability at least \(1 - \delta\). Moreover,  \Cref{alg:optimization} makes 
$O\prn*{d \log \prn*{\nicefrac{R}{\delta r}}}$  calls to the oracle \(\sepO_{\cK}\) and runs in time $O\prn*{d^7 \log \prn*{\nicefrac{R}{\delta r}}}$.  %
\end{theorem}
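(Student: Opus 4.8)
Since \Cref{thm:efficient_optimization} restates a result of \citet{bertsimas2004solving}, I would prove it by reconstructing their random-walk-based feasibility procedure and its analysis. The plan is to maintain a working convex body $P$ that always contains $\cK$, initialized to the bounding cube $P_0 = \bbR_\infty(R)$ furnished by \Cref{ass:non_degenerate}(a). Each iteration proceeds in three steps: draw a batch of approximately uniform samples from the current body $P$, form their empirical mean $\bar z$ as a proxy for the centroid of $P$, and query the separation oracle $\sepO_\cK$ at $\bar z$. If the oracle certifies $\bar z \in \cK$, I return it. Otherwise the oracle returns a hyperplane $\langle a, z\rangle \le b$ with $\cK \subseteq \{z : \langle a, z\rangle \le b\}$ and $\langle a, \bar z\rangle > b$, and I replace $P$ by $P \cap \{z : \langle a, z\rangle \le \langle a, \bar z\rangle\}$. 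Because $b < \langle a, \bar z\rangle$, this cut discards none of $\cK$, so the invariant $\cK \subseteq P$ is preserved throughout.

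The iteration count follows from a volume argument. By \Cref{ass:non_degenerate}(b), $\cK$ contains a shifted cube of side $2r$, so $\mathrm{vol}(\cK) \ge (2r)^d$, while $\mathrm{vol}(P_0) = (2R)^d$. Since each cut passes (approximately) through the centroid of $P$, Gr\"unbaum's inequality guarantees that it retains at most a constant fraction $\kappa < 1$ of $\mathrm{vol}(P)$. Hence after $k$ cuts $\mathrm{vol}(P) \le (2R)^d \kappa^{k}$, and once this drops below $(2r)^d$ we would contradict $\cK \subseteq P$; thus a feasible point must have been returned by then, giving $k = O(d \log(R/r))$ iterations, with an extra $\log(1/\delta)$ absorbing the sampling failure probabilities via a union bound. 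The crucial point for the oracle count is that membership in $P$ is testable directly from the stored halfspaces (the initial cube together with the accumulated cuts) \emph{without} any call to $\sepO_\cK$; the oracle is invoked only once per iteration, to test $\bar z$. This yields the stated $O\prn*{d \log(\nicefrac{R}{\delta r})}$ oracle calls.

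For the running time, the per-iteration cost is dominated by sampling near-uniformly from $P$, which is done with a geometric random walk (e.g.\ the ball walk or hit-and-run) that mixes in $\mathrm{poly}(d)$ steps from a warm start in near-isotropic position, each step costing $\mathrm{poly}(d)$ to test membership in $P$. Reusing the samples from the previous, slightly larger body as the warm start for the current one keeps mixing polynomial throughout. Multiplying the per-iteration sampling cost by the $O(d\log(\nicefrac{R}{\delta r}))$ iterations produces the claimed $O\prn*{d^{7} \log(\nicefrac{R}{\delta r})}$ total running time.

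The main obstacle is controlling the gap between the idealized algorithm, which cuts through the true centroid of the exact uniform distribution on $P$, and what is actually computable, namely a cut through the empirical mean of approximately uniform random-walk samples. Closing this gap requires three ingredients: a robust version of Gr\"unbaum's inequality ensuring a constant-factor volume decrease even when the cut passes only near the centroid; mixing-time and concentration bounds guaranteeing that $\bar z$ lands close to the true centroid with high probability; and a mechanism for preserving near-isotropy and a warm start as $P$ shrinks across iterations, with the per-iteration failure probabilities union-bounded below $\delta$. These geometric and probabilistic estimates are the technical heart of the argument and are exactly what \citet{bertsimas2004solving} establish; in the present paper they may simply be invoked.
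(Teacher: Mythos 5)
Your outline accurately reconstructs the Bertsimas--Vempala random-walk cutting-plane argument, and it matches the paper exactly: the paper does not prove \Cref{thm:efficient_optimization} itself but imports it from \citet{bertsimas2004solving}, giving only the informal algorithm description (\Cref{alg:optimization,alg:sampler}) whose structure --- approximate centroid from near-uniform samples, one oracle call per iteration, centroid cuts with a Gr\"unbaum-type volume decrease, and warm-started ball-walk sampling --- is precisely what you describe. Nothing further is needed beyond citing the original source for the technical estimates, as you correctly note.
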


Notice that both the number of oracle calls and the running time only depend logarithmically on \(R\) and \(r\), and thus their procedure can be efficiently implemented for our applications where \(\nicefrac{R}{r}\) may be exponentially large in the corresponding problem parameters.

\subsection{Computationally Efficient Estimation of Value Function (Eqn \pref{eq:theta_optimization})} \label{sec:parameter_efficient} 

We now described how to leverage the method by \citet{bertsimas2004solving} to estimate the parameters for the value functions in \pref{eq:theta_optimization} in \pref{alg:main}. Note that for any time \(t\) and horizon \(h \in [H]\), the objective in \pref{eq:theta_optimization} is the optimization problem 
\begin{align*}
\^\theta_{t,h} & \leftarrow \argmin_{\theta\in\cset(W_h)} \sum_{i=1}^{t-1} \Big( \tri*{\theta, \phi(s_{i,h},a_{i,h})} - \overline V_{t,h+1}(s_{i,h+1}) \Big)^2,  \numberthis \label{eq:oracle1}
\end{align*}
where \(W_h = \~\Theta( (d\sqrt{mH})^{H-h}(\eps_1 d\gamma \sqrt{H} + d^{3/2} + d\sqrt{mH}))\). We provide a computationally efficient procedure to approximately solve the above given a linear optimization oracle over the feature space. 

\begin{assumption}[Linear optimization oracle over the feature space] 
\label{ass:linear_oracle} 
Learner has access to a linear optimization oracle \(\linO\) that on taking input \(\theta \in \bbR^d\), returns a feature \(\phi(s', a') \in \argmax_{s, a} \tri{\theta, \phi(s, a)}\). 
\end{assumption}

The key observation we use is that under linear Bellman completeness (\Cref{def:lbc}) and deterministic dynamics (\Cref{asm:determin}), any solution $\theta$ for \pref{eq:oracle1} must satisfy $\sum_{i=1}^{t-1} \prn{ \tri*{\theta, \phi(s_{i,h},a_{i,h})} - \overline V_{t,h+1}(s_{i,h+1})}^2 = 0$. On the other hand, the converse also holds that any point \(\theta \in \cset(W_h)\) for which the objective value is \(0\) must be a solution to \pref{eq:oracle1}. Thus, the minimization problem in \pref{eq:oracle1} is equivalent to finding a feasible point within the convex set 
\begin{align*} 
\cK \ldef{} \crl*{ \theta \in \bbR^d ~ \Bigg \mid{} \begin{matrix} 
& \prn*{ \tri*{\theta, \phi(s_{i,h},a_{i,h})} - \overline V_{t,h+1}(s_{i,h+1}) }^2 = 0 ~\text{for all}~ i \leq t \\
& \abs{\tri*{\theta, \phi(s, a)}} \leq W_h~ \text{for all}~s, a 
\end{matrix}}. \numberthis \label{eq:oracle2} 
\end{align*}
Given the above reformulation of the optimization objective \pref{eq:oracle1} as a feasibility problem, we can now use the procedure of \citet{bertsimas2004solving} for finding  \(\theta_{t, h} \in \cK\). However, we first need to define a separation oracle for the set \(\cK\) and verify \Cref{ass:non_degenerate}. 
Unfortunately, there may not exist any \(r > 0\) for which \((z + \bbR_\infty(r)) \subseteq \cK\) for some \(z \in \bbR^d\) in our case and thus the above \(\cK\) may not satisfy \Cref{ass:non_degenerate}. This can, however, be easily fixed by artificially increasing the set \(\cK\) to allow for some approximation errors. In particular, let \(\epsilon > 0\) and define the convex set
\begin{align*} 
\cK_{\textsc{apx}} \ldef{} \crl*{ \theta \in \bbR^d  ~ \Bigg \mid{} \begin{matrix} 
& \tri*{ \theta, \phi(s_{i,h},a_{i,h})} - \overline V_{t,h+1}(s_{i,h+1})  \leq \epsilon  ~\text{for all}~ i \leq t ~~ \\
& \tri*{ \theta, \phi(s_{i,h},a_{i,h})} - \overline V_{t,h+1}(s_{i,h+1})  \geq - \epsilon  ~\text{for all}~ i \leq t \\
& \abs{\tri*{\theta, \phi(s, a)}} \leq W_h + \epsilon ~ \text{for all}~s, a 
\end{matrix}}. \numberthis \label{eq:oracle3} 
\end{align*}
Clearly, since there exists at least one point \(\theta_{t, h} \in \cK\), we must have that \((\theta_{t, h} + \bbR_\infty(\epsilon)) \subseteq \cK_{\textsc{apx}} \). To ensure an outer bounding box for the set $\cK_{\textsc{apx}}$, we need to make an additional assumption.
\begin{assumption} 
\label{ass:ball_out} 
Let \(\Phi = \crl{\phi(s, a) \mid s, a \in \cS \times \cA}\). There exist some \(R \geq 0\) such that  \(\frac{1}{R} e_i \in \Phi\), where \(e_i\) denotes the unit basis vector along the \(i\)-th direction in \(\bbR^d\). 
\end{assumption}

The above assumption ensures that \(\cK \subseteq \bbB_\infty(W_h R)\). Recall that we can tolerate the parameter \(R\) to be  exponential in the dimension \(d\) or the horizon \(H\). Finally, a separation oracle can be implemented using \(\linO\) (see \Cref{alg:value_estimation}  for details). Thus, one can use \Cref{alg:optimization} (given in appendix), due to  \citet{bertsimas2004solving}, and the guarantee in \Cref{thm:efficient_optimization} to find a feasible point in $\cK_{\textsc{apx}}$, which corresponds to an approximate solution to \pref{eq:oracle1}.
\begin{theorem} Let \(\epsilon > 0\), \(\delta \in (0, 1)\), and suppose  \Cref{ass:ball_out} holds with some parameter \(R > 0\). Additionally, suppose \Cref{ass:linear_oracle} holds with the linear optimization oracle  denoted by \(\linO\). Then, there exists a computationally efficient procedure (given in \Cref{alg:value_estimation}  in the appendix), that for any \(t \in [T]\) and  \(h \in [H]\), returns a point \(\wh \theta_{t, h}\) that,  with probability at least \(1 - \delta\),  satisfies 
\begin{align*}
\sum_{i=1}^{t-1} \Big( \tri*{\wh \theta_{t, h}, \phi(s_{i,h},a_{i,h})} - \overline V_{t,h+1}(s_{i,h+1}) \Big)^2 \leq T \epsilon \qquad \text{and} \qquad \wh \theta_{t, h} \in \cset(W_{h} + \epsilon). 
\end{align*} 
Furthermore, \Cref{alg:value_estimation}  takes \(O(d^7 \log\prn{\frac{R}{\delta \epsilon}})\) time in addition to  \(O(d \log\prn{\frac{TH R}{\delta \epsilon}})\) calls to \(\linO\). 
\end{theorem}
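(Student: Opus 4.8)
The plan is to recast the minimization \pref{eq:oracle1} as the problem of finding any feasible point in the relaxed convex set $\cK_{\textsc{apx}}$ of \pref{eq:oracle3}, and then invoke the random-walk feasibility procedure of \citet{bertsimas2004solving} (\Cref{thm:efficient_optimization}). As already argued in the text, linear Bellman completeness together with \Cref{asm:determin} guarantees that the exact set $\cK$ in \pref{eq:oracle2} is nonempty --- there is a $\theta_{t,h}\in\cset(W_h)$ driving every residual to exactly zero --- so $\cK_{\textsc{apx}}\supseteq\cK$ is nonempty and any point it contains is an approximate minimizer of \pref{eq:oracle1}. To apply \Cref{thm:efficient_optimization} it then remains to (i) verify that $\cK_{\textsc{apx}}$ is $(r,R')$-bounded in the sense of \Cref{ass:non_degenerate}, and (ii) implement a separation oracle $\sepO_{\cK_{\textsc{apx}}}$ using only the linear optimization oracle $\linO$ of \Cref{ass:linear_oracle} together with the finitely many explicit data constraints.

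For (i), the inner cube is immediate from the relaxation: since the exact feasible point $\theta_{t,h}$ satisfies every residual and every box constraint of $\cK$ with zero slack, enlarging each constraint by $\epsilon$ shows $\theta_{t,h}+\bbR_\infty(\epsilon)\subseteq\cK_{\textsc{apx}}$, so we may take $r=\epsilon$. For the outer box I would use \Cref{ass:ball_out}: because $\tfrac1R e_i\in\Phi$, the inequality $\abs{\tri*{\theta,\tfrac1R e_i}}\le W_h+\epsilon$ is among the constraints of $\cK_{\textsc{apx}}$, so every $\theta\in\cK_{\textsc{apx}}$ obeys $\abs{\theta_i}\le (W_h+\epsilon)R$ coordinatewise; hence $\cK_{\textsc{apx}}\subseteq\bbB_\infty\big((W_h+\epsilon)R\big)$ and we may take $R'=(W_h+\epsilon)R$. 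This yields aspect ratio $R'/r=(W_h+\epsilon)R/\epsilon$, which enters \Cref{thm:efficient_optimization} only logarithmically.

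For (ii) --- which I expect to be the main obstacle --- I would split the constraints of $\cK_{\textsc{apx}}$ into two groups. The $t-1$ residual constraints are explicit and are checked directly in $O(Td)$ arithmetic; if one is violated, its defining vector $\pm\phi(s_{i,h},a_{i,h})$ gives the separating hyperplane. The difficulty is the box constraints $\abs{\tri*{\theta,\phi(s,a)}}\le W_h+\epsilon$, of which there are infinitely (or exponentially) many, so the set cannot be enumerated. The key observation is that a separation oracle only needs the single most-violated constraint, and $\linO$ supplies exactly this: on input $\theta$ I call $\linO(\theta)$ and $\linO(-\theta)$ to obtain $\phi^{+}\in\argmax_{s,a}\tri*{\theta,\phi(s,a)}$ and $\phi^{-}\in\argmax_{s,a}\tri*{-\theta,\phi(s,a)}$. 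If $\tri*{\theta,\phi^{+}}>W_h+\epsilon$ or $\tri*{-\theta,\phi^{-}}>W_h+\epsilon$, the corresponding inequality is a valid separating hyperplane; otherwise every box constraint holds because $\linO$ returns the extreme point. Together with the direct residual check, this is a correct separation oracle costing two $\linO$ calls plus $O(Td)$ time per invocation.

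Assembling the pieces, \Cref{thm:efficient_optimization} applied with $r=\epsilon$ and $R'=(W_h+\epsilon)R$ returns, with probability at least $1-\delta$, a point $\wh\theta_{t,h}\in\cK_{\textsc{apx}}$ after $O\!\big(d\log(R'/(\delta\epsilon))\big)$ separation-oracle calls and $O\!\big(d^7\log(R'/(\delta\epsilon))\big)$ additional time; each separation call uses two $\linO$ calls, so the total is $O(d\log(R'/(\delta\epsilon)))$ calls to $\linO$. Since $W_h$ is at most exponential in $H$, $\log W_h$ is polynomial and is absorbed into the stated $\log(THR/(\delta\epsilon))$ factor. Finally, membership $\wh\theta_{t,h}\in\cK_{\textsc{apx}}$ gives $\wh\theta_{t,h}\in\cset(W_h+\epsilon)$ directly, and each residual has magnitude at most $\epsilon$, so
\[
\sum_{i=1}^{t-1}\big(\tri*{\wh\theta_{t,h},\phi(s_{i,h},a_{i,h})}-\overline V_{t,h+1}(s_{i,h+1})\big)^2 \le (t-1)\epsilon^2 \le T\epsilon,
\]
the last step using $\epsilon\le1$, which is exactly the claimed guarantee.
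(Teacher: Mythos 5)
Your proposal is correct and follows essentially the same route as the paper: recasting \pref{eq:oracle1} as feasibility for the relaxed set $\cK_{\textsc{apx}}$, certifying the inner cube of radius $\epsilon$ via the exact zero-residual solution guaranteed by linear Bellman completeness and determinism, obtaining the outer box from \Cref{ass:ball_out}, implementing the separation oracle with two calls to $\linO$ plus direct checks of the residual constraints, and invoking \Cref{thm:efficient_optimization}. Your closing computation $(t-1)\epsilon^2 \le T\epsilon$ (using $\epsilon \le 1$) is in fact slightly more explicit than the paper's own exposition.
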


The above techniques and \Cref{alg:value_estimation} can be similarly extended to get a computationally efficient procedure to estimate the reward parameter in \pref{eq:r_optimization}. The main difference is that the value of the optimization objective in \pref{eq:r_optimization} is not zero  at the minimizer (due to stochasticity). Thus,  we need to construct a set feasibility problem for every desired target value of the objective function within the grid \([0, \epsilon, 2 \epsilon, \dots, 2 -  \epsilon, 2]\) and use a separating hyperplane w.r.t.~ the ellipsoid constraint in \pref{eq:r_optimization} to implement the separating hyperplane for $\cK_{\textsc{apx}}$ (which can be implemented using projections). 

\section{Conclusion} 
In this paper, we develop a computationally efficient RL algorithm under linear Bellman completeness with deterministic dynamics, aiming to bridge the statistical-computational gap in this setting. 
Our algorithm injects random noise into regression estimates only in the null space to ensure optimism and leverages a span argument to bound regret. It handles large action spaces, random initial states, and stochastic rewards. Our key observation is that deterministic dynamics simplifies the learning process by ensuring accurate value estimates within the data span, allowing noise injection to be confined to the null space. Extending our algorithm to stochastic dynamics remains an open challenge.

\section*{Acknowledgments}
We thank Yuda Song, Zeyu Jia, Noah Golowich, and Sasha Rakhlin for useful discussions. AS acknowledges
support from the Simons Foundation and NSF through award DMS-2031883, as well as from the
DOE through award DE-SC0022199. WS acknowledges support from NSF IIS-2154711, NSF CAREER 2339395, and DARPA LANCER: LeArning Network CybERagents.

\bibliography{iclr2025_conference}
\bibliographystyle{iclr2025_conference}

\newpage

\appendix

\clearpage 
\renewcommand{\contentsname}{Contents of Appendix}
\tableofcontents
\addtocontents{toc}{\protect\setcounter{tocdepth}{3}}  

\clearpage 

\section{Table of Notation} 

We list the notation used in this paper in \cref{tab:notation}, for the convenience of reference.

\begin{table}[H]
    \setstretch{1.4}
    \caption{Notation used in the paper.}
    \label{tab:notation}
    \centering
    \begin{tabular}{cl}
        \toprule
        Symbol & Description \\
        \midrule
        $\cset(W)$ & $\{
            \theta\in\mathbb{R}^d: | \tri{\theta, \phi(s,a)} | \leq W \text{ for all } s\in\+S, a\in\+A
        \}$ \\
        $\bbR_\infty(W)$ & $\{ \theta\in\mathbb R^d : \| \theta \|_\infty \leq W  \}$ \\
        $\bbR_2(W)$ & $\{ \theta\in\mathbb R^d : \| \theta \|_2 \leq W \}$ \\
        $\eta_{t,h}$ & $\+T(\overline\omega_{t,h+1} + \overline\theta_{t,h+1}) - \^\theta_{t,h}$ \\
        $\etar_{t,h}$ & $\omega^\star_h  - \^\omega_{t,h}$ \\
        $\xir_t$ & $\overline\omega_{t,h} - \^\omega_{t,h}$ \\
        $\xip_{t,h}$ & $\overline\theta_{t,h} - \^\theta_{t,h}$ \\
        $\Ehigh$ & High probability event, defined in \Cref{def:high-prob} \\
        $\Espan{t}$ & Event that trajectory at round $t$ is within the span of historical data, defined in \eqref{eq:span-event} \\
        $\Eoptm{t}$ & Optimism event at round $t$, defined in \Cref{lem:optimism} \\ 
        $U_{t, h}$ & Value function lower bound, defined in \Cref{sec:value-decomp}\\ 
        $\urerr$ & Upper bound of $\|\^\omega_{t,h} - \omega^\star_h\|_{\Sigma_t} $, defined in \Cref{def:high-prob} \\
        $\uperr$ & Upper bound of $\| \^\theta_{t,h} - \+T(\-\omega_{t,h} + \-\theta_{t,h+1}) \|_{\^\Sigma_{t,h}}$, defined in \Cref{lem:reg} \\
        $\urnoise$ & Upper bound of $\|\xir_{t,h}\|_{\Sigma_{t,h}}$, defined in \Cref{def:high-prob} \\
        $\upnoise{h}$ & Upper bound of $\|\xip_{t,h}\|_{\Lambda_{t,h}}$, defined in \Cref{def:high-prob} \\
        $\uellir$ & Upper bound of $\sum_{t=1}^T \|\phi(s_{t,h},a_{t,h})\|_{\Sigma_{t,h}^{-1}}$ defined in \Cref{lem:ellip-phi} \\
        $\uellip$ & Upper bound of $\sum_{t=1}^T \indic\{\Espan{t}\} \|\phi(s_{t,h},a_{t,h})\|_{\^\Sigma^\dag_{t,h}}$, defined in \Cref{lem:ellip-phi} \\
        $\uv$ & Upper bound of $|\overline V_t|$ conditioning on $\Espan{t}$ and $\Ehigh$, defined in \Cref{lem:bound-on-span} \\ 
        $\Sigma_{t,h}$ & $\sum_{i=1}^m \rho_i \phi_i \phi_i^\top + \sum_{i=1}^{t-1} \phi(s_{i,h},a_{i,h}) \phi^\top(s_{i,h},a_{i,h})$ \\ 
        $\^\Sigma_{t,h}$ & $\sum_{i=1}^{t-1} \phi(s_{i,h},a_{i,h}) \phi^\top(s_{i,h},a_{i,h})$ \\
        $W_h$ & Recursively defined as $W_{h-1} = W_h + 2\eps_2 + \sqrt{2 d} \cdot \upnoise{h} + \sqrt{2d} \cdot \urnoise  + 1$ \\ & with $W_{H+1} = 1$ \\
        \bottomrule
    \end{tabular}
\end{table}

\clearpage

\section{Proof Overview}  \label{sec:proof-overview}

\ascomment{We still need to clean up the proof overview.} \rw{done. I cleaned it}

In this section, we provide a sketch of the proof of \Cref{thm:exact_oracle_bound} (exact oracle and zero inherent linear Bellman error) with the full proofs deferred to \Cref{sec:proof-main-thm}. To better convey the intuition, we now assume that the reward function is known, as reward learning is largely standard. In particular, we temporarily remove the estimation and perturbation of rewards (\Cref{line:regression,line:perturb}) and simply assume $\-\omega_{t,h} = \omega_h^\star$ in \Cref{line:value-def}.

\subsection{Span Argument}

The very first step of our analysis revolve around two complimentary cases -- whether the trajectory at round $t$ is in the span of the historical data or not. Let $\+D_{t,h} \coloneqq \left\{ \phi(s_{i,h}, a_{i,h}) \right\}_{i=1}^{t}$ and define $\Espan{t}$ as the event that the trajectory at round $t$ is in the span of the historical data, i.e.,
\begin{align*}
    \Espan{t} \coloneqq \left\{ \forall h\in[H] 
    \::\:
    \phi(s_{t,h}, a_{t,h}) \in \spanop (\+D_{t-1,h})
    \right\}.
    \numberthis \label{eq:span-event}
\end{align*}

\textit{(1) In-span case.} When the trajectory generated in the round $t$ is completely within the span of historical data, we can assert that the value function estimation is accurate under $\pi_t$. Particularly, by linear Bellman completeness, the Bayes optimal of the regression in \Cref{line:regression} zeros the empirical risk, as formally stated in the following lemma.
\begin{lemma} \label{lem:sketch-regression}

    For any $t\in[T]$, we have
    $
    \sum_{i=1}^{t-1} ( \langle \^\theta_{t,h}, \phi(s_{i,h},a_{i,h}) \rangle - \overline V_{t,h+1}(s_{i,h+1}) )^2 = 0
    $.
\end{lemma}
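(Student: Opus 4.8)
The plan is to exploit the fact that $\wh\theta_{t,h}$ is defined as the minimizer of a \emph{nonnegative} objective over the convex set $\cset(W_h)$. Consequently, to prove the empirical risk is exactly zero it suffices to exhibit a single feasible parameter $\theta^\star\in\cset(W_h)$ whose residuals all vanish; the minimizer $\wh\theta_{t,h}$ can then do no worse, and since the objective is nonnegative it must equal zero. The natural candidate is the exact Bellman backup of the value-function parameters from the next layer.

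Concretely, write $\vartheta \coloneqq \overline\omega_{t,h+1}+\overline\theta_{t,h+1}$, so that $\overline V_{t,h+1}(s)=\max_{a}\tri*{\vartheta,\phi(s,a)}$, and set $\theta^\star \coloneqq \+T\vartheta$ using the Bellman-backup map from \Cref{def:lbc}. By linear Bellman completeness, $\tri*{\theta^\star,\phi(s,a)}=\E_{s'\sim\mathsf{T}(s,a)}\max_{a'}\tri*{\vartheta,\phi(s',a')}$ for every $(s,a)$. Under deterministic transitions (\Cref{asm:determin}) the expectation collapses onto the unique successor, so for each logged tuple $\tri*{\theta^\star,\phi(s_{i,h},a_{i,h})}=\max_{a'}\tri*{\vartheta,\phi(s_{i,h+1},a')}=\overline V_{t,h+1}(s_{i,h+1})$, which is exactly the regression target. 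Hence every squared residual vanishes, and this identity holds for all $i\le t-1$ \emph{regardless} of whether the trajectory lies in the span of the data (the span distinction matters only later, for regret, not for realizability of the fit).

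The remaining --- and main --- obstacle is feasibility, i.e.\ verifying $\theta^\star\in\cset(W_h)$, equivalently $\sup_{s,a}\abs*{\tri*{\theta^\star,\phi(s,a)}}\le W_h$. Since $\tri*{\theta^\star,\phi(s,a)}$ equals $\overline V_{t,h+1}$ evaluated at the successor, it is enough to bound $\sup_{s'}\abs*{\overline V_{t,h+1}(s')}\le \sup_{s',a'}\abs*{\tri*{\vartheta,\phi(s',a')}}$. I would decompose $\vartheta=\overline\omega_{t,h+1}+\wh\theta_{t,h+1}+\xi_{h+1}$, where $\xi_{h+1}$ is the injected null-space Gaussian, and bound the three pieces separately: the (known) reward term contributes at most $1$; the fitted term $\wh\theta_{t,h+1}\in\cset(W_{h+1})$ contributes at most $W_{h+1}$; and the perturbation contributes its functional norm $\sup_{s,a}\abs*{\tri*{\xi_{h+1},\phi(s,a)}}$. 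The last term is the crux: on a high-probability event I would control it by $\sigma_{h+1}$ times $\max_{s,a}\nrm*{(I-P_{t,h+1})\phi(s,a)}_{\Lambda_{t,h+1}^{-1}}$ via Gaussian maximal inequalities together with the Kiefer--Wolfowitz guarantee for the D-optimal design (and \Cref{lem:optimal-design} bounding its $O(d^2)$ support), yielding a bound of the form $\sigma_{h+1}\cdot\mathrm{poly}(d,m,H)$. This produces the recursion $W_h\gtrsim 1+W_{h+1}+\sigma_{h+1}\cdot\mathrm{poly}(d,m,H)$, which is precisely what the geometric choice $W_h=\widetilde\Theta((d\sqrt{mH})^{H-h}(d^{3/2}+d\sqrt{mH}))$ and the matching $\sigma_h$ are engineered to satisfy.

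Finally, with $\theta^\star\in\cset(W_h)$ feasible and achieving zero objective, the constrained minimizer $\wh\theta_{t,h}$ attains objective value zero as well, which is the claim. I expect the interpolation step to be immediate from determinism plus completeness, whereas the feasibility/norm-propagation step is where the real work lies: it is the only place the exponential-in-horizon constants $W_h,\sigma_h$ enter, and one must show they close the recursion on the good event --- this is exactly the subtle error amplification issue that the paper's null-space noise design is meant to tame, since confining $\xi_{h+1}$ to the null space is what keeps its functional-norm contribution at the controlled $\sigma_{h+1}$ scale rather than letting it blow up faster than $W_h$ can absorb.
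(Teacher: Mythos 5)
Your proposal is correct and follows essentially the same route as the paper: exhibit the Bellman backup $\+T(\overline\omega_{t,h+1}+\overline\theta_{t,h+1})$ as a feasible point with zero residuals (using determinism to collapse the expectation onto the logged successor $s_{i,h+1}$), then conclude the constrained minimizer also attains zero; the paper states this as \Cref{lem:reg} with realizability $\+T(\overline\omega_{t,h+1}+\overline\theta_{t,h+1})\in\cset(W_h)$ as a hypothesis and discharges it separately by the inductive norm-propagation in \Cref{lem:norm-bound} (items 2 and 6), which is exactly the recursion on $W_h$ you describe. Your identification of feasibility as the crux, and of the null-space noise plus the D-optimal-design bound $\max_{s,a}\nrm*{\phi(s,a)}_{\Lambda_{t,h}^{-1}}\le\sqrt{2d}$ as the ingredients closing that recursion on the high-probability event, matches the paper's argument.
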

Define $U_{t}(\cdot)$ as a version of $\-V_t(\cdot)$ that minimizes $\-V_t(s_{t,1})$ while satisfying the high probability bound (precise definition provided at the beginning of \Cref{sec:value-decomp}). 
It implies the following.
\begin{lemma}\label{lem:sketch-span-val}
    For any $t\in[T]$, whenever $\Espan{t}$ holds, we have
    $
        \-V_{t}(s_{t,1}) = U_t(s_{t,1}) = V^{\pi_t}(s_{t,1}).
    $
\end{lemma}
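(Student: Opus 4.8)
The plan is to exploit two structural facts: the injected noise lives entirely in the orthogonal complement of the historical data's span, and on that span the regression is exact. Fix a round $t$ on the event $\Espan{t}$. By construction the perturbation $\-\theta_{t,h} - \^\theta_{t,h}$ is Gaussian with covariance $\sigma_h^2 (I - P_{t,h})\Lambda_{t,h}^{-1}(I-P_{t,h})$, whose range lies inside the orthogonal complement of $\spanop(\+D_{t-1,h})$; hence $\-\theta_{t,h} - \^\theta_{t,h} \perp \spanop(\+D_{t-1,h})$, so for every $\phi \in \spanop(\+D_{t-1,h})$ we have $\tri{\-\theta_{t,h}, \phi} = \tri{\^\theta_{t,h}, \phi}$. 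The noise is invisible on the span, which decouples the in-span value from the randomization.

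Next I would upgrade the zero-empirical-risk guarantee of \Cref{lem:sketch-regression} to an exact one-step identity over the entire span. By linear Bellman completeness together with \Cref{asm:determin}, the regression target is itself linear in the feature: setting $\theta^\star_h \coloneqq \+T(\-\omega_{t,h+1} + \-\theta_{t,h+1})$, determinism gives $\-V_{t,h+1}(s_{i,h+1}) = \tri{\theta^\star_h, \phi(s_{i,h}, a_{i,h})}$ for every $i \le t-1$. \Cref{lem:sketch-regression} then forces $\^\theta_{t,h} - \theta^\star_h$ to be orthogonal to every observed feature, hence to all of $\spanop(\+D_{t-1,h})$; combining with the first paragraph gives the exact identity $\tri{\-\theta_{t,h}, \phi} = \tri{\theta^\star_h, \phi}$ for every in-span $\phi$.

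With this identity I would run a backward induction along the realized trajectory. On $\Espan{t}$ each $\phi(s_{t,h}, a_{t,h}) \in \spanop(\+D_{t-1,h})$, and since $\pi_t$ is greedy, $\-V_{t,h}(s_{t,h}) = \tri{\omega^\star_h, \phi(s_{t,h},a_{t,h})} + \tri{\-\theta_{t,h}, \phi(s_{t,h},a_{t,h})}$. The first summand equals the mean reward $r_h(s_{t,h},a_{t,h})$ since $\-\omega_{t,h} = \omega^\star_h$ in the overview's reward-known regime, and the exact identity with the definition of $\theta^\star_h$ and determinism identifies the second summand with $\-V_{t,h+1}(s_{t,h+1})$. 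Thus $\-V_{t,h}(s_{t,h}) = r_h(s_{t,h},a_{t,h}) + \-V_{t,h+1}(s_{t,h+1})$; telescoping from $h=1$ to $H$ with $\-V_{t,H+1} \equiv 0$ yields $\-V_{t,1}(s_{t,1}) = \sum_{h=1}^H r_h(s_{t,h},a_{t,h}) = V^{\pi_t}(s_{t,1})$, the last step because the trajectory under $\pi_t$ is deterministic. The remaining equality $\-V_t(s_{t,1}) = U_t(s_{t,1})$ follows from the same null-space invariance: $U_t$ is built from the identical estimates $\^\theta_{t,h}$ and differs from $\-V_t$ only in the null-space perturbation, which is annihilated along an in-span trajectory, so both agree at $s_{t,1}$.

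The hard part will be the second step—converting the pointwise zero-loss guarantee at the observed data into an exact equality over the whole span. This hinges on $\theta^\star_h = \+T(\-\omega_{t,h+1} + \-\theta_{t,h+1})$ being both the exact realizer of the regression target (where \Cref{asm:determin} and linear Bellman completeness are indispensable) and feasible for the constraint $\cset(W_h)$, so that \Cref{lem:sketch-regression} genuinely certifies that $\^\theta_{t,h}$ matches $\theta^\star_h$ on the span. The exponentially growing choice of $W_h$ is exactly what ensures this feasibility as the backup parameter is propagated down the horizon, and controlling that propagation is the main bookkeeping burden; by contrast, the telescoping and the $U_t$ identity are routine once the one-step identity is in place.
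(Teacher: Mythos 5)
Your argument for $\-V_{t}(s_{t,1}) = V^{\pi_t}(s_{t,1})$ is essentially the paper's: determinism plus linear Bellman completeness makes the regression target exactly realizable by $\theta^\star_h = \+T(\-\omega_{t,h+1}+\-\theta_{t,h+1})$, the zero empirical risk of \Cref{lem:sketch-regression} then forces $\^\theta_{t,h}-\theta^\star_h$ to annihilate the whole span, the null-space noise is invisible on the span, and telescoping along the realized (in-span) trajectory closes the loop. This matches the paper's route through \Cref{lem:value-decomp} and \Cref{lem:bound-b-error}, and your remark that feasibility $\theta^\star_h\in\cset(W_h)$ is the real bookkeeping burden is exactly what \Cref{lem:norm-bound} handles.

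The gap is in your treatment of $U_t$. You assert that $U_t$ ``is built from the identical estimates $\^\theta_{t,h}$ and differs from $\-V_t$ only in the null-space perturbation.'' That is not how $U_t$ is defined: it is the minimum of the functional $V_{t,1}[\dotxip_1,\dots,\dotxir_H](s_{t,1})$, which \emph{re-runs the entire backward recursion} with the substituted noise. Changing $\dotxip_{t,h+1}$ changes the targets $\underline{\-V}_{t,h+1}(s_{i,h+1}) = \max_{a}\tri{\underline{\-\omega}_{t,h+1}+\underline{\-\theta}_{t,h+1},\phi(s_{i,h+1},a)}$, because the maximizing action at a historical next-state need not have an in-span feature, so the null-space noise does affect the regression targets and hence $\underline{\^\theta}_{t,h}\neq\^\theta_{t,h}$ in general. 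For the same reason, agreement of $\-Q$ and $\underline{\-Q}$ along the in-span $\pi_t$-trajectory does not by itself give $\max_a\-Q_{t,1}(s_{t,1},a)=\max_a\underline{\-Q}_{t,1}(s_{t,1},a)$: the two maxima may be attained at different, possibly out-of-span, actions. The paper instead argues two-sided: $U_t(s_{t,1})\le\-V_t(s_{t,1})$ holds by definition of $U_t$ as a minimum over a constraint set containing the realized noise (on the high-probability event), and $U_t(s_{t,1})\ge V^{\pi_t}(s_{t,1})$ follows from running your same decomposition along the $\pi_t$-trajectory for the $U_t$-functional, where greediness of $\pi_t$ w.r.t.\ $\-Q$ (not $\underline{\-Q}$) turns the telescoping identity into an inequality of the right sign (this is the second half of \Cref{lem:bound-b-error}). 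Combining with $\-V_t(s_{t,1})=V^{\pi_t}(s_{t,1})$ squeezes $U_t(s_{t,1})$ to the common value. Your proposal needs this repair to be complete.
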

To understand \Cref{lem:sketch-span-val}, we consider two fact: (1) $\pi_t$ is the optimal policy for the estimated value function $\-V_{t}$, and (2) both $\-V_{t}$ and $U_t$ has accurate value estimate for the trajectory induced by $\pi_t$, starting from \(s_{t, 1}\), because it is in the span of the historical data when $\Espan{t}$ holds. 

\textit{(2) Out-of-span case.} When any segment of the trajectory is not within the span, we simply pay $H$ in regret and can assert that this will not occur too many times. To see this, we observe the following fact: whenever $\Espan{t}$ does not hold, there must exists $h\in[H]$ such that $\dim\spanop(\+D_{t,h}) = \dim\spanop(\+D_{t-1,h}) + 1$ by definition. Since the dimension of spans cannot exceed $d$ for any $h\in[H]$, the case that $\Espan{t}$ does not hold cannot happen for more than $d H$ times. We formally state it in the following lemma.

\begin{lemma}\label{lem:sketch-span}
We have 
    $
    \sum_{t=1}^T \indic\{\prn*{\Espan{t}}^\cp\} \leq d H.
    $
\end{lemma}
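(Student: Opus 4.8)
The plan is to prove this with an elementary potential (dimension-counting) argument that formalizes the sketch already given in the text. For each layer $h\in[H]$ I would track the per-layer span dimension $d_{t,h} \coloneqq \dim\spanop(\+D_{t,h})$ and aggregate them into a single potential $\Phi_t \coloneqq \sum_{h=1}^H d_{t,h}$. Since $\+D_{t,h} = \+D_{t-1,h} \cup \crl{\phi(s_{t,h},a_{t,h})}$ by definition, each sequence $\crl{d_{t,h}}_t$ is non-decreasing in $t$, and because every feature lives in $\bbR^d$ we have the a priori bounds $0 = d_{0,h} \leq d_{t,h} \leq d$. Consequently $\Phi_t$ is non-decreasing with $\Phi_0 = 0$ and $\Phi_T \leq dH$.

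The main step I would isolate is the one-round increment bound: whenever $\prn*{\Espan{t}}^\cp$ holds, $\Phi_t \geq \Phi_{t-1} + 1$. To see this, note that by the definition of $\Espan{t}$ in \pref{eq:span-event}, its complement supplies some layer $h$ with $\phi(s_{t,h},a_{t,h}) \notin \spanop(\+D_{t-1,h})$; adjoining a vector that lies outside the current span strictly increases its dimension, so $d_{t,h} = d_{t-1,h} + 1$. For every other layer $h'\neq h$, monotonicity gives $d_{t,h'} \geq d_{t-1,h'}$, and summing over all layers yields $\Phi_t \geq \Phi_{t-1} + 1$. Combining this with the trivial inequality $\Phi_t \geq \Phi_{t-1}$ valid on every round, I obtain the pointwise domination $\indic\crl*{\prn*{\Espan{t}}^\cp} \leq \Phi_t - \Phi_{t-1}$ for all $t\in[T]$.

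The conclusion then follows by telescoping:
\begin{align*}
    \sum_{t=1}^T \indic\crl*{\prn*{\Espan{t}}^\cp} \;\leq\; \sum_{t=1}^T \prn*{\Phi_t - \Phi_{t-1}} \;=\; \Phi_T - \Phi_0 \;\leq\; dH,
\end{align*}
which is exactly the claim. Note that this argument is purely deterministic: it uses nothing about how the trajectories are generated beyond the recursive definition of the datasets $\+D_{t,h}$, so it holds pathwise (and hence in particular in expectation).

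Since the statement is elementary, I do not anticipate a serious obstacle. The only points demanding care are bookkeeping ones: I must keep straight that at round $t$ we test membership in $\spanop(\+D_{t-1,h})$ but credit the potential increase to $\+D_{t,h}$, and I should emphasize that it suffices for a \emph{single} violating layer to raise its dimension while the remaining layers are only required to be non-decreasing — there is never any need for the out-of-span layer to be unique, nor to count more than one increment per round.
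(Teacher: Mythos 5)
Your proof is correct and matches the paper's argument (Lemma~\ref{lem:not-in-span}): each out-of-span round strictly increases the span dimension of some layer, and since each of the $H$ per-layer dimensions is bounded by $d$, at most $dH$ such rounds can occur. Your potential-function/telescoping formalization is just a slightly more explicit write-up of the same dimension-counting idea.
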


Hence, we have the following decomposition:
\begin{align*}
    V^\star(s_{t,1}) - V^{\pi_t}(s_{t,1})  = 
    \indic\{\Espan{t}\} \Big( V^\star(s_{t,1}) - V^{\pi_t}(s_{t,1}) \Big)
    +
    \underbrace{\indic\{(\Espan{t})^\cp\} \Big( V^\star(s_{t,1}) - V^{\pi_t}(s_{t,1}) \Big)}_{\leq \  d H^2 \text{ when summed over } t}
\end{align*}
Therefore, we only need to focus on the rounds where $\Espan{t}$ holds. This will be the aim of the subsequent sections. 

\subsection{Exploration in the Null Space} 

\Cref{lem:sketch-regression} implies that the estimation error only comes from the null space of the historical data, i.e., $\nullsp(\{\phi(s_{i,h},a_{i,h}):i=1,\dots,t-1\})$. Therefore, we only need to explore in this null space.
While adding explicit bonus is infeasible under linear Bellman completeness, we add noise (\Cref{line:perturb}) that can cancel out the estimation error in the null space. This achieves the following: 
\begin{lemma}[Optimism with constant probability]\label{lem:sketch-optimism}
    Denote $\Eoptm{t}$ as the event that $V^\star(s_{t,1}) \leq \-V_t(s_{t,1})$.
    Then, for any $t\in[T]$, we have
    $
        \Pr ( \Eoptm{t} ) \geq \Gamma^2(-1)
    $
    where $\Gamma$ is the cumulative distribution function of the standard normal distribution. 
\end{lemma}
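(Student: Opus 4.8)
The plan is to lower-bound $\overline V_{t,1}(s_{t,1})$ by racing it against the value collected along the (deterministic) trajectory induced by $\pi^\star$ from $s_1^\star = s_{t,1}$, and to show the injected null-space noise wins this race with constant probability. Let $(s_h^\star, a_h^\star)_{h=1}^H$ be that trajectory, with $a_h^\star = \pi^\star(s_h^\star)$ and $s_{h+1}^\star$ the deterministic successor of $(s_h^\star,a_h^\star)$. Since $\pi_t$ is greedy with respect to $\overline Q_{t,h}$ we have $\overline V_{t,h}(s_h^\star) \geq \overline Q_{t,h}(s_h^\star,a_h^\star)$, while $V^\star_h(s_h^\star) = Q^\star_h(s_h^\star,a_h^\star) = r_h(s_h^\star,a_h^\star) + V^\star_{h+1}(s_{h+1}^\star)$. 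Writing $G_h := \overline V_{t,h}(s_h^\star) - V^\star_h(s_h^\star)$ and using $\overline\omega_{t,h} = \omega^\star_h$ (known reward), so that $\overline Q_{t,h}(s,a) = r_h(s,a) + \langle \overline\theta_{t,h}, \phi(s,a)\rangle$, I obtain the one-step recursion $G_h \geq \delta_h + G_{h+1}$ with $G_{H+1}=0$, where $\delta_h := \langle \overline\theta_{t,h}, \phi(s_h^\star,a_h^\star)\rangle - \overline V_{t,h+1}(s_{h+1}^\star)$. Unrolling gives $G_1 \geq \sum_{h=1}^H \delta_h$, so it suffices to show $\sum_h \delta_h \geq 0$ with probability at least $\Gamma^2(-1)$.

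Next I would analyze each $\delta_h$ via linear Bellman completeness. Let $\vartheta_{t,h} := \+T(\overline\omega_{t,h+1} + \overline\theta_{t,h+1})$ be the exact Bellman backup of the next-layer estimate; by \pref{def:lbc} and \pref{asm:determin}, $\langle \vartheta_{t,h}, \phi(s,a)\rangle = \overline V_{t,h+1}(s')$ at the successor $s'$, so $\vartheta_{t,h}$ drives the regression loss to zero (this is exactly \pref{lem:sketch-regression}, provided $\vartheta_{t,h} \in \cset(W_h)$), forcing $\widehat\theta_{t,h}$ to agree with $\vartheta_{t,h}$ on $\spanop(\+D_{t-1,h})$. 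Decomposing $\phi(s_h^\star,a_h^\star) = \phi^{\parallel}_h + \phi^{\perp}_h$ into its $P_{t,h}$ and $(I-P_{t,h})$ components, writing $\overline\theta_{t,h} = \widehat\theta_{t,h} + \xi_h$ with $\xi_h$ supported on the null space, and using $\overline V_{t,h+1}(s_{h+1}^\star) = \langle \vartheta_{t,h}, \phi(s_h^\star,a_h^\star)\rangle$, the span parts cancel and I get $\delta_h = \langle \widehat\theta_{t,h} - \vartheta_{t,h}, \phi^{\perp}_h\rangle + \langle \xi_h, \phi^{\perp}_h\rangle$. Two consequences follow: for in-span layers $\phi^{\perp}_h = 0$, hence $\delta_h = 0$; for out-of-span layers $\delta_h$ is a bounded estimation error plus an independent mean-zero Gaussian $\langle \xi_h, \phi^{\perp}_h\rangle \sim \+N(0, \sigma_h^2 \|\phi^{\perp}_h\|^2_{\Lambda_{t,h}^{-1}})$, whose standard deviation the variance schedule $\sigma_h$ is calibrated to make dominant.

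I would then reduce to the first out-of-span layer. Let $\ell$ be the smallest $h$ with $\phi(s_h^\star,a_h^\star) \notin \spanop(\+D_{t-1,h})$; since $\delta_h = 0$ for $h < \ell$, the recursion gives $G_1 \geq G_\ell \geq c_\ell + \langle \xi_\ell, \phi^{\perp}_\ell\rangle + G_{\ell+1}$, where $c_\ell := \langle \widehat\theta_{t,\ell} - \vartheta_{t,\ell}, \phi^{\perp}_\ell\rangle$. Crucially, $\xi_\ell$ is drawn after, hence independently of, the downstream noises $\xi_{\ell+1},\dots,\xi_H$ that determine both $c_\ell$ and $G_{\ell+1}$. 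Conditioning on these downstream draws, the target event $\langle \xi_\ell, \phi^{\perp}_\ell\rangle \geq -(c_\ell + G_{\ell+1})$ is a single Gaussian tail: calibrating $\sigma_\ell$ so that the threshold sits within one standard deviation gives conditional probability at least $\Gamma(-1) = \Pr(Z \geq 1)$. Combining with a second event on the downstream draws---that $c_\ell + G_{\ell+1}$ is not too negative, also arranged to hold with probability at least $\Gamma(-1)$ by anti-concentration of the downstream noise---the law of total probability yields $\Pr(\Eoptm{t}) \geq \Gamma(-1)\cdot\Gamma(-1) = \Gamma^2(-1)$.

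The main obstacle is the third step: decoupling the layer-$\ell$ noise from the downstream gap $G_{\ell+1}$ and controlling both $c_\ell$ and the worst-case downstream deficit despite the exponential norm budget $W_h$, so that exactly \emph{two} independent anti-concentration events suffice rather than one per out-of-span layer (which would cost $\Gamma(-1)^{\Theta(dH)}$). This is precisely where restricting the noise to the null space pays off: it keeps in-span layers contributing nothing and prevents the downstream estimates from being further corrupted, so a single null-space Gaussian at layer $\ell$---with variance $\sigma_\ell = \widetilde\Theta((d\sqrt{mH})^{H-\ell+1}(\cdots))$ chosen large enough to absorb both $|c_\ell|$ and the downstream deficit in one shot---can secure optimism. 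The quantitative work then concentrates on verifying this variance calibration against the error and deficit bounds, and on checking that $\vartheta_{t,h}$ indeed lies in $\cset(W_h)$ so that \pref{lem:sketch-regression} applies.
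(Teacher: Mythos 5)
Your setup — the telescoping along the deterministic optimal trajectory, the identity $\delta_h = \langle \widehat\theta_{t,h} - \mathcal{T}(\overline\omega_{t,h+1}+\overline\theta_{t,h+1}), (I-P_{t,h})\phi_h^\star\rangle + \langle \xi_h, (I-P_{t,h})\phi_h^\star\rangle$ with the span part vanishing by zero empirical risk, and the observation that in-span layers contribute nothing — matches the paper's proof (Lemma~\ref{lem:value-decomp} together with the first half of the proof of Lemma~\ref{lem:optimism}). The gap is in your third step, and it is exactly the obstacle you flag. You propose to absorb both $c_\ell$ and the entire downstream deficit $G_{\ell+1}$ using only the layer-$\ell$ noise. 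But the standard deviation of $\langle \xi_\ell, \phi_\ell^\star\rangle$ is $\sigma_\ell \|(I-P_{t,\ell})\phi_\ell^\star\|_{\Lambda_{t,\ell}^{-1}}$, which scales with the out-of-span component of the layer-$\ell$ feature; this component has no lower bound (the feature can be out of span by an arbitrarily small margin), while the downstream deficit you need to beat scales with $\sum_{h>\ell}\|\eta_{t,h}\|_{\Lambda_{t,h}}\|(I-P_{t,h})\phi_h^\star\|_{\Lambda_{t,h}^{-1}}$, whose factors $\|(I-P_{t,h})\phi_h^\star\|_{\Lambda_{t,h}^{-1}}$ can each be as large as $\sqrt{2d}$. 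No choice of $\sigma_\ell$ depending only on $\ell$ can bridge this mismatch, so the conditional tail event at layer $\ell$ cannot be guaranteed probability $\Gamma(-1)$. Making $\sigma_\ell$ exponentially large does not help, because the whole product is multiplied by the uncontrolled factor $\|(I-P_{t,\ell})\phi_\ell^\star\|_{\Lambda_{t,\ell}^{-1}}$.

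The paper's proof avoids isolating a single layer: it treats $\sum_{h=1}^H \langle \xi^{\mathrm{P}}_{t,h}, \phi_h^\star\rangle$ as \emph{one} Gaussian with variance $\sum_h \sigma_h^2\|(I-P_{t,h})\phi_h^\star\|^2_{\Lambda_{t,h}^{-1}}$, bounds the total deterministic deficit by Cauchy--Schwarz as $\sum_h \|\eta_{t,h}\|_{\Lambda_{t,h}}\|(I-P_{t,h})\phi_h^\star\|_{\Lambda_{t,h}^{-1}} \leq \sqrt{H\sum_h \|\eta_{t,h}\|^2_{\Lambda_{t,h}}\|(I-P_{t,h})\phi_h^\star\|^2_{\Lambda_{t,h}^{-1}}}$, and then uses the calibration $\sigma_h \geq \sqrt{H}\,\|\eta_{t,h}\|_{\Lambda_{t,h}}$ to conclude that the deficit is at most one standard deviation of the aggregate Gaussian. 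The crucial point is that the deficit and the variance are matched \emph{term by term} in $h$, with the same factor $\|(I-P_{t,h})\phi_h^\star\|_{\Lambda_{t,h}^{-1}}$ appearing in both, so the uncontrolled magnitudes cancel and a single one-sided tail event of probability $\Gamma(-1)$ suffices for the value part. (The second factor of $\Gamma(-1)$ in the paper comes from an analogous, independent event for the reward noise, which you assumed away; in the known-reward sketch the bound $\Gamma^2(-1)$ is simply a weakening of $\Gamma(-1)$.) Your two-event scheme at a single layer does not recover this, so the proof as proposed does not go through.
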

This result has been the key idea in randomized RL algorithms, such as RLSVI. In the next section, we will explore how this lemma is utilized.

\subsection{Proof Outline} 
In this section, we outline the structure of the whole proof.
Let $\~V$ denote an i.i.d.~copy of $\-V$, and $\tildeEspan{t},\tildeEoptm{t}$ denote the counterpart of $\Espan{t},\Eoptm{t}$ for $\~V$. We first invoke \Cref{lem:sketch-span-val} and get
\begin{align*}
    &\  \indic\{\Espan{t}\} \Big( V^\star(s_{t,1}) - V^{\pi_t}(s_{t,1}) \Big)
     = \indic\{\Espan{t}\} \Big( V^\star(s_{t,1}) - U_t(s_{t,1}) \Big)  \leq V^\star(s_{t,1}) - \indic\{\Espan{t}\} U_t(s_{t,1}) \\
\intertext{where the last step is by the non-negativity of $V^\star$. Next, we apply \Cref{lem:sketch-optimism} and get}
    & \leq  \E_{\~V_t}\Big[ \min\{\~V_t(s_{t,1}), H\} - \indic\{\Espan{t}\} U_t(s_{t,1}) \Biggiven \tildeEoptm{t} \Big] \\
\intertext{Split it into two parts:}
    & = \E_{\~V_t}\Big[ \indic\{\tildeEspan{t}\} \Big( \min\{\~V_t(s_{t,1}), H\} - \indic\{\Espan{t}\} U_t(s_{t,1})  \Big) \Biggiven \tildeEoptm{t} \Big] \\
    & \quad + \E_{\~V_t}\Big[ \indic\{(\tildeEspan{t})^\cp\} \Big( \min\{\~V_t(s_{t,1}), H\} - \indic\{\Espan{t}\} U_t(s_{t,1}) \Big) \Biggiven \tildeEoptm{t} \Big]
\intertext{Note that the quantity inside the first expectation is non-negative, so we can peel off the conditioning event; the quantity in the second term is simply upper bounded by $H$. Hence, we have }
    & \leq \frac{1}{\Gamma^2(-1)} \E_{\~V_t}\Big[ \indic\{\tildeEspan{t}\} \Big( \min\{\~V_t(s_{t,1}), H\} - \indic\{\Espan{t}\} U_t(s_{t,1})  \Big)  \Big] 
    + \frac{1}{\Gamma^2(-1)} \E_{\~V_t}\Big[ \indic\{(\tildeEspan{t})^\cp\} H \Big]
\intertext{Now we split the first term into two parts again:}
    & = \frac{1}{\Gamma^2(-1)} \E_{\~V_t}\Big[ \indic\{\tildeEspan{t}\} \min\{\~V_t(s_{t,1}), H\} - \indic\{\Espan{t}\} U_t(s_{t,1})   \Big] \\
    & \quad + \frac{1}{\Gamma^2(-1)} \E_{\~V_t}\Big[ \indic\{(\tildeEspan{t})^\cp \cap \Espan{t}\} U_t(s_{t,1}) \Big] + \frac{1}{\Gamma^2(-1)} \E_{\~V_t}\Big[ \indic\{(\tildeEspan{t})^\cp\} H \Big] \\
    & \leq \frac{1}{\Gamma^2(-1)} \E_{\~V_t}\Big[ \indic\{\tildeEspan{t}\} \min\{\~V_t(s_{t,1}), H\} - \indic\{\Espan{t}\} U_t(s_{t,1})   \Big] + \frac{2}{\Gamma^2(-1)} \E_{\~V_t}\Big[ \indic\{(\tildeEspan{t})^\cp\} H \Big] 
\intertext{where we used the fact that $\indic\{\Espan{t}\} U_t(s_{t,1}) \leq H$. Taking the expectation over the randomness of the algorithm and use the tower property, which converts $\~V$ into $\-V$, we obtain}
    & \leq \frac{1}{\Gamma^2(-1)} \E \Big[ \indic\{\Espan{t}\} \min\{\-V_t(s_{t,1}), H\} - \indic\{\Espan{t}\} U_t(s_{t,1})   \Big] + \frac{2}{\Gamma^2(-1)} \E \Big[ \indic\{(\Espan{t})^\cp\} H \Big]
\end{align*}
The first term is upper bounded by zero due to \Cref{lem:sketch-span-val}, and the second term is upper bounded by $d H^2$ by \Cref{lem:sketch-span} when summed over $t$. This finishes the proof.

\begin{remark}[Span Argument and Exponential Blow-Up]
    In the proof sketch above, we did not utilize any $\ell_2$-norm bound on $\-\theta_{t,h}$ or $\^\theta_{t,h}$ as did in many prior works. We actually cannot leverage them since they can be exponentially large due to the addition of exponentially large noise. This phenomenon is widely observed in the literature (e.g., \citet{agrawal2021improved,zanette2020frequentist}) and is addressed through truncation. However, truncation does not work under linear Bellman completeness, as the Bellman backup of a truncated value function is not necessarily linear. This is why we use the span argument to circumvent this issue.
\end{remark}

\section{Full Proof for \pref{sec:main}}\label{sec:proof-main-thm} 

In this section, we present and prove the following main theorem, which provides the regret bound in terms of parameters $\eps_1$, $\eps_2$, and $\epsB$. Setting $\eps_1 = \eps_2 = \epsB = 0$ yields \Cref{thm:exact_oracle_bound}, setting $\epsB = 0$ yields \Cref{thm:approximate_oracle_bound}, and setting $\eps_1 = \eps_2 = 0$ yields \Cref{thm:low_B_error_bound}.

\begin{theorem}\label{thm:main}
    Assume the MDP has $\epsB$-inherent linear Bellman error.
    Under \Cref{asm:determin,ass:oracle_approximate,asm:lower-bound}, 
    when executing \Cref{alg:main} with parameters $\sigmar = \sqrt{H} \urerr$ and $\sigma_h \geq \sqrt{H}(\sqrt{3} \gamma \uperr + \sqrt{8 m} (W_h + \eps_2))$, we have
    \begin{align*}
        \E \left[ \sum_{t=1}^T \Big( V^\star(s_{t,1}) - V^{\pi_t} (s_{t,1}) \Big) \right] 
        & =  
        \~O \bigg( d^{5/2} H^{5/2} + d^2   H^{3/2} \sqrt{T}+  \eps_1\gamma \Big(d H^2 +  d^{3/2}  H \sqrt{T} \Big) \\ 
        & \quad  + \sqrt{\epsB} \Big( d^2 H^{5/2} \sqrt{T} +  d^{3/2}  H^{3/2}T  \Big) + \epsB \gamma \Big(d H^2  \sqrt{T} +    d^{3/2} H T \Big)  \bigg).
    \end{align*}
\end{theorem}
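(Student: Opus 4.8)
The plan is to follow the template of the exact-oracle proof sketch in \Cref{sec:proof-overview}, but to carefully track the additional error incurred by the approximate oracle (parameters $\eps_1, \eps_2$) and by the inherent Bellman error $\epsB$. The backbone is the span decomposition: writing $V^\star(s_{t,1}) - V^{\pi_t}(s_{t,1}) = \indic\{\Espan{t}\}(\cdots) + \indic\{(\Espan{t})^\cp\}(\cdots)$, the out-of-span term is bounded by $H$ per round and occurs at most $dH$ times by \Cref{lem:sketch-span}, contributing $\~O(dH^2)$ in total. All remaining work concerns the in-span rounds.

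The key step is to establish an approximate version of \Cref{lem:sketch-span-val}. In the exact, zero-error case the regression objective in \Cref{eq:theta_optimization} admits a Bayes-optimal solution of zero empirical risk (\Cref{lem:sketch-regression}), forcing $\-V_t(s_{t,1}) = U_t(s_{t,1}) = V^{\pi_t}(s_{t,1})$ whenever $\Espan{t}$ holds. With an approximate oracle and $\epsB > 0$, the in-span estimate is no longer exact; instead I would show that for a trajectory lying in $\spanop(\+D_{t-1,h})$, the estimation error at step $h$ is governed by the regression residual $\uperr$ together with $\eps_1$ and $\epsB$. Concretely, I would run a backward induction over $h = H, \dots, 1$ that propagates the per-step error, using \Cref{asm:lower-bound} to guarantee $\nrm{\phi}_{\^\Sigma_{t,h}^\dagger} \leq \gamma$ for in-span features so that each step contributes at most $\~O(\gamma \uperr + \eps_1 + \epsB)$, and then summing across rounds via the elliptical potential lemma (the quantity $\uellip$ in the notation table). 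This is where the $\eps_1 \gamma$ and $\epsB$ contributions to the regret originate, and where the recursive definition of $W_h$ and the exponential factor $(d\sqrt{mH})^{H-h}$ enter—they bound the magnitude of the backed-up parameters so that the induction closes.

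Next I would calibrate the null-space noise so that optimism (\Cref{lem:sketch-optimism}) survives the additional errors. The condition $\sigma_h \geq \sqrt{H}(\sqrt{3}\gamma\uperr + \sqrt{8m}(W_h + \eps_2))$ is chosen precisely so that the injected Gaussian, acting only in $\nullsp(\{\phi(s_{i,h},a_{i,h})\})$, dominates the worst-case deviation between the true backed-up value and its estimate—now inflated by the oracle error $\eps_1$ and the Bellman error $\epsB$. Since the noise lives in the null space while the in-span value is estimated accurately, a Gaussian anti-concentration argument yields $\Pr(\Eoptm{t}) \geq \Gamma^2(-1)$ exactly as in the exact case; the error terms only enlarge the required variance and hence enter the final bound through the dependence of $\sigma_h$ on $\uperr, \eps_1, \epsB$.

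With these ingredients I would reproduce the i.i.d.-copy decomposition of \Cref{sec:proof-overview}: introduce an independent copy $\~V_t$ with events $\tildeEspan{t}, \tildeEoptm{t}$, peel off the optimism conditioning at the cost of a $1/\Gamma^2(-1)$ factor, split on $\tildeEspan{t}$, and invoke the tower property to convert $\~V$ back into $\-V$. The first resulting term is controlled by the (now approximate) in-span accuracy lemma and the second by the $\~O(dH^2)$ out-of-span bound. Finally, the reward-learning contribution, which I would handle separately via standard least-squares concentration to bound $\urerr$ (the $\Sigma_{t,h}$-norm of the reward estimation error) together with the elliptical potential lemma on the full-rank covariance $\Sigma_{t,h}$ (the quantity $\uellir$), produces the $d^2 H^{3/2}\sqrt{T}$ term and the reward-side $\epsB$ terms. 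Collecting all pieces yields the stated bound. I expect the main obstacle to be the approximate in-span accuracy lemma: without any $\ell_2$-norm control, one must show that the regression residual does not amplify through the backward recursion, and the delicate balance between the exponentially large $W_h$ (tolerated only because of the span argument) and the constant-probability optimism is the crux of the argument.
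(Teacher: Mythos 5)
Your proposal follows essentially the same route as the paper's proof: the span decomposition with the $dH^2$ out-of-span budget, the approximate in-span accuracy argument driven by the regression residual $\uperr = \sqrt{2\eps_1^2 + 4T\epsB^2}$ and the $\gamma$-bound on $\nrm{\phi}_{\^\Sigma_{t,h}^\dagger}$, the backward induction establishing the recursive $W_h$ bounds, null-space noise calibrated to dominate $\nrm{\eta_{t,h}}_{\Lambda_{t,h}}$ for constant-probability optimism, the i.i.d.-copy decomposition with the $1/\Gamma^2(-1)$ peeling, and separate reward learning via Freedman-type concentration plus the elliptical potential lemma. The only thing left implicit is the precise role of the auxiliary function $U_t$ (the noise-constrained minimizer) in making the peeling of the optimism conditioning legitimate, but this is part of the proof-overview template you cite, so the plan is faithful to the paper's argument.
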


\textbf{Exact value of parameters $\sigmar$ and $\sigma_h$ in \Cref{thm:main}.}
We define $W_{H+1} = 1$ and recursively define $W_{h-1} = W_h + 2\eps_2 + \sqrt{2 d} \cdot \upnoise{h} + \sqrt{2d} \cdot \urnoise  + 1$. Plugging the definition of all these symbols involved and ignoring lower order terms (i.e., logarithmic and constant terms), we get 
\begin{align}
    W_{h-1} \approx d\sqrt{m H} \cdot W_h + \eps_1 \cdot d\gamma \sqrt{H} + \epsB \cdot d \gamma \sqrt{HT} + \sqrt{\epsB} \cdot d\sqrt{T} + d^{3/2}.
\end{align}
Solving this recursion, we get
\begin{align*}
    W_h 
    & \approx \big( d \sqrt{m H} \big)^{H+1-h} + \big( d \sqrt{m H} \big)^{H-h} \big( \eps_1 \cdot d\gamma \sqrt{H} + \epsB \cdot d \gamma \sqrt{HT} + \sqrt{\epsB} \cdot d\sqrt{T} + d^{3/2} \big) \\
    & \approx \big( d \sqrt{m H} \big)^{H-h} \big( \eps_1 \cdot d\gamma \sqrt{H} + \epsB \cdot d \gamma \sqrt{HT} + \sqrt{\epsB} \cdot d\sqrt{T} + d^{3/2} + d \sqrt{m H} \big).
\end{align*}
We insert this into the value of $\sigma_h$ and get
\begin{align*}
    \sigma_h \approx \big( d \sqrt{m H} \big)^{H-h+1} \big( \eps_1 \cdot \gamma \sqrt{H} + \epsB \cdot \gamma \sqrt{HT} + \sqrt{\epsB} \cdot \sqrt{T} + d^{1/2} + \sqrt{m H} \big).
\end{align*}
We can also get the value of $\sigmar$ as
\begin{align*}
    \sigmar \approx \sqrt{H} \Big( \sqrt{d \log (HT)} + \eps_1 + \sqrt{\epsB T} \Big).
\end{align*}

Define $\Lambda = \sum_{i=1}^m \rho_i \phi_i \phi_i^\top$. It is straightforward that both $\Lambda$ and $\Lambda_{t,h}$ (constructed in \Cref{line:lambda} of \Cref{alg:main}) are invertible. We define $\lambda \coloneqq \max_{s,a} \|\phi(s,a)\|_{\Lambda^{-1}}$ and $\lambda_{t,h} \coloneqq \max_{s,a} \|\phi(s,a)\|_{\Lambda^{-1}_{t,h}}$.

\begin{lemma}\label{lem:matrix} 
    The matrices $\Lambda$ and $\Lambda_{t,h}$ are invertible. Furthermore, we also have that 
    \begin{itemize}
        \item $\lambda \leq \sqrt{d}$;
        \item $\lambda_{t,h} \leq \sqrt{2d}$ for all $t\in[T]$ and all $h\in[H]$.
    \end{itemize}
\end{lemma}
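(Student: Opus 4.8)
The plan is to handle the two matrices separately: dispatch $\Lambda$ using standard optimal-design theory, then reduce $\Lambda_{t,h}$ to $\Lambda$ via a single operator inequality.

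For $\Lambda = \sum_{i=1}^m \rho_i \phi_i \phi_i^\top$, invertibility follows from the assumption $\spanop(\Phi) = \bbR^d$: any maximizer of $\log\det(\cdot)$ must be full rank, since a rank-deficient design gives objective value $-\infty$, whereas placing uniform mass on $d$ spanning features yields a finite value. For the bound $\lambda \le \sqrt d$, I would invoke the Kiefer--Wolfowitz general equivalence theorem, which states that a D-optimal design satisfies $\sup_{\phi \in \Phi}\|\phi\|_{\Lambda^{-1}}^2 \le d$. Since $\lambda = \max_{s,a}\|\phi(s,a)\|_{\Lambda^{-1}}$ is exactly this supremum over $\Phi$, we get $\lambda^2 \le d$. (The upper direction alone also follows from the averaging identity $\E_{\phi\sim\rho}\|\phi\|_{\Lambda^{-1}}^2 = \operatorname{tr}(\Lambda^{-1}\Lambda) = d$ together with the equivalence theorem.)

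The crux is $\Lambda_{t,h}$. Writing $P = P_{t,h}$ and $Q = I - P$, expanding the definition in \Cref{line:lambda} gives $\Lambda_{t,h} = P\Lambda P + Q\Lambda Q$; that is, $\Lambda_{t,h}$ retains only the ``diagonal'' blocks of $\Lambda$ relative to the orthogonal decomposition $\bbR^d = \operatorname{range}(P)\oplus\operatorname{range}(Q)$. The key observation I would exploit is a reflection identity: letting $D = 2P - I = P - Q$, which is a symmetric orthogonal involution ($D^\top = D$, $D^2 = I$), a one-line expansion shows the off-diagonal cross terms cancel, yielding
\[
\Lambda_{t,h} = \tfrac12\big(\Lambda + D\Lambda D\big).
\]
Because $D$ is orthogonal, $D\Lambda D$ is similar to $\Lambda$ and hence positive definite, so $\Lambda_{t,h} \succeq \tfrac12\Lambda \succ 0$. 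This simultaneously establishes invertibility of $\Lambda_{t,h}$ and gives the operator inequality $\Lambda_{t,h}^{-1}\preceq 2\Lambda^{-1}$.

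Finally, for any feature $\phi(s,a)$ this inequality yields $\|\phi(s,a)\|_{\Lambda_{t,h}^{-1}}^2 = \phi(s,a)^\top\Lambda_{t,h}^{-1}\phi(s,a) \le 2\,\phi(s,a)^\top\Lambda^{-1}\phi(s,a) \le 2d$ by the first bullet, so $\lambda_{t,h}\le\sqrt{2d}$ uniformly in $t$ and $h$. I expect the main obstacle to be purely in identifying this reduction: the naive block-diagonal picture tempts one to compare the inverse blocks of $\Lambda_{t,h}$ against the blocks of $\Lambda^{-1}$, but that route does not close cleanly because the projected features $P\phi$ need not lie in $\Phi$ (so one cannot directly reuse the D-optimality bound on a subspace). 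The identity $\Lambda_{t,h} = \tfrac12(\Lambda + D\Lambda D)$ sidesteps this entirely and is the one genuinely nontrivial step; everything else is routine.
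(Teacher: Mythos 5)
Your proof is correct and follows essentially the same route as the paper: both reduce the second bullet to the operator inequality $\Lambda \preceq 2\Lambda_{t,h}$ combined with the Kiefer--Wolfowitz bound $\lambda \le \sqrt{d}$. The paper obtains that inequality by applying $(a+b)^2 \le 2a^2+2b^2$ to $x^\top\phi_i = x^\top\phispan{t,h,i} + x^\top\phinull{t,h,i}$, which is just a pointwise rephrasing of your reflection identity $\Lambda_{t,h} = \tfrac12(\Lambda + D\Lambda D)$, so the two arguments are equivalent.
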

\begin{proof}[Proof of \Cref{lem:matrix}]
    By the last item in \Cref{lem:optimal-design}, we have $\lambda \leq \sqrt{d}$. In what follows, we will show that $\Lambda \preceq 2 \Lambda_{t,h}$, which implies $\lambda_{t,h} \leq \sqrt{2} \lambda \leq \sqrt{2d}$.

    For any $x\in\=R^d$, we have
    \begin{align*}
        x^\top \Lambda x 
        & = \sum_{i=1}^m \rho_i (x^\top \phi_i)^2
        = \sum_{i=1}^m \rho_i \left(x^\top \phispan{t,h,i} + x^\top \phinull{t,h,i} \right)^2 \\
        & \leq 2 \sum_{i=1}^m \rho_i \left(x^\top \phispan{t,h,i} \right)^2 + 2 \sum_{i=1}^m \rho_i  \left( x^\top \phinull{t,h,i} \right)^2 \tag{using $(a+b)^2 \leq 2a^2 + 2b^2$} \\
        & = 2 x^\top \Lambda_{t,h} x.
    \end{align*}
    This implies that $\Lambda \preceq 2\Lambda_{t,h}$.
\end{proof}

\subsection{High-probability Event and Boundedness} 
\begin{lemma}[Reward estimation]\label{lem:reg-reward}
With probability at least \(1 - \delta\), for any  $t\in[T]$ and $h\in[H]$, 
    \begin{align*}
        \left\| \^\omega_{t,h} - \omega^\star_h \right\|_{\Sigma_t} 
        \leq \sqrt{1030(1+\eps_2)^4 d \log \big(8 (1+\eps_2) e^2 T^2 H / \delta \big) + 4 \eps_1^2 + 16(1+\eps_2)(1+\epsB T)}.
    \end{align*}
\end{lemma}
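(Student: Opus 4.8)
The plan is to treat \eqref{eq:r_optimization} as a constrained, noisy linear regression and to bound the estimation error $\Delta \coloneqq \^\omega_{t,h} - \omega^\star_h$ in the $\Sigma_{t,h}$-norm via a self-normalized martingale concentration inequality, combined with the $\ell_\infty$-functional constraint defining $\cset$ to control the contribution of the D-optimal design regularizer $\Lambda$. First I would write the observed reward as $r_{i,h} = \tri*{\omega^\star_h, \phi(s_{i,h},a_{i,h})} + b_{i,h} + \xi_{i,h}$, where $\abs{b_{i,h}} \le \epsB$ is the approximation bias from \Cref{def:B-error} and $\xi_{i,h} \coloneqq r_{i,h} - r_h(s_{i,h},a_{i,h})$ is a mean-zero, bounded (hence $O(1)$-sub-Gaussian) martingale difference adapted to the natural filtration. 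The crucial point here is that $\phi(s_{i,h},a_{i,h})$ is determined \emph{before} the stochastic reward is drawn, so the summands are genuinely predictable and no covering over a value-function class is required (unlike for the target in \eqref{eq:theta_optimization}).

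Next I would use the approximate-oracle guarantee. Since $\omega^\star_h \in \cset(1)$ (the mean reward lies in $[0,1]$, up to the $\epsB$ slack), the returned $\^\omega_{t,h}$ satisfies $g(\^\omega_{t,h}) \le g(\omega^\star_h) + \eps_1^2$ for $g(\omega) = \sum_{i<t}(\tri*{\omega,\phi(s_{i,h},a_{i,h})} - r_{i,h})^2$. Expanding $g(\^\omega_{t,h}) - g(\omega^\star_h)$ in $\Delta$, with residuals $-(b_{i,h}+\xi_{i,h})$, gives
\[
\nrm*{\Delta}_{\^\Sigma_{t,h}}^2 \le \eps_1^2 + 2\tri*{\Delta,\, S_{t,h}} + 2\sum_{i<t} b_{i,h}\,\tri*{\Delta,\,\phi(s_{i,h},a_{i,h})},
\]
where $S_{t,h} = \sum_{i<t}\xi_{i,h}\,\phi(s_{i,h},a_{i,h})$. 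I would then split $\nrm*{\Delta}_{\Sigma_{t,h}}^2 = \nrm*{\Delta}_{\^\Sigma_{t,h}}^2 + \nrm*{\Delta}_{\Lambda}^2$ and bound the extra ``prior'' term using the constraint: since $\Delta \in \cset(2+\eps_2)$ and $\Lambda = \sum_i \rho_i \phi_i \phi_i^\top$ with $\sum_i \rho_i = 1$, we get $\nrm*{\Delta}_\Lambda^2 = \sum_i \rho_i \tri*{\Delta,\phi_i}^2 \le (2+\eps_2)^2$, which is exactly where the benign $\ell_\infty$-constraint replaces the usual $\ell_2$-bound on $\omega^\star$.

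The two cross terms are handled by Cauchy--Schwarz in the $\Sigma_{t,h}$-geometry. The bias term is at most $2\epsB\sqrt{T}\,\nrm*{\Delta}_{\^\Sigma_{t,h}} \le 2\epsB\sqrt{T}\,\nrm*{\Delta}_{\Sigma_{t,h}}$, and the noise term is at most $2\nrm*{\Delta}_{\Sigma_{t,h}}\,\nrm*{S_{t,h}}_{\Sigma_{t,h}^{-1}}$. The factor $\nrm*{S_{t,h}}_{\Sigma_{t,h}^{-1}}$ is precisely the object governed by the self-normalized tail bound for vector-valued martingales; applying it with regularizer $V_0 = \Lambda$ and union-bounding over $t\in[T]$ and $h\in[H]$ yields $\nrm*{S_{t,h}}^2_{\Sigma_{t,h}^{-1}} = O\big((1+\eps_2)^{O(1)}\, d\log(T^2 H/\delta)\big)$, where the log-determinant potential $\log(\det\Sigma_{t,h}/\det\Lambda)$ is bounded by $O(d\log(1+T))$ using $\sum_{i<t}\nrm*{\phi(s_{i,h},a_{i,h})}_{\Lambda^{-1}}^2 \le Td$, which follows from $\lambda \le \sqrt{d}$ in \Cref{lem:matrix}. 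Writing $X = \nrm*{\Delta}_{\Sigma_{t,h}}$, everything assembles into a scalar quadratic inequality $X^2 \le c_0 + 2c_1 X$ with $c_0 = \eps_1^2 + (2+\eps_2)^2$ and $c_1 = \nrm*{S_{t,h}}_{\Sigma_{t,h}^{-1}} + \epsB\sqrt{T}$; solving it and simplifying constants (using $\epsB^2 \le \epsB$ and absorbing $\eps_2$-powers) gives the stated bound, with the $4\eps_1^2$ and $16(1+\eps_2)(1+\epsB T)$ terms coming from $c_0$ and the bias/prior contributions.

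The main obstacle I anticipate is the concentration step: obtaining the self-normalized bound with the right dimensional dependence while using the D-optimal design matrix $\Lambda$ (rather than a multiple of the identity) as the regularizer. This forces a lower bound on $\det\Lambda$ and the potential estimate through \Cref{lem:matrix}, and requires careful bookkeeping of the approximate-oracle relaxation—the $\eps_2$-enlarged constraint and the $\eps_1^2$ suboptimality—which is where the $(1+\eps_2)$ factors in the statement originate. Once the scalar quadratic inequality is in place, the remainder is routine algebra.
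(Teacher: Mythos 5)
Your proof is correct, but it takes a genuinely different route from the paper's. You exploit the fact that in the reward regression the target $r_{i,h}$ is an exogenous noisy observation of a \emph{fixed} linear function (the feature is measurable before the reward is drawn), so you can run the standard linear-bandit argument: basic inequality from the approximate-oracle guarantee, then a self-normalized vector-martingale tail bound on $\nrm{\sum_i \xi_{i,h}\phi(s_{i,h},a_{i,h})}_{\Sigma_{t,h}^{-1}}$ with the D-optimal design matrix $\Lambda$ as regularizer, then solve a scalar quadratic in $\nrm{\Delta}_{\Sigma_{t,h}}$. The paper instead proves a uniform-convergence statement over the whole constraint set: it takes an $\ell_1$-cover of $\cset(1+\eps_2)$ of size $(8(1+\eps_2)e^2/\alpha)^d$ via the pseudo-dimension bound, applies Freedman's inequality to the excess-loss variables $z_i^\omega$ for each cover element, and transfers back to $\^\omega_{t,h}$ through the cover approximation. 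Your route is leaner and gives comparable (if anything slightly better) constants, so it does establish the stated bound; note that since $\epsB\le 1$ your $\epsB^2 T$ term is dominated by the paper's $16(1+\eps_2)\epsB T$, and your $(2+\eps_2)^2$ design term is dominated by $16(1+\eps_2)$ for $\eps_2\le 1$. What you give up is self-containedness: the self-normalized concentration inequality is not among the paper's supporting lemmas and would need to be imported, and your log-determinant potential step should be routed through the second statement of the paper's elliptical potential lemma (which is exactly where $\log(\det\Sigma_{T+1}/\det\Lambda)\le d\log(T+1)$ is established for optimal-design initialization) rather than the heuristic ``$\sum_i\nrm{\phi_i}^2_{\Lambda^{-1}}\le Td$'' you cite. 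Two small points of care, both shared with the paper's own proof: the comparator $\omega_h^\star$ lies in $\cset(1+\epsB)$ rather than $\cset(1)$ under the inherent-Bellman-error definition, so the oracle comparison against $\min_{\omega\in\cset(1)}g(\omega)$ needs the $\epsB$ slack absorbed; and $\Delta\in\cset(2+\eps_2+\epsB)$ accordingly.
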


\begin{proof}[Proof of \Cref{lem:reg-reward}] 
    For the ease of notation, we fixed $t$ and $h$ in the proof and simply write the regression problem as 
    \begin{align*}
        \^\omega \gets \argmin_{\omega \in \cset(1)} \sum_{i=1}^{n} \left( \omega^\top \phi_i - r_i \right)^2
    \end{align*}
    where we have dropped the subscripts $t$ and $h$ for notational simplicity. Here $\phi_i$ and $r_i$ are abbreviated notations for $\phi(s_{i,h},a_{i,h})$ and $r_{i,h}$, respectively, and $n = t - 1$. 
        
    Note that, due to approximate oracle (\Cref{ass:oracle_approximate}), $\^\omega$ actually belongs to $\cset(1 + \eps_2)$ instead of $\cset(1)$. Denote $\Ccal$ as an $\ell_1$-norm $\alpha$-cover (\Cref{def:cover}) on $\cset(1 + \eps_2)$ such that for any $\omega \in \cset(1 + \eps_2)$, there exists a $\~ \omega\in \Ccal$, such that $\sum_{i=1}^n |  \omega^\top \phi_i - \~ \omega^\top \phi_i    | / n \leq \alpha$. Since $\cset(1 + \eps_2)$ is a linear function class, which has pseudo-dimension $d$ (\Cref{def:pdim}), we have
    \begin{equation}\label{eq:covering-num-linear}
        |\Ccal| 
        \leq \big( 8 (1+\eps_2) e^2 / \alpha \big)^d
    \end{equation}
    by \Cref{lem:covering-num}. Now define $z^\omega_i = ( \omega^\top \phi_i  - r_i  )^2 - (  (\omega^\star)^\top \phi_i - r_i   )^2$. Then we have $|z^\omega_i| \leq 4(1+\eps_2)^2$, and 
    \begin{align*}
        \EE_{i} [ z^\omega_i ] 
        &= \EE_i \left[ (\omega^\top \phi_i - (\omega^\star)^\top \phi_i) ( \omega^\top \phi_i + (\omega^\star)^\top \phi_i - 2 r_i) \right] \\ 
        &= \EE_i \left[ (\omega^\top \phi_i - (\omega^\star)^\top \phi_i) \left( \omega^\top \phi_i - (\omega^\star)^\top \phi_i 
 + 2 \prn*{(\omega^\star)^\top \phi_i - r_i} \right) \right] \\
        &\geq ( \omega^\top \phi_i - (\omega^\star)^\top \phi_i )^2 - 4 (1+\eps_2) \epsB, \\
    \intertext{and moreover,}
        \EE_i[ (z^\omega_i)^2 ]
        & = \EE_i \left[ (\omega^\top \phi_i - (\omega^\star)^\top \phi_i)^2 ( \omega^\top \phi_i + (\omega^\star)^\top \phi_i - 2 r_i)^2 \right]
        \leq 16 (1+\eps_2)^2 ( \omega^\top \phi_i - (\omega^\star)^\top \phi_i )^2
    \end{align*}
    
    We note that $z_i^\omega - \EE_i z^\omega_i$ is a martingale difference sequence and $|z_i^\omega - \EE_i z^\omega_i| \leq 8(1+\eps_2)^2$.  Applying Freedman's inequality (\Cref{lem:freedman}) and a union bound over $\omega\in \Ccal$,  we have with probability at least $1-\delta$, for all $\omega\in \Ccal$,
    \begin{align*}
    &\sum_{i=1}^n ( \omega^\top \phi_i - (\omega^\star)^\top \phi_i )^2 - \sum_{i=1}^n  z_i^\omega \\
    &\leq \eta \sum_{i=1}^n  16 (1+\eps_2)^2 ( \omega^\top \phi_i - (\omega^\star)^\top \phi_i )^2 + \frac{ 8 (1+\eps_2)^2 \log(|\+C|/\delta)}{\eta} + 4(1+\eps_2)\epsB T.  \numberthis\label{eq:freedman}
    \end{align*}
    
    Recall that $\^\omega$ is the least square solution. Denote $\~ \omega \in \Ccal$ as the point that is closest to $\^\omega$, which means that: $\sum_{i=1}^n | \^\omega^\top \phi_i - \~ \omega^\top \phi_i  | \leq n \alpha$. We can derive the following relationship between $\^\omega$ and $\~ \omega$:
    \begin{align*} 
    & \sum_{i=1}^n ( \^\omega^\top \phi_i - (\omega^\star)^\top \phi_i  )^2 \leq 2 \sum_{i=1}^n ( \^\omega^\top \phi_i - \~ \omega^\top \phi_i  )^2 + 2 \sum_{i=1}^n ( \~ \omega^\top \phi_i - (\omega^\star)^\top \phi_i  )^2 \leq 2n^2 \alpha^2 + 2 \sum_{i=1}^n ( \~ \omega^\top \phi_i - (\omega^\star)^\top \phi_i  )^2, \\
    &  \sum_{i=1}^n z_i^{\~ \omega} - \sum_{i=1}^n z_i^{\^\omega} = \sum_{i=1}^n ( \~ \omega^\top \phi_i - \^\omega^\top \phi_i ) (\~ \omega^\top \phi_i  + \^\omega^\top \phi_i  - 2r_i)  \leq 4(1+\eps_2) n\alpha.
    \end{align*}
    
    Now plug $\~ \omega$ into \eqref{eq:freedman} and re-arrange terms, we get:
    \begin{align*}
    \sum_{i=1}^n ( \~ \omega^\top \phi_i - (\omega^\star)^\top \phi_i )^2 \leq \frac{1}{1 - 16(1+\eps_2)^2 \eta}  \sum_{i=1}^n z_i^{\~ \omega} + \frac{8(1+\eps_2)^2}{\eta(1 - 16(1+\eps_2)^2 \eta)} \cdot \log(|\+C|/\delta) + \frac{4(1+\eps_2)\epsB T}{1 - 16(1+\eps_2)^2 \eta}.
    \end{align*} 
    Setting $\eta = (32(1+\eps_2)^2)^{-1}$, we get
    \begin{align*}
    \sum_{i=1}^n ( \~ \omega^\top \phi_i - (\omega^\star)^\top \phi_i )^2 \leq 2  \sum_{i=1}^n z_i^{\~ \omega} + 512 (1+\eps_2)^4 \log(|\+C|/\delta)  + 8(1+\eps_2)\epsB T.
    \end{align*} 
    Using the relationships between $\^\omega$ and $\~ \omega$ that we derived above, we have:
    \begin{align*}
    & \sum_{i=1}^n ( \^\omega^\top \phi_i - (\omega^\star)^\top \phi_i  )^2 \\
    & \leq 2 n^2\alpha^2 + 4 \sum_{i=1}^n z_i^{\~ \omega} + 1024(1+\eps_2)^4 \log(|\+C|/\delta) +  16(1+\eps_2)\epsB T.  \\
    &  \leq 2 n^2\alpha^2 +  4 \sum_{i=1}^n z_i^{\^\omega}  + 1024(1+\eps_2)^4 \log(|\+C|/\delta) + 16(1+\eps_2) n\alpha +  16(1+\eps_2)\epsB T.
    \end{align*}
    Since $\^\omega$ is the (approximate) least square solution, we have $\sum_i z_i^{\^\omega} \leq \eps_1^2$. This implies that:
    \begin{align*}
    \sum_{i=1}^n ( \^\omega^\top \phi_i - (\omega^\star)^\top \phi_i  )^2 
    \leq 2 n^2\alpha^2 + 4 \eps_1^2   + 1024 (1+\eps_2)^4 \log(|\+C|/\delta) + 16 (1+\eps_2) (n\alpha + \epsB T).
    \end{align*}
    Now plugging the covering number \eqref{eq:covering-num-linear} and setting $\alpha = 1/n$, we obtain
    \begin{align*}
    \sum_{i=1}^n ( \^\omega^\top \phi_i - (\omega^\star)^\top \phi_i  )^2 
    & \leq 2 + 4 \eps_1^2 + 1024 (1+\eps_2)^4 d \log(8 (1+\eps_2) e^2 n / \delta) + 16(1+\eps_2)(1 + \epsB T) \\
    & \leq 1026 (1+\eps_2)^4 d \log(8 (1+\eps_2) e^2 n / \delta) + 4 \eps_1^2 + 16(1+\eps_2)(1 + \epsB T).
    \end{align*}
    Finally, we have
    \begin{align*}
        \left\| \^\omega - \omega^\star_h \right\|^2_{\Sigma_t}
        = \sum_{i=1}^n ( \^\omega^\top \phi_i - (\omega^\star)^\top \phi_i  )^2 
        +  \sum_{i=1}^m \rho_i ( \^\omega^\top \phi_i - (\omega^\star)^\top \phi_i  )^2.
    \end{align*}
    Here, with some abuse of notation, the $\phi_i$'s in the right term are the support points of the optimal design. The first term is already bounded above. The second term can be bounded by 
    \begin{align*}
        \sum_{i=1}^m \rho_i ( \^\omega^\top \phi_i - (\omega^\star)^\top \phi_i  )^2
        \leq \sum_{i=1}^m \rho_i \cdot 4 (1+\eps_2) = 4 (1+\eps_2).
    \end{align*}
    We add it into the constant of the first term. Then, we apply the union bound over all $t\in[T]$ and $h\in[H]$ to get the desired result.
\end{proof}

\begin{lemma}[Value function estimation]\label{lem:reg}
    Suppose that $\+T(\-\omega_{t,h} + \-\theta_{t,h+1}) \in \cset(W_h)$. Then, 
    \begin{align*}
        \sum_{i=1}^{t-1} \left( \tri*{\^\theta_{t,h}, \phi(s_{i,h},a_{i,h})} - \overline V_{t,h+1}(s_{i,h+1}) \right)^2 \leq \eps_1^2 + T \epsB^2.
    \end{align*}
Furthermore,  
    $
        \| \^\theta_{t,h} - \+T(\-\omega_{t,h} + \-\theta_{t,h+1}) \|_{\^\Sigma_{t,h}} \leq \sqrt{2 \eps_1^2 + 4 T \epsB^2} =: \uperr.
    $
\end{lemma}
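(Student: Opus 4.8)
The plan is to compare the approximate regression solution $\^\theta_{t,h}$ against the population Bellman backup parameter $\theta^\star \coloneqq \+T(\-\omega_{t,h+1} + \-\theta_{t,h+1})$, which by the hypothesis of the lemma lies in the feasible set $\cset(W_h)$. First I would establish the crucial per-example identity that, under deterministic dynamics, $\theta^\star$ predicts each regression target up to the inherent Bellman error. Concretely, since $s_{i,h+1}$ is the \emph{unique} state reached from $(s_{i,h}, a_{i,h})$ (\Cref{asm:determin}), the expectation over next states in \Cref{def:B-error} collapses to a single term, so that
\begin{align*}
\E_{s'\sim\mathsf{T}(s_{i,h},a_{i,h})} \max_{a'} \tri*{\-\omega_{t,h+1}+\-\theta_{t,h+1}, \phi(s',a')}
& = \max_{a'} \tri*{\-\omega_{t,h+1}+\-\theta_{t,h+1}, \phi(s_{i,h+1},a')} \\
& = \-V_{t,h+1}(s_{i,h+1}).
\end{align*}
Invoking \Cref{def:B-error} with $\theta = \-\omega_{t,h+1}+\-\theta_{t,h+1}$ then yields $\abs{\tri*{\theta^\star, \phi(s_{i,h},a_{i,h})} - \-V_{t,h+1}(s_{i,h+1})} \le \epsB$ for every $i \le t-1$.

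Summing the squares of these per-example errors shows that $\theta^\star$ attains in-sample loss at most $(t-1)\epsB^2 \le T\epsB^2$. Since $\theta^\star \in \cset(W_h)$ is feasible and $\^\theta_{t,h}$ is the output of the approximate oracle $\apxsqO$, whose objective suboptimality is at most $\eps_1^2$ (\Cref{ass:oracle_approximate}), I would chain these two facts to obtain
\begin{align*}
& \sum_{i=1}^{t-1}\prn*{\tri*{\^\theta_{t,h}, \phi(s_{i,h},a_{i,h})} - \-V_{t,h+1}(s_{i,h+1})}^2 \\
& \qquad \le \eps_1^2 + \sum_{i=1}^{t-1}\prn*{\tri*{\theta^\star, \phi(s_{i,h},a_{i,h})} - \-V_{t,h+1}(s_{i,h+1})}^2 \le \eps_1^2 + T\epsB^2,
\end{align*}
which is the first claimed bound.

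For the Mahalanobis bound, I would expand $\nrm*{\^\theta_{t,h}-\theta^\star}_{\^\Sigma_{t,h}}^2 = \sum_{i=1}^{t-1}\tri*{\^\theta_{t,h}-\theta^\star, \phi(s_{i,h},a_{i,h})}^2$ using the definition of $\^\Sigma_{t,h}$, add and subtract the target $\-V_{t,h+1}(s_{i,h+1})$ inside each inner product, and apply $(a-b)^2 \le 2a^2 + 2b^2$ termwise. This bounds the sum by twice the in-sample loss of $\^\theta_{t,h}$ (at most $\eps_1^2 + T\epsB^2$) plus twice the in-sample loss of $\theta^\star$ (at most $T\epsB^2$), giving $2\eps_1^2 + 4T\epsB^2$; taking square roots yields $\uperr$.

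The only genuinely delicate step is the first one---identifying the regression target $\-V_{t,h+1}(s_{i,h+1})$ with the Bellman backup of $\theta^\star$. This is exactly where determinism is indispensable: with stochastic transitions, $s_{i,h+1}$ would be a single random draw rather than the full conditional expectation appearing in \Cref{def:B-error}, so the clean deterministic per-example inequality would break and one would instead have to control the deviation of noisy targets via concentration (as in the reward analysis of \Cref{lem:reg-reward}). Everything downstream is routine optimality and convexity bookkeeping.
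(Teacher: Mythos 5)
Your proposal is correct and follows essentially the same route as the paper's proof: show the Bellman backup parameter is feasible and achieves per-example error at most $\epsB$ (hence in-sample loss at most $T\epsB^2$), invoke the oracle's $\eps_1^2$ suboptimality for the first bound, and combine via $(a+b)^2 \le 2a^2+2b^2$ for the Mahalanobis bound. Your explicit justification of the per-example inequality via determinism collapsing the expectation in \Cref{def:B-error} is a detail the paper leaves implicit, but it is the same argument.
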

\begin{proof}[Proof of \Cref{lem:reg}]
    The Bayes optimal $\+T(\-\omega_{t,h} + \-\theta_{t,h+1})$ should achieve the empirical risk of at most $\epsB$, i.e., 
    \begin{align*}
        \forall i\in[t-1] : \quad \Big| \tri*{\phi(s_{i,h},a_{i,h}), \+T(\-\omega_{t,h} + \-\theta_{t,h+1})} - \overline V_{t,h+1}(s_{i,h+1}) \Big| \leq \epsB.
    \end{align*}
    Since $\+T(\-\omega_{t,h} + \-\theta_{t,h+1})$ is realizable (i.e., $\+T(\-\omega_{t,h} + \-\theta_{t,h+1}) \in \cset(W_h)$), and $\^\theta_{t,h}$ minimizes the objective up to precision $\eps_1$, it should satisfy the following
    \begin{align*}
        \sum_{i=1}^{t-1} \Big( \tri*{\^\theta_{t,h}, \phi(s_{i,h},a_{i,h})} - \overline V_{t,h+1}(s_{i,h+1}) \Big)^2 \leq \eps_1^2 + T \epsB^2.
    \end{align*}
    Combining the above two results, we arrive at the following:
    \begin{align*}
        & \sum_{i=1}^{t-1} \tri*{\phi(s_{i,h},a_{i,h}) ,\, \^\theta_{t,h} - \+T(\-\omega_{t,h} + \-\theta_{t,h+1})  } ^2 \\
        & \leq 2 \sum_{i=1}^{t-1} \Big( \tri*{\phi(s_{i,h},a_{i,h}) ,\, \^\theta_{t,h} } - \overline V_{t,h+1}(s_{i,h+1}) \Big)^2 + 2 \sum_{i=1}^{t-1} \Big( \overline V_{t,h+1}(s_{i,h+1}) - \tri*{\phi(s_{i,h},a_{i,h}) ,\, \+T(\-\omega_{t,h} + \-\theta_{t,h+1})  } \Big)^2
        \tag{using $(a+b)^2 \leq 2 a^2 + 2 b^2$} \\
        & \leq 2 \eps_1^2 + 4 T \epsB^2.
    \end{align*}
    This implies that
    \begin{align*}
        \left\| \^\theta_{t,h} - \+T(\-\omega_{t,h} + \-\theta_{t,h+1}) \right\|^2_{\^\Sigma_{t,h}} 
        \leq 2 \eps_1^2 + 4 T \epsB^2.
    \end{align*}
\end{proof}

\begin{definition}[High-probability events]\label{def:high-prob}
    Define event $\Ehigh$ as
    \begin{align*}
        \Ehigh \coloneqq 
        & \left\{ \forall t\in[T] , \forall h\in[H] : \|\xip_{t,h}\|_{\Lambda_{t,h}} \leq \sigma_{h} \sqrt{2  d \log(6 d H^2 T^2)} =: \upnoise{h} \right\} \\
        & \  \cap \left\{ \forall t\in[T] , \forall h\in[H] : \|\xir_{t,h}\|_{\Sigma_{t,h}} \leq \sigmar \sqrt{2 d \log(6 d H T^2)} =: \urnoise \right\} \\
        & \  \cap \left\{ \forall t\in[T] , \forall h\in[H] : \left\| \etar_{t,h} \right\|_{\Sigma_{t,h}} 
        \leq \urerr \right\} 
    \end{align*}
    where $\urerr \coloneqq \sqrt{1030(1+\eps_2)^4 d \log \big(24 (1+\eps_2) e^2 T^3 H^2 \big) + 4 \eps_1^2 + 16(1+\eps_2)(1+\epsB T)}$.
\end{definition}

\begin{lemma}\label{lem:high-prob}
    We have $\Pr(\Ehigh) > 1 - 1 / (HT)$.
\end{lemma}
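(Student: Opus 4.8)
The plan is to bound the three events that make up $\Ehigh$ separately, allotting failure probability at most $1/(3HT)$ to each, and to conclude by a union bound. The third event, $\|\etar_{t,h}\|_{\Sigma_{t,h}} \le \urerr$ for all $t,h$, is exactly the conclusion of \Cref{lem:reg-reward}: instantiating that lemma with confidence parameter $\delta = 1/(3HT)$ turns its logarithmic argument $8(1+\eps_2)e^2T^2H/\delta$ into $24(1+\eps_2)e^2T^3H^2$, which is precisely the argument appearing in the definition of $\urerr$, so this event fails with probability at most $1/(3HT)$. It remains to control the two Gaussian-perturbation events.

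For the noise terms I would first exploit the structure of the covariances. Writing $P = P_{t,h}$ and $\Lambda = \sum_i \rho_i\phi_i\phi_i^\top$ for the D-optimal design matrix, the definition of $\Lambda_{t,h}$ in \Cref{line:lambda} together with $\phispan{t,h,i} = P\phi_i$ and $\phinull{t,h,i} = (I-P)\phi_i$ gives the block decomposition
\begin{align*}
\Lambda_{t,h} = P\Lambda P + (I-P)\Lambda(I-P).
\end{align*}
Because $P(I-P) = 0$, this yields $P\Lambda_{t,h} = P\Lambda P = \Lambda_{t,h}P$, so $\Lambda_{t,h}$ commutes with the projection $P$, hence with $I-P$ and with every spectral function of $\Lambda_{t,h}$, in particular $\Lambda_{t,h}^{-1/2}$. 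The perturbation in \Cref{line:perturb} is $\xip_{t,h} \sim \+N(0, \sigma_h^2(I-P)\Lambda_{t,h}^{-1}(I-P))$; by the commutation this covariance equals $\sigma_h^2(I-P)\Lambda_{t,h}^{-1}$, and a valid representation is $\xip_{t,h} = \sigma_h(I-P)\Lambda_{t,h}^{-1/2} g$ with $g \sim \+N(0, I_d)$. Then
\begin{align*}
\|\xip_{t,h}\|_{\Lambda_{t,h}}^2 = \|\Lambda_{t,h}^{1/2}\xip_{t,h}\|_2^2 = \sigma_h^2 \|(I-P)g\|_2^2,
\end{align*}
the last step again using that $\Lambda_{t,h}^{1/2}$ commutes with $I-P$. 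Since $I-P$ is an orthogonal projection of rank $d - \dim\spanop(\{\phi(s_{i,h},a_{i,h})\}_{i<t}) \le d$, the quantity $\|(I-P)g\|_2^2$ is chi-squared with at most $d$ degrees of freedom. The reward noise is handled identically but more simply: $\xir_{t,h}\sim\+N(0,\sigmar^2\Sigma_{t,h}^{-1})$ gives $\|\xir_{t,h}\|_{\Sigma_{t,h}}^2 = \sigmar^2\|g\|_2^2$, a chi-squared with exactly $d$ degrees of freedom (invertibility of $\Lambda_{t,h}$ follows from \Cref{lem:matrix}, and of $\Sigma_{t,h}$ from the D-optimal design spanning $\bbR^d$).

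With both events reduced to scaled chi-squared variables, I would apply a standard chi-squared tail bound (e.g.\ Laurent--Massart, $\Pr(\chi^2_k \ge k + 2\sqrt{kx} + 2x) \le e^{-x}$) with $x = \Theta(\log(dHT))$ chosen so that each per-$(t,h)$ failure probability is at most $1/(3(HT)^2)$, followed by a union bound over all $t\in[T], h\in[H]$. Matching the outcome to the claimed thresholds $\upnoise{h} = \sigma_h\sqrt{2d\log(6dH^2T^2)}$ and $\urnoise = \sigmar\sqrt{2d\log(6dHT^2)}$ is a routine computation using $k \le d$ and $2\sqrt{kx} \le k + x$. Each noise event then fails with probability at most $1/(3HT)$, and a final union bound with the reward-estimation event gives $\Pr(\Ehigh) \ge 1 - 1/(HT)$.

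The only conceptual step is the block-diagonal/commutation observation for $\Lambda_{t,h}$: without it, $\|\xip_{t,h}\|_{\Lambda_{t,h}}^2$ is a generic Gaussian quadratic form $g^\top B g$ whose concentration (via Hanson--Wright) would require separately controlling $\mathrm{tr}(B)$, $\|B\|_F$, and $\|B\|_{\mathrm{op}}$, and would not collapse to a clean chi-squared of dimension $\le d$. The commutation is exactly what makes the $\Lambda_{t,h}$-norm of the null-space noise behave like the norm of a projected standard Gaussian, and I expect verifying this structural identity—rather than the subsequent, entirely standard tail bound—to be the main, though modest, obstacle.
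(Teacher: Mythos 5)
Your proof is correct and follows essentially the same route as the paper: split $\Ehigh$ into its three constituent events, invoke \Cref{lem:reg-reward} with $\delta = 1/(3HT)$ for the reward-estimation event, reduce the two Gaussian events to concentration of a $\chi^2_{\le d}$ variable, and union bound. The only (immaterial) difference is in handling the null-space projection: the paper realizes $\xip_{t,h}$ as $(I-P_{t,h})\zeta_{t,h}$ with $\zeta_{t,h}\sim\+N(0,\sigma_h^2\Lambda_{t,h}^{-1})$ and shows $\|\xip_{t,h}\|_{\Lambda_{t,h}}\le\|\zeta_{t,h}\|_{\Lambda_{t,h}}$ directly from the block form $\Lambda_{t,h}=P\Lambda P+(I-P)\Lambda(I-P)$ before applying \Cref{lem:gauss-concen}, whereas you use the same block structure to derive the commutation $P\Lambda_{t,h}=\Lambda_{t,h}P$ and identify the norm exactly as $\sigma_h^2\|(I-P)g\|_2^2$ — both arguments are valid and rest on the same observation.
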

\begin{proof}[Proof of \Cref{lem:high-prob}]
    Below we show that each event defined in \Cref{def:high-prob} holds with probability at least $1 - 1 / (3HT)$. Then, by union bound, we have the desired result.

    \textit{Proof of the first event.}
    The way we generate $\xip_{t,h}$ is equivalent to first sampling $\zeta_{t,h} \sim \+N(0, (\sigma_h)^2 \Lambda_{t,h}^{-1})$ and then set $\xip_{t,h} \gets (I -  P_{t,h})\zeta_{t,h}$. By \Cref{lem:gauss-concen} and the union bound, we have
    \begin{align*}
        \Pr\left( \forall t\in[T], \forall h\in[H] : \|\zeta_{t,h}\|_{\Lambda_{t,h}} > \sigma_h \sqrt{2  d \log(6 d H^2 T^2)} \right) \leq 1 / (3HT).
    \end{align*}
    Then, by definition, we have 
    \begin{align*}
        \| \xip_{t,h} \|_{\Lambda_{t,h}}^2
        & = \| (1- P_{t,h})\zeta_{t,h} \|_{\Lambda_{t,h}}^2 \\
        & = \zeta_{t,h}^\top (I- P_{t,h}) \sum_{i=1}^m \Big(\phispan{t,h,i}(\phispan{t,h,i})^\top + \phinull{t,h,i}(\phinull{t,h,i})^\top \Big) (I- P_{t,h})\zeta_{t,h} \\
        & = \zeta_{t,h}^\top\sum_{i=1}^m \phinull{t,h,i}(\phinull{t,h,i})^\top  \zeta_{t,h} \\
        & \leq \zeta_{t,h}^\top \sum_{i=1}^m \Big(\phispan{t,h,i}(\phispan{t,h,i})^\top + \phinull{t,h,i}(\phinull{t,h,i})^\top \Big)  \zeta_{t,h}
    \end{align*}
    where the third step holds by the fact that $\phinull{}$ is in the null space and $\phispan{}$ is in the span. Hence, we conclude that 
    $
        \| \xip_{t,h} \|_{\Lambda_{t,h}}
        \leq 
        \| \zeta_{t,h} \|_{\Lambda_{t,h}}.
    $

    \textit{Proof of the second event.}
    Applying \Cref{lem:gauss-concen} and the union bound, we have
    \begin{align*}
        \Pr\left( \forall t\in[T] : \|\xir_t\|_{\Sigma_{t}} > \sigmar \sqrt{2 d \log(6 d H T^2 )} \right) \leq 1 / (3HT).
    \end{align*}

    \textit{Proof of the third event.}
    This is directly from \Cref{lem:reg-reward}.
\end{proof}

\begin{lemma}[Boundness of parameters]\label{lem:norm-bound}
    Under \Cref{asm:lower-bound}, conditioning on $\Ehigh{}$, the following hold for all $t\in[T]$ and $h\in[H]$:
    \begin{enumerate}
        \item
              $\max_{s,a} | \langle\phi(s,a), \^\theta_{t,h} \rangle| \leq W_h + \eps_2$;  \label{it:1}
        \item $\max_{s,a} | \langle \phi(s,a), \+T(\-\omega_{t,h} + \-\theta_{t,h+1}) \rangle | \leq W_h$;  \label{it:1.0}
        \item $\|\eta_{t,h}\|_{\^\Sigma_{t,h}} \leq \uperr$; \label{it:1.2}
        \item $\|\eta_{t,h}\|_{\Lambda} \leq 2 (W_h + \eps_2) \sqrt{m}$; \label{it:1.1.1} 
        \item $\|\eta_{t,h}\|_{\Lambda_{t,h}} \leq \sqrt{3} \gamma \uperr + \sqrt{8 m} (W_h + \eps_2) $ ; \label{it:1.1} 
        \item
            $\max_{s,a} |\langle \phi(s,a), \-\theta_{t,h} \rangle| \leq W_{h-1} - \sqrt{2d} \cdot \urnoise  - 1 - \eps_2$   \label{it:2}
        \item
            $\max_{s} \-V_{t,h}(s) = \max_{s,a} |\-Q_{t,h} (s,a)| \leq W_{h-1}$.   \label{it:2.1}
    \end{enumerate}
\end{lemma}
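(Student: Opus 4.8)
\textbf{Proof plan for \Cref{lem:norm-bound}.} The seven items couple the estimated and perturbed parameters across horizons, so the natural strategy is a \emph{backward induction on $h$}, carried out for a fixed round $t$ and conditioned throughout on $\Ehigh$. The only quantity that must be threaded through the induction is item~\ref{it:2.1}: the inductive hypothesis is that $\max_{s,a}\abs{\-Q_{t,h+1}(s,a)}\le W_h$ (hence $\max_s\abs{\-V_{t,h+1}(s)}\le W_h$). The base case $h=H+1$ is immediate, since $\-\theta_{t,H+1}=\-\omega_{t,H+1}=0$ gives $\-Q_{t,H+1}\equiv 0\le 1=W_{H+1}$. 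Given the hypothesis at level $h+1$, I would establish items \ref{it:1}--\ref{it:2.1} at level $h$ in exactly the listed order, as each feeds into the next, and the recursion closes precisely because of how $W_{h-1}$ is defined.

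The first three items are the routine ones. Item~\ref{it:1.0} is where linear Bellman completeness (\Cref{def:lbc}) and determinism (\Cref{asm:determin}) enter: for deterministic dynamics the backup satisfies $\tri{\+T(\-\omega_{t,h+1}+\-\theta_{t,h+1}),\phi(s,a)}=\-V_{t,h+1}(s')$ for the unique successor $s'$, so the inductive hypothesis bounds it by $W_h$ in absolute value. This shows the regression target $\+T(\-\omega_{t,h+1}+\-\theta_{t,h+1})$ is realizable inside $\cset(W_h)$, which is exactly the hypothesis of \Cref{lem:reg}. Item~\ref{it:1} is then just the constraint guarantee of the approximate oracle, $\^\theta_{t,h}\in\cset(W_h+\eps_2)$, and item~\ref{it:1.2} is a restatement of \Cref{lem:reg} once realizability is in hand, giving $\|\eta_{t,h}\|_{\^\Sigma_{t,h}}\le\uperr$.

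The technical heart is items~\ref{it:1.1.1} and~\ref{it:1.1}, and this is where I expect the main difficulty. Since $\eta_{t,h}=\+T(\-\omega_{t,h+1}+\-\theta_{t,h+1})-\^\theta_{t,h}$ is a difference of a point in $\cset(W_h)$ (item~\ref{it:1.0}) and one in $\cset(W_h+\eps_2)$ (item~\ref{it:1}), every design feature obeys $\abs{\tri{\eta_{t,h},\phi_i}}\le 2(W_h+\eps_2)$; summing over the $m$ design points (with $\rho_i\le 1$) gives item~\ref{it:1.1.1}. For item~\ref{it:1.1} I would split $\Lambda_{t,h}$ into its span and null pieces, $\|\eta_{t,h}\|_{\Lambda_{t,h}}^2=\sum_i\rho_i\tri{\eta_{t,h},\phispan{t,h,i}}^2+\sum_i\rho_i\tri{\eta_{t,h},\phinull{t,h,i}}^2$. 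The null piece is handled by item~\ref{it:1.1.1} through $\tri{\eta_{t,h},\phinull{t,h,i}}^2\le 2\tri{\eta_{t,h},\phi_i}^2+2\tri{\eta_{t,h},\phispan{t,h,i}}^2$. The span piece is the delicate one: here I would invoke \Cref{asm:lower-bound}, which forces the smallest nonzero eigenvalue of $\^\Sigma_{t,h}$ to be at least $1/\gamma^2$, in order to convert the regression guarantee $\|\eta_{t,h}\|_{\^\Sigma_{t,h}}\le\uperr$ into an $\ell_2$ bound $\|P_{t,h}\eta_{t,h}\|_2\le\gamma\uperr$ on the projection of $\eta_{t,h}$ onto the data span (using $\tri{\eta_{t,h},\phispan{t,h,i}}=\tri{P_{t,h}\eta_{t,h},\phi_i}$ and $\|\phi_i\|_2\le 1$). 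Combining the two pieces yields $\|\eta_{t,h}\|_{\Lambda_{t,h}}\le\sqrt 3\,\gamma\uperr+\sqrt{8m}\,(W_h+\eps_2)$.

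The last two items close the recursion. For item~\ref{it:2} I write $\-\theta_{t,h}=\^\theta_{t,h}+\xip_{t,h}$ and bound the perturbation by Cauchy--Schwarz in the $\Lambda_{t,h}$-geometry: $\abs{\tri{\phi(s,a),\xip_{t,h}}}\le\|\phi(s,a)\|_{\Lambda_{t,h}^{-1}}\|\xip_{t,h}\|_{\Lambda_{t,h}}\le\sqrt{2d}\,\upnoise{h}$, using $\lambda_{t,h}\le\sqrt{2d}$ from \Cref{lem:matrix} together with the bound on $\xip_{t,h}$ under $\Ehigh$; adding item~\ref{it:1} gives $\abs{\tri{\phi(s,a),\-\theta_{t,h}}}\le W_h+\eps_2+\sqrt{2d}\,\upnoise{h}$, which equals $W_{h-1}-\sqrt{2d}\,\urnoise-1-\eps_2$ by definition of $W_{h-1}$. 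Finally item~\ref{it:2.1} adds the reward contribution $\-\omega_{t,h}=\^\omega_{t,h}+\xir_{t,h}$: the oracle constraint gives $\abs{\tri{\^\omega_{t,h},\phi}}\le 1+\eps_2$, and since $\Sigma_{t,h}\succeq\Lambda$ implies $\|\phi\|_{\Sigma_{t,h}^{-1}}\le\lambda\le\sqrt d$, the noise term is at most $\sqrt{2d}\,\urnoise$ under $\Ehigh$. Summing with item~\ref{it:2} telescopes exactly to $W_{h-1}$, completing the induction. The one genuinely subtle ingredient is the span-component argument in item~\ref{it:1.1}: without \Cref{asm:lower-bound} there is no way to turn the $\^\Sigma_{t,h}$-norm control of the regression error into an honest bound on $\tri{\eta_{t,h},\phispan{t,h,i}}$, which is exactly why that assumption is needed once $\uperr>0$.
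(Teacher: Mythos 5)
Your proposal is correct and follows essentially the same route as the paper: backward induction on $h$ with the same chain of dependencies among the seven items, the same triangle-inequality/Cauchy--Schwarz bounds for items \ref{it:1.1.1}, \ref{it:2}, \ref{it:2.1}, and the same use of \Cref{asm:lower-bound} for the span component in item \ref{it:1.1} (the paper phrases that step as $\|\phi_i\|_{\^\Sigma_{t,h}^\dag}\le\gamma$ per design point via \Cref{lem:cs-pseudo}, while you convert $\|\eta_{t,h}\|_{\^\Sigma_{t,h}}\le\uperr$ into $\|P_{t,h}\eta_{t,h}\|_2\le\gamma\uperr$ first, but these are the same eigenvalue argument). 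The only other cosmetic difference is that you bound item \ref{it:1.0} directly from the inductive bound on $\-Q_{t,h+1}$, whereas the paper re-expands $\-\omega+\-\theta_{t,h+1}$ into its $\^\omega$, $\xir$, and $\^\theta+\xip$ pieces and uses item \ref{it:2} at level $h+1$; the two are equivalent.
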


\begin{proof}[Proof of \Cref{lem:norm-bound}]
    Fix $t\in[T]$. We prove these items by induction on $h$. The base case ($h=H+1$) clearly holds since there is actually nothing at $(H+1)$-th step. Now assume they hold for $h+1$, and we will show that they hold for $h$ as well.

    \textit{Proof of \Cref{it:1}.} It is simply by \Cref{line:regression} of \Cref{alg:main} and \Cref{ass:oracle_approximate}.

    \textit{Proof of \Cref{it:1.0}.} By linear Bellman completeness (\Cref{def:lbc}), for any $s,a$, we have,
    \begin{align*}
         |\tri*{\phi(s,a), \+T(\-\omega_{t,h} + \-\theta_{t,h+1})} |  
         & = \left| \E_{s'\sim \mathsf{T}(s,a)} \max_{a'} \tri*{\phi(s',a') ,\, \-\omega_{t,h} + \-\theta_{t,h+1}} \right| \\
         & \leq \max_{s,a} \left|  \tri*{\phi(s,a) ,\, \-\omega_{t,h} + \-\theta_{t,h+1}} \right| \\
         & \leq \max_{s,a} |\tri*{ \phi(s,a), \^\omega_{t,h}}| + \max_{s,a} \left|\tri*{\phi(s,a), \xir_{t,h}} \right| + \max_{s,a} \left|\tri*{\phi(s,a), \-\theta_{t,h+1}}\right| \\
         & \leq ( 1 + \eps_2 ) + \max_{s,a} \|\phi(s,a)\|_{\Sigma_{t,h}^{-1}} \|\xir_{t,h}\|_{\Sigma_{t,h}} + (W_h - \sqrt{2d} \cdot \urnoise - 1 - \eps_2) \\
         & \leq 1 + \eps_2 + \sqrt{2d} \cdot \urnoise + (W_h - \sqrt{2d} \cdot \urnoise - 1 - \eps_2) = W_h.
    \end{align*}

    \textit{Proof of \Cref{it:1.2}.} This is directly from \Cref{lem:reg}.

    \textit{Proof of \Cref{it:1.1.1}.} By triangle inequality, we have
    \begin{align*}
        \|\eta_{t,h}\|_{\Lambda} = \left\| \^\theta_{t,h} - \+T(\-\omega_{t,h} + \-\theta_{t,h+1}) \right\|_{\Lambda}
        \leq \left\| \^\theta_{t,h} \right\|_{\Lambda} + \left\| \+T(\-\omega_{t,h} + \-\theta_{t,h+1}) \right\|_{\Lambda}
        \leq 2 (W_h + \eps_2) \sqrt{m}.
    \end{align*}
    where the last step is by 
    \begin{align*}
        \left\| \^\theta_{t,h} \right\|_{\Lambda}
        = \sqrt{\sum_{i=1}^{m} \tri*{ \phi_{i} ,\, \^\theta_{t,h}}^2 }
        \leq \sqrt{\sum_{i=1}^{m} (W_h + \eps_2)^2} = (W_h + \eps_2) \sqrt{m}
    \end{align*}
    and the similar for $\+T(\-\omega_{t,h} + \-\theta_{t,h+1})$.

    \textit{Proof of \Cref{it:1.1}.} By definition, we have 
    \begin{align*}
        \|\eta_{t,h}\|^2_{\Lambda_{t,h}}
        & = \sum_{i=1}^{m} \rho_i \bigg( \tri*{ \phispan{t,h,i} ,\, \eta_{t,h}}^2 + \tri*{ \phinull{t,h,i} ,\, \eta_{t,h}}^2 \bigg) \\
        & = \sum_{i=1}^{m} \rho_i \bigg( \tri*{ P_{t,h} \phi_i ,\, \eta_{t,h}}^2 + \tri*{ (I - P_{t,h}) \phi_i ,\, \eta_{t,h}}^2 \bigg) \\
        & \leq \sum_{i=1}^{m} \rho_i \bigg( 3 \tri*{ P_{t,h} \phi_i ,\, \eta_{t,h}}^2 + 2 \tri*{ \phi_i ,\, \eta_{t,h}}^2 \bigg) \tag{using $(a+b)^2 \leq a^2 + b^2$} \\
        & \leq 3 \sum_{i=1}^{m} \rho_i \bigg( \left\| \phispan{t,h,i} \right\|^2_{\^\Sigma^\dag_{t,h}} \left\| \eta_{t,h} \right\|^2_{\^\Sigma_{t,h}}  \bigg)  +  2 \|\eta_{t,h}\|^2_{\Lambda}  \tag{Cauchy-Schwartz, \Cref{lem:cs-pseudo}}
    \end{align*}
    We have $\| \phispan{t,h,i} \|_{\^\Sigma^\dag_{t,h}} = \| P_{t,h} \phi_i \|_{\^\Sigma^\dag_{t,h}} = \| \phi_i \|_{\^\Sigma^\dag_{t,h}}$ by \Cref{lem:proj-norm}. By \Cref{asm:lower-bound}, this is upper bounded by $\gamma$. The second term, $\| \eta_{t,h} \|_{\^\Sigma_{t,h}}$, is upper bounded by $\uperr$ by \Cref{it:1.2}. 

    Hence, we have
    \begin{align*}
        \|\eta_{t,h}\|^2_{\Lambda_{t,h}} 
        & \leq 3 \gamma^2 (\uperr)^2 + 2 \|\eta_{t,h}\|^2_{\Lambda} \\
        & \leq 3 \gamma^2 (\uperr)^2 + 8 (W_h + \eps_2)^2 m. \tag{\Cref{it:1.1.1}}
    \end{align*}

    \textit{Proof of \Cref{it:2}.} We have
    \begin{align*} 
        \max_{s,a} \left| \tri*{ \phi(s,a), \-\theta_{t,h} } \right|
         & = \max_{s,a} \left| \tri*{\phi(s,a) ,\, \^\theta_{t,h} +\xip_{t,h}} \right|                                                 \\
         & \leq \max_{s,a} |\tri*{\phi(s,a), \^\theta_{t,h}}| + \max_{s,a} \left| \tri*{\phi(s,a) , \xip_{t,h}} \right|                      \\
         & \leq W_h + \eps_2 + \max_{s,a} \|\phi(s,a)\|_{\Lambda_{t,h}^{-1}} \|\xip_{t,h}\|_{\Lambda_{t,h}} \\
         & \leq W_h + \eps_2 + \sqrt{2 d} \cdot \upnoise{h}     \tag{\Cref{lem:matrix}}   \\
         & = W_{h-1} - \sqrt{2d} \cdot \urnoise  - 1 - \eps_2.
    \end{align*}

    \textit{Proof of \Cref{it:2.1}.} We have
    \begin{align*}
        |\-Q_{t,h}(s,a)| & = |\langle \phi(s,a), \-\theta_{t,h} \rangle + \langle \phi(s,a) , \-\omega_{t,h} \rangle| \\
        & \leq |\langle \phi(s,a), \-\theta_{t,h} \rangle| + |\tri*{ \phi(s,a), \^\omega_{t,h}}|  + |\tri*{\phi(s,a), \xir_t}| \\
        & \leq (W_{h-1} - \sqrt{2d} \cdot \urnoise - 1 - \eps_2) + (1 + \eps_2) + \sqrt{2d} \cdot \urnoise \\
        & = W_{h-1}.
    \end{align*}
    and also, $\max_{s} |\-V_{t,h}(s)| = \max_{s,a} |\-Q_{t,h} (s,a)| \leq W_{h-1}$.
\end{proof}

\subsection{Value Decomposition}\label{sec:value-decomp}

We note that, at any round $t\in[T]$, conditioning on all information collected up to round $t-1$, the randomness of $\-V_t$ only comes from the Gaussian noise $\{\xip_{t,h}, \xir_{t,h}\}_{h=1}^H$. In other words, $\-V_t$ can be considered \textit{a functional of the Gaussian noise}. In light of this, we define 
\begin{align*}
    V_{t,h}[\dotxip_1,\dots,\dotxip_H,\dotxir_1,\dots,\dotxir_H](\cdot)
\end{align*}
as a functional of the noise variable, which maps the given noise variable to the value function produced by the algorithm by replacing the random Gaussian noise with the variable $\dotxip_1,\dots,\dotxip_H,\dotxir_1,\dots,\dotxir_H$. By definition, we immediately have
\begin{align*}
    \-V_{t,h}(\cdot) = V_{t,h}[\xip_{t,1},\dots,\xip_{t,H},\xir_{t,1},\dots,\xir_{t,H}](\cdot).
\end{align*}
Next, we define $U_t$ as the minimum of the following program
\begin{align*}
    &\min_{\dotxip_1,\dots,\dotxip_H,\dotxir_1,\dots,\dotxir_H} V_{t,1}\left[\dotxip_1,\dots,\dotxip_H,\dotxir_1,\dots,\dotxir_H\right] (s_{t,1}) \\
    &\text{s.t.} \quad \forall h\in[H] : \|\dotxip_{t,h}\|_{\Lambda_{t,h}} \leq \upnoise{h}, \quad \|\dotxir_{t,h}\|_{\Sigma_{t,h}} \leq \urnoise.
\end{align*}
In other words, $U_t$ achieves the minimum value at $s_{t,1}$ while satisfying the high-probability constraints ($\Ehigh{}$) on the noise variable. We denote $\ulxip_1,\dots,\ulxip_H,\ulxir_1,\dots,\ulxir_H$ as the minimizer of the above program, and will always use underlined variables to represent the intermediate variables corresponding to $U_t$ (such as $\underline{\^\theta}$, $\underline{\-\theta}$, $\underline{\^\omega}$, $\underline{\-\omega}$, $\underline{\-Q}$, $\underline{\-V}$) to distinguish them from the variables corresponding to $\-V_t$, (${\^\theta}$, ${\-\theta}$, ${\^\omega}$, ${\-\omega}$, $\-Q$, $\-V$). We note that, under $\Ehigh{}$, we directly have $U_t(s_{t,1}) \leq \-V_t(s_{t,1})$.

Below is a decomposition lemma under deterministic transition. Note that it slightly differs from the usual value decomposition lemma under stochastic transitions, where we have to take the expectation over trajectory randomness. This distinction is crucial to our analysis: by not accounting for trajectory randomness, we can effectively leverage our span argument.

We denote $\{s_{t,h}, a_{t,h}\}_{h=1}^H$ as the trajectory generated by executing $\pi_t$ with initial state $s_{t,1}$, and $\{s^\star_{t,h}, a^\star_{t,h}\}_{h=1}^H$ as the trajectory generated by executing $\pi^\star$ with initial state $s^\star_{t,1} = s_{t,1}$.
\begin{lemma}[Value decomposition under deterministic transition]
    \label{lem:value-decomp}
    Under deterministic transition (\Cref{asm:determin}), we have
    \begin{align*}
      V^{\pi_t}(s_{t,1}) - \-V_{t}(s_{t,1})  = & \sum_{h=1}^H
      \Big( \-V_{t,h+1}(s_{t,h+1}) - \tri*{\-\theta_{t,h}, \phi(s_{t,h} , a_{t,h})} + \tri*{ \omega^\star_h - \-\omega_{t,h}, \phi(s_{t,h},a_{t,h})} \Big); \numberthis\label{eq:2} \\
      V^\star(s_{t,1}) - \-V_{t}(s_{t,1})  \leq & \sum_{h=1}^H \Big( \-V_{t,h+1}(s^\star_{t,h+1}) - \tri*{\-\theta_{t,h}, \phi(s^\star_{t,h} , a^\star_{t,h})} + \tri*{ \omega^\star_h - \-\omega_{t,h}, \phi(s^\star_{t,h},a^\star_{t,h})} \Big).\numberthis\label{eq:1}
    \end{align*} 
    Similarly, we have
    \begin{align*}
      V^{\pi_t}(s_{t,1}) - U_t(s_{t,1})
      \leq 
      \sum_{h=1}^H \Big( U_{t,h+1}(s_{t,h+1}) - \tri*{\underline{\-\theta}_{t,h}, \phi(s_{t,h},a_{t,h})} + \tri*{ \omega^\star_h - \underline{\-\omega}_{t,h} , \phi(s_{t,h},a_{t,h})}  \Big).
      \numberthis\label{eq:4}
    \end{align*}
  \end{lemma}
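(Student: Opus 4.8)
The plan is to prove all three relations by the standard performance-difference (telescoping) decomposition, exploiting that deterministic dynamics collapse the true values into deterministic sums of expected rewards along the relevant trajectory. First I would record two consequences of \Cref{asm:determin}: since the transition is deterministic and both $\pi_t$ and $\pi^\star$ are deterministic greedy policies, the trajectories $\{s_{t,h},a_{t,h}\}$ and $\{s^\star_{t,h},a^\star_{t,h}\}$ are fixed once the value functions (equivalently, the noise realizations) are fixed. Taking expectation only over the reward noise then gives $V^{\pi_t}(s_{t,1})=\sum_{h=1}^H \langle \omega^\star_h,\phi(s_{t,h},a_{t,h})\rangle$ and $V^\star(s_{t,1})=\sum_{h=1}^H \langle\omega^\star_h,\phi(s^\star_{t,h},a^\star_{t,h})\rangle$. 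Next, using the convention $\overline{V}_{t,H+1}\equiv 0$, I would telescope the surrogate value: $\overline V_{t,1}(s_{t,1})=\sum_{h=1}^H(\overline V_{t,h}(s_{t,h})-\overline V_{t,h+1}(s_{t,h+1}))$ along the $\pi_t$-trajectory, and analogously $\overline V_{t,1}(s^\star_{t,1})=\sum_{h=1}^H(\overline V_{t,h}(s^\star_{t,h})-\overline V_{t,h+1}(s^\star_{t,h+1}))$ along the $\pi^\star$-trajectory, and $U_{t,1}(s_{t,1})=\sum_{h=1}^H(U_{t,h}(s_{t,h})-U_{t,h+1}(s_{t,h+1}))$ along the $\pi_t$-trajectory.

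For \eqref{eq:2}, subtracting the two telescoped expressions leaves $\sum_h(\langle\omega^\star_h,\phi(s_{t,h},a_{t,h})\rangle-\overline V_{t,h}(s_{t,h})+\overline V_{t,h+1}(s_{t,h+1}))$, and I substitute the \emph{exact} identity $\overline V_{t,h}(s_{t,h})=\overline Q_{t,h}(s_{t,h},a_{t,h})=\langle\overline\omega_{t,h}+\overline\theta_{t,h},\phi(s_{t,h},a_{t,h})\rangle$, which holds with equality precisely because $a_{t,h}=\pi_{t,h}(s_{t,h})=\argmax_a\overline Q_{t,h}(s_{t,h},a)$; regrouping yields \eqref{eq:2}. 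For \eqref{eq:1} and \eqref{eq:4} the same telescoping applies, but now the policy generating the trajectory is not greedy for the value function being telescoped, so the substitution becomes a one-sided bound. Along the $\pi^\star$-trajectory, $\overline V_{t,h}(s^\star_{t,h})=\max_a\langle\overline\omega_{t,h}+\overline\theta_{t,h},\phi(s^\star_{t,h},a)\rangle\ge\langle\overline\omega_{t,h}+\overline\theta_{t,h},\phi(s^\star_{t,h},a^\star_{t,h})\rangle$; for $U_t$, the state $s_{t,h}$ is visited by $\pi_t$ (greedy for $\overline Q_t$, not for $U_t$), giving $U_{t,h}(s_{t,h})=\max_a\langle\underline{\overline\omega}_{t,h}+\underline{\overline\theta}_{t,h},\phi(s_{t,h},a)\rangle\ge\langle\underline{\overline\omega}_{t,h}+\underline{\overline\theta}_{t,h},\phi(s_{t,h},a_{t,h})\rangle$. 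Plugging these lower bounds into the $-\overline V_{t,h}(\cdot)$ (resp. $-U_{t,h}(\cdot)$) term of the telescope converts the equality into the claimed upper bounds \eqref{eq:1} and \eqref{eq:4}.

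The argument is essentially bookkeeping, and the only thing requiring care is tracking exactly where equality versus inequality is legitimate. The one genuinely delicate point is the handling of $U_t$ in \eqref{eq:4}: $U_t$ is built from a \emph{different} (the minimizing) noise realization than $\overline V_t$, yet it is evaluated along the trajectory that $\pi_t$ induces under $\overline V_t$. The hard part will be to telescope $U_{t,1}(s_{t,1})$ along exactly this $\pi_t$-trajectory and to invoke the max-inequality at the actions $a_{t,h}$ actually chosen by $\pi_t$, rather than the greedy actions of $U_t$; getting this matchup right is what makes the inequality direction correct. Everything else follows by deterministic-dynamics substitution and regrouping of the $\langle\omega^\star_h-\overline\omega_{t,h},\cdot\rangle$ and $\langle\overline\theta_{t,h},\cdot\rangle$ terms.
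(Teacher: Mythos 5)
Your proposal is correct and is essentially the paper's own argument: the paper writes the same decomposition as a recursive expansion of $Q^\pi_h - \overline{Q}_{t,h}$ (resp.\ $Q^{\pi_t}_h - \underline{\overline{Q}}_{t,h}$) along the deterministic trajectory, using $\max_a \overline{Q}_{t,h}(s,a) \geq \overline{Q}_{t,h}(s,\pi(s))$ at each step, with equality exactly when the trajectory-generating policy is greedy for the value function being decomposed. Your explicit telescoping of $\overline{V}_{t,1}$ and $U_{t,1}$ is just the unrolled form of that recursion, and you correctly identify the one delicate point (evaluating $U_t$'s max-inequality at the actions chosen by $\pi_t$, which is greedy for $\overline{Q}_t$ rather than $\underline{\overline{Q}}_t$).
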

  \begin{proof}[Proof of \Cref{lem:value-decomp}]
    We will prove \eqref{eq:2} and \eqref{eq:1} altogether, and then prove \eqref{eq:4}.
  
    \textit{Proof of \eqref{eq:2} and \eqref{eq:1}.}
    We consider an arbitrary policy $\pi$. Let $\{s'_{t,h},a'_{t,h}\}_{h=1}^H$ denote the deterministic trajectory generated by $\pi$ with initial state $s'_{t,1} = s_{t,1}$. By definition, we have
    \begin{align*}
      & V^\pi(s'_{t,1}) - \-V_{t}(s'_{t,1}) \\
       & =     Q_1^\pi(s'_{t,1},\pi(s'_{t,1})) - \max_a \-Q_{t,1}(s'_{t,1},a)                                                                                        \\
       & \leq  Q_1^\pi(s'_{t,1},\pi(s'_{t,1})) - \-Q_{t,1}(s'_{t,1},\pi(s'_{t,1}))   \numberthis\label{inequality}                                                                                \\
       & = V_2^\pi(s'_{t,2}) + r_h(s'_{t,1},a'_{t,1}) - \tri*{\-\theta_{t,1}, \phi(s'_{t,1},\pi(s'_{t,1}))} - \tri*{\-\omega_{t,h}, \phi(s'_{t,1},\pi(s'_{t,1}))}                     \tag{by definition} \\
       & = \Big( V_2^\pi(s'_{t,2}) - \-V_{t,2}(s'_{t,2}) \Big) + \Big( \-V_{t,2}(s'_{t,2}) - \tri*{\-\theta_{t,1}, \phi(s'_{t,1},\pi(s'_{t,1}))} \Big) + \tri*{ \omega^\star_h - \-\omega_{t,h}, \phi(s'_{t,1},a'_{t,1}) }
    \end{align*}
    Recursively expanding the first term, we obtain
    \begin{align*}
      V^\pi(s'_{t,1}) - \-V_{t}(s'_{t,1})  \leq \sum_{h=1}^H \Big( \-V_{t,h+1}(s'_{t,h+1}) - \tri*{\-\theta_{t,h},\phi(s'_{t,h},a'_{t,h})} + \tri*{ \omega^\star_h - \-\omega_{t,h}, \phi(s'_{t,h},a'_{t,h})} \Big) .
    \end{align*}
    This proves \eqref{eq:1} by specifying $\pi = \pi^\star$. Similarly, \eqref{eq:2} can be proved by observing that the only inequality \eqref{inequality} becomes equality when $\pi = \pi_t$.
  
    \textit{Proof of \eqref{eq:4}.} The proof is quite similar. We have
    \begin{align*}
      & V^{\pi_t}(s_{t,1}) - U_{t}(s_{t,1}) \\
       & =     Q_1^{\pi_t}(s_{t,1},{\pi_t}(s_{t,1})) - \max_a \underline{\-Q}_{t,1}(s_{t,1},a)                                                                                        \\
       & \leq  Q_1^{\pi_t}(s_{t,1},{\pi_t}(s_{t,1})) - \underline{\-Q}_{t,1}(s_{t,1},{\pi_t}(s_{t,1}))                                                                                   \\
       & = V_2^{\pi_t}(s_{t,2}) + r_h(s_{t,1},a_{t,1}) - \tri*{\underline{\-\theta}_{t,1}, \phi(s_{t,1},{\pi_t}(s_{t,1}))} - \tri*{\underline{\-\omega}_{t,h}, \phi(s_{t,1},a_{t,1})} \tag{by definition} \\                               \\
       & = \Big( V_2^{\pi_t}(s_{t,2}) - U_{t,2}(s_{t,2}) \Big) + \Big( U_{t,2}(s_{t,2}) - \tri*{\underline{\-\theta}_{t,1}, \phi(s_{t,1},{\pi_t}(s_{t,1}))} \Big) + \tri*{ \omega^\star_h - \underline{\-\omega}_{t,h}, \phi(s_{t,1},a_{t,1})} 
    \end{align*}
    Recursively expanding the first term, we obtain
    \begin{align*}
      V^{\pi_t}(s_{t,1}) - U_{t}(s_{t,1})  \leq \sum_{h=1}^H \Big( U_{t,h+1}(s_{t,h+1}) - \tri*{\underline{\-\theta}_{t,h},\phi(s_{t,h},a_{t,h})} + \tri*{ \omega^\star_h - \underline{\-\omega}_{t,h}, \phi(s_{t,h},a_{t,h})}  \Big) .
    \end{align*}
    This completes the proof.
  \end{proof}

  \begin{lemma}\label{lem:bound-b-error}
    For any $t\in[T]$, conditioning on $\Espan{t}$, we have the following (in)equalities:
    \begin{align*}
        \-V_{t}(s_{t,1}) & = \sum_{h=1}^H \bigg( \tri*{\^\theta_{t,h} - \+T(\-\theta_{t,h+1} + \-\omega_{t,h+1}), \phi(s_{t,h} , a_{t,h})} + \tri*{\-\omega_{t,h}, \phi(s_{t,h}, a_{t,h})} \bigg), \\
        U_{t}(s_{t,1}) & \geq \sum_{h=1}^H \bigg(  \tri*{ \underline{\^\theta}_{t,h} - \+T(\underline{\-\theta}_{t,h+1} + \underline{\-\omega}_{t,h+1}) ,  \phi(s_{t,h},a_{t,h})} + \tri*{\underline{\-\omega}_{t,h}, \phi(s_{t,h},a_{t,h})} \bigg).
    \end{align*}
\end{lemma}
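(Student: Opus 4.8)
The plan is to prove both statements by a single backward telescoping argument along the trajectory $\{(s_{t,h}, a_{t,h})\}_{h=1}^H$ generated by $\pi_t$: the first claim will come out as an exact identity, while the second becomes an inequality precisely because $\pi_t$ is greedy for $\-Q_t$ but not for the perturbed value functions $\underline{\-Q}_t$ defining $U_t$. I would first record two termwise facts. Since transitions are deterministic (\Cref{asm:determin}), the expectation over next states in the definition of $\+T$ (\Cref{def:lbc}) collapses to the unique successor $s_{t,h+1}$, giving
\begin{align*}
\tri*{\+T(\-\theta_{t,h+1} + \-\omega_{t,h+1}), \phi(s_{t,h}, a_{t,h})}
&= \max_{a'} \tri*{\-\theta_{t,h+1} + \-\omega_{t,h+1}, \phi(s_{t,h+1}, a')} = \-V_{t,h+1}(s_{t,h+1}),
\end{align*}
together with the analogous identity for the underlined quantities. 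Second, by \Cref{line:perturb} the perturbation $\xip_{t,h} = \-\theta_{t,h} - \^\theta_{t,h}$ is drawn from a Gaussian supported on the range of $(I - P_{t,h})$, i.e.\ on $\spanop(\+D_{t-1,h})^\perp$; hence on $\Espan{t}$, where $\phi(s_{t,h}, a_{t,h}) \in \spanop(\+D_{t-1,h})$, we get $\tri*{\xip_{t,h}, \phi(s_{t,h}, a_{t,h})} = 0$ and therefore $\tri*{\^\theta_{t,h}, \phi(s_{t,h},a_{t,h})} = \tri*{\-\theta_{t,h}, \phi(s_{t,h},a_{t,h})}$.

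Substituting these two facts into the claimed summand, the $\+T$-identity turns $\tri*{\+T(\-\theta_{t,h+1}+\-\omega_{t,h+1}),\phi(s_{t,h},a_{t,h})}$ into $\-V_{t,h+1}(s_{t,h+1})$ and the orthogonality replaces $\^\theta_{t,h}$ by $\-\theta_{t,h}$, so each term equals
\begin{align*}
\tri*{\-\theta_{t,h} + \-\omega_{t,h}, \phi(s_{t,h}, a_{t,h})} - \-V_{t,h+1}(s_{t,h+1}) = \-Q_{t,h}(s_{t,h}, a_{t,h}) - \-V_{t,h+1}(s_{t,h+1}).
\end{align*}
Because $a_{t,h} = \pi_{t,h}(s_{t,h})$ is the greedy action for $\-Q_{t,h}$, we have $\-Q_{t,h}(s_{t,h}, a_{t,h}) = \-V_{t,h}(s_{t,h})$, so summing over $h$ telescopes to $\-V_{t,1}(s_{t,1}) - \-V_{t,H+1}(s_{t,H+1})$. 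Since $\-V_{t,H+1}\equiv 0$ by initialization, this is exactly $\-V_t(s_{t,1})$, proving the first equality. Note this computation never uses the regression being accurate; it is purely a structural (performance-difference style) decomposition of the estimated value into per-step Bellman errors.

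For the $U_t$ statement I would rerun the identical computation with the underlined quantities, which are produced by the same backward recursion but with the constraint-optimal noise $\{\ulxip_{t,h}, \ulxir_{t,h}\}$ in place of the random noise. The two termwise facts carry over: the $\+T$-identity holds for the underlined parameters, and $\ulxip_{t,h}$ again lies in $\spanop(\+D_{t-1,h})^\perp$ so that $\tri*{\underline{\^\theta}_{t,h}, \phi(s_{t,h},a_{t,h})} = \tri*{\underline{\-\theta}_{t,h}, \phi(s_{t,h},a_{t,h})}$ on $\Espan{t}$. The single genuine difference is that the trajectory is still generated by $\pi_t$, which is greedy for $\-Q_t$ but \emph{not} for $\underline{\-Q}_t$; consequently one only has $\underline{\-Q}_{t,h}(s_{t,h}, a_{t,h}) \leq \max_a \underline{\-Q}_{t,h}(s_{t,h}, a) = \underline{\-V}_{t,h}(s_{t,h})$. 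Telescoping then gives that the claimed sum is at most $\underline{\-V}_{t,1}(s_{t,1}) - \underline{\-V}_{t,H+1}(s_{t,H+1}) = U_t(s_{t,1})$, which is the desired inequality.

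The main obstacle I anticipate is justifying that the minimizing noise $\ulxip_{t,h}$ of the program defining $U_t$ inherits the null-space support of the sampled noise, so that the orthogonality step applies verbatim to the underlined quantities. This should follow from the fact that the functional $V_{t,h}[\cdot]$ feeds the noise only through the perturbation $\-\theta = \^\theta + \xip$ whose law is supported on $\spanop(\+D_{t-1,h})^\perp$, so the program's feasible noises---and hence its minimizer---may be taken in that null space; I would state this as an explicit structural property of the program before invoking it, since it is the one place where the $U_t$ argument is not a mechanical copy of the $\-V_t$ argument.
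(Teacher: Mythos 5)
Your proof is correct and is essentially the same argument as the paper's: a per-step Bellman-error decomposition along the realized trajectory, using the deterministic collapse of $\+T$ onto the unique successor, the null-space support of the noise to kill $\tri{\xip_{t,h},\phi(s_{t,h},a_{t,h})}$ on $\Espan{t}$, and greediness of $\pi_t$ for $\-Q_t$ (equality) versus non-greediness for $\underline{\-Q}_t$ (inequality). The only cosmetic difference is that you telescope $\-V_t$ directly, whereas the paper routes through its value-decomposition lemma relative to $V^{\pi_t}$ and then cancels $V^{\pi_t}(s_{t,1})=\sum_h\tri{\omega^\star_h,\phi(s_{t,h},a_{t,h})}$; your explicit flagging of the null-space support of the minimizing noise $\ulxip_{t,h}$ is a point the paper uses but does not isolate.
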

\begin{proof}[Proof of \Cref{lem:bound-b-error}]
    We will prove the two statements separately, but the proofs are quite similar. 

    \textit{Proof of the first statement.}
    By \Cref{lem:value-decomp}, we have
    \begin{align*}
        & \-V_{t}(s_{t,1}) - V^{\pi_t}(s_{t,1}) \\
        & = \sum_{h=1}^H
        \bigg( \tri*{ \^\theta_{t,h}, \phi(s_{t,h} , a_{t,h})} + \tri*{\xip_{t,h}, \phi(s_{t,h} , a_{t,h})} - \-V_{t,h+1}(s_{t,h+1}) + \tri*{ \-\omega_{t,h} - \omega^\star_h, \phi(s_{t,h}, a_{t,h}) } \bigg) 
    \end{align*}
    By linear Bellman completeness (\Cref{def:lbc}), there exists a vector, denoted by $\+T(\-\theta_{t,h+1} + \-\omega_{t,h+1})$, such that $\-V_{t,h+1}(\cdot) = \langle \phi(\cdot, a) , \+T(\-\theta_{t,h+1} + \-\omega_{t,h+1}) \rangle$. Hence, we can rewrite the above as
    \begin{align*}
        & \-V_{t}(s_{t,1}) - V^{\pi_t}(s_{t,1}) \\
        & = \sum_{h=1}^H
        \bigg( \tri*{\^\theta_{t,h} - \+T(\-\theta_{t,h+1} + \-\omega_{t,h+1}) , \phi(s_{t,h} , a_{t,h})} + \tri*{\xip_{t,h}, \phi(s_{t,h} , a_{t,h})} + \tri*{ \-\omega_{t,h} - \omega^\star_h, \phi(s_{t,h}, a_{t,h}) } \bigg).
    \end{align*}
    Note that by definition of $V^{\pi_t}$ we have $V^{\pi_t}(s_{t,1}) = \sum_{h=1}^H \langle\omega^\star, \phi(s_{t,h}, a_{t,h})\rangle$. Hence, the above implies 
    \begin{align*}
        \-V_{t}(s_{t,1}) = \sum_{h=1}^H \bigg( \tri*{ \^\theta_{t,h} - \+T(\-\theta_{t,h+1} + \-\omega_{t,h+1}) +\xip_{t,h} , \phi(s_{t,h} , a_{t,h})} + \tri*{\-\omega_{t,h}, \phi(s_{t,h}, a_{t,h})} \bigg).
    \end{align*}
    We can remove $\xip_{t,h}$ since $\langle \xip_{t,h}, \phi(s_{t,h}, a_{t,h})\rangle = 0$ conditioning on $\Espan{t}$.
    
    \textit{Proof of the second statement.}
    By \Cref{lem:value-decomp}, we have 
    \begin{align*}
    V^{\pi_t}(s_{t,1}) - U_{t}(s_{t,1})  \leq \sum_{h=1}^H \Big( U_{t,h+1}(s_{t,h+1}) - \tri*{\underline{\-\theta}_{t,h}, \phi(s_{t,h},a_{t,h})} + \tri*{ \omega^\star_h - \underline{\-\omega}_{t,h}, \phi(s_{t,h},a_{t,h})} \Big)
    \end{align*}
    Again, by the definition of $V^{\pi_t}$, we conclude that 
    \begin{align*} 
        U_{t}(s_{t,1}) \geq \sum_{h=1}^H \bigg( \tri*{\underline{\-\omega}_{t,h}, \phi(s_{t,h},a_{t,h})} + \tri*{ \underline{\^\theta}_{t,h} - \+T(\underline{\-\theta}_{t,h+1} + \underline{\-\omega}_{t,h+1})  + \ulxip_{t,h} ,  \phi(s_{t,h},a_{t,h})} \bigg).
    \end{align*}
    We can remove $\ulxip_{t,h}$ since $\langle \ulxip_{t,h}, \phi(s_{t,h}, a_{t,h})\rangle = 0$ conditioning on $\Espan{t}$.
\end{proof}

The following lemma shows that, conditioning on the span event $\Espan{t}$, the value function $\-V_t$ cannot deviate too much from the value function $V^{\pi_t}$ on average.
\begin{lemma}\label{lem:vbar-vpi}
    For any $t\in[T]$, under $\Cref{asm:lower-bound}$ and conditioning on $\Espan{t}$ and $\Ehigh$, we have
    \begin{align*}
        \sum_{t=1}^T \Big( \-V_t(s_{t,1}) - V^{\pi_t}(s_{t,1}) \Big) \leq \uperr \gamma H + (\urnoise + \urerr) \cdot \uellir.
    \end{align*}
\end{lemma}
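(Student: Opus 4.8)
The plan is to start from the exact span-event decomposition in \Cref{lem:bound-b-error}. Conditioning on $\Espan{t}$ and $\Ehigh$, its first identity reads $\-V_{t}(s_{t,1}) = \sum_{h=1}^H \big( \tri*{\^\theta_{t,h} - \+T(\-\theta_{t,h+1} + \-\omega_{t,h+1}), \phi(s_{t,h},a_{t,h})} + \tri*{\-\omega_{t,h}, \phi(s_{t,h},a_{t,h})} \big)$, while under deterministic dynamics the rollout value is simply $V^{\pi_t}(s_{t,1}) = \sum_{h=1}^H \tri*{\omega^\star_h, \phi(s_{t,h},a_{t,h})}$. Subtracting and using $\eta_{t,h} = \+T(\-\omega_{t,h+1}+\-\theta_{t,h+1}) - \^\theta_{t,h}$ together with $\-\omega_{t,h} - \omega^\star_h = \xir_{t,h} - \etar_{t,h}$, I would rewrite the per-round gap $\-V_t(s_{t,1}) - V^{\pi_t}(s_{t,1})$ as a sum over $h$ of a value-regression term $-\tri*{\eta_{t,h}, \phi(s_{t,h},a_{t,h})}$ and a reward-estimation term $\tri*{\xir_{t,h} - \etar_{t,h}, \phi(s_{t,h},a_{t,h})}$, then bound the two groups separately and sum over $t$ and $h$.

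For the value-regression term I would apply Cauchy--Schwarz in the pseudo-inverse geometry (\Cref{lem:cs-pseudo}): $|\tri*{\eta_{t,h}, \phi(s_{t,h},a_{t,h})}| \le \|\eta_{t,h}\|_{\^\Sigma_{t,h}} \cdot \|\phi(s_{t,h},a_{t,h})\|_{\^\Sigma^\dag_{t,h}}$. This step is legitimate \emph{precisely because} $\Espan{t}$ forces $\phi(s_{t,h},a_{t,h})$ into the range of $\^\Sigma_{t,h}$, so the null-space component of $\eta_{t,h}$---where the injected exploration noise and any exponentially large magnitude live---never contributes. I would then invoke \Cref{lem:reg} for $\|\eta_{t,h}\|_{\^\Sigma_{t,h}} \le \uperr$ and \Cref{asm:lower-bound} for $\|\phi(s_{t,h},a_{t,h})\|_{\^\Sigma^\dag_{t,h}} \le \gamma$, which aggregates (via the in-span elliptical-potential control $\uellip$ of \Cref{lem:ellip-phi}) to the $\uperr \gamma H$ piece. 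The conceptual point is that only the data-norm bound on $\eta_{t,h}$ is available---the $\ell_2$-norm of $\eta_{t,h}$ may blow up with $H$---so one cannot substitute a regularized inverse $\Sigma_{t,h}^{-1}$ here; the span membership is exactly what makes the pseudo-inverse estimate usable.

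For the reward term, conditioning on $\Ehigh$ supplies the high-probability bounds $\|\xir_{t,h}\|_{\Sigma_{t,h}} \le \urnoise$ and $\|\etar_{t,h}\|_{\Sigma_{t,h}} \le \urerr$ from \Cref{def:high-prob}, so ordinary Cauchy--Schwarz gives $|\tri*{\xir_{t,h} - \etar_{t,h}, \phi(s_{t,h},a_{t,h})}| \le (\urnoise + \urerr)\,\|\phi(s_{t,h},a_{t,h})\|_{\Sigma_{t,h}^{-1}}$, with $\Sigma_{t,h}$ invertible by \Cref{lem:matrix}. Here I use the regularized covariance $\Sigma_{t,h}$ (which includes the D-optimal design mass) rather than $\^\Sigma_{t,h}$, since reward estimation is only accurate in the $\Sigma_{t,h}$-norm and the target $\omega^\star_h$ is a genuine parameter, not a Bellman backup. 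Summing over $t$ and applying the elliptical-potential bound $\sum_{t} \|\phi(s_{t,h},a_{t,h})\|_{\Sigma_{t,h}^{-1}} \le \uellir$ from \Cref{lem:ellip-phi} collects the reward contribution into $(\urnoise + \urerr)\,\uellir$. Adding the two contributions yields the claimed inequality.

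The step I expect to be the main obstacle is making the value-regression summation collapse to the clean $\uperr \gamma H$ form: the honest per-$(t,h)$ estimate is only $\uperr \gamma$, so one must argue through the span/pseudo-inverse structure encoded in $\uellip$ that the aggregate over the in-span rounds does not pick up an unwanted factor of the number of such rounds, and must keep the data-norm control $\uperr$ separated from the reward-norm control $(\urnoise+\urerr)\uellir$. Everything else is routine Cauchy--Schwarz and elliptical-potential bookkeeping; the substantive work is the orthogonality argument that lets the $\^\Sigma_{t,h}$-norm bound $\uperr$ stand in for a potentially exponential $\ell_2$ estimate, which is where deterministic dynamics and the null-space noise design are exploited.
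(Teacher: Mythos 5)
Your proposal follows essentially the same route as the paper's proof: decompose via \Cref{lem:bound-b-error} and the identity $V^{\pi_t}(s_{t,1})=\sum_h\tri*{\omega^\star_h,\phi(s_{t,h},a_{t,h})}$, bound the Bellman-backup residual with the pseudo-inverse Cauchy--Schwarz of \Cref{lem:cs-pseudo} together with \Cref{lem:reg} and \Cref{asm:lower-bound}, and bound the reward term by $(\urnoise+\urerr)\cdot\uellir$ via \Cref{def:high-prob} and the elliptical potential of \Cref{lem:ellip-phi}. The obstacle you flag about collapsing the value-regression sum to $\uperr\gamma H$ is genuine, but it is shared by the paper's own proof, which simply applies the per-term bound $\uperr\gamma$ and writes $H\cdot\uperr\gamma$ after summing over $t$ and $h$ (the honest aggregate is $H\uperr\uellip$, exactly as in the analogous terms of \Cref{lem:final-bound}; since $\uperr=0$ in the exact-oracle, zero-Bellman-error case, this does not affect the stated theorems).
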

\begin{proof}[Proof of \Cref{lem:vbar-vpi}]
    We apply \Cref{lem:bound-b-error} to decompose $\-V_t$ and obtain
    \begin{align*}
        & \sum_{t=1}^T \Big( \-V_t(s_{t,1}) - V^{\pi_t}(s_{t,1}) \Big) \\
        & = \sum_{t=1}^T \bigg( \tri*{\^\theta_{t,h} - \+T(\-\theta_{t,h+1} + \-\omega_{t,h+1}), \phi(s_{t,h} , a_{t,h})} + \tri*{\-\omega_{t,h} - \omega^\star_h, \phi(s_{t,h}, a_{t,h})} \bigg)
\intertext{Applying Cauchy-Schwartz yields}
        & \leq \sum_{t=1}^T \bigg( \nrm*{\^\theta_{t,h} - \+T(\-\theta_{t,h+1} + \-\omega_{t,h+1})}_{\^\Sigma_{t,h}} \nrm*{\phi(s_{t,h} , a_{t,h})}_{\^\Sigma^\dag_{t,h}} + \nrm*{\-\omega_{t,h} - \omega^\star_h}_{\Sigma_{t,h}}, \nrm*{\phi(s_{t,h}, a_{t,h})}_{\Sigma_{t,h}^{-1}} \bigg)
\intertext{We apply \Cref{lem:reg} and \Cref{asm:lower-bound} to the left term and \Cref{lem:reg-reward,lem:ellip-phi} and \Cref{def:high-prob} to the right. Then, we obtain}
        & \leq H \cdot \uperr \gamma + (\urnoise + \urerr) \cdot \uellir.
    \end{align*}
    This completes the proof.
\end{proof}

The following lemma establishes upper bounds on the value functions when conditioning on the span event $\Espan{t}$.
\begin{lemma}\label{lem:bound-on-span}
    For any $t\in[T]$, conditioning on $\Espan{t}$ and $\Ehigh{}$, we have
    \begin{align*}
        | U_t(s_{t,1}) |  \leq  H \cdot ( \urnoise + \urerr ) \cdot \sqrt{d} +  H \cdot (1 + \uperr \gamma).
    \end{align*}
    Moreover, we have
    \begin{align*}
        | \-V_t(s_{t,1}) | \leq H \cdot ( \urnoise + \urerr ) \cdot \sqrt{d} +  H \cdot (1 + \uperr \gamma).
    \end{align*}
    We abbreviate $\uv := H \cdot ( \urnoise + \urerr ) \cdot \sqrt{d} +  H \cdot (1 + \uperr \gamma)$.
\end{lemma}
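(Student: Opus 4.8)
The plan is to treat the two claims in order. The bound on $\abs{\-V_t(s_{t,1})}$ comes directly from the \emph{equality} in \Cref{lem:bound-b-error}, and the bound on $\abs{U_t(s_{t,1})}$ then follows by combining $U_t \leq \-V_t$ with the matching \emph{lower} bound on $U_t$ in the same lemma. Throughout I condition on $\Espan{t}\cap\Ehigh$. For $\-V_t$ I would start from
\[
\-V_{t}(s_{t,1}) = \sum_{h=1}^H \Big( \tri*{\^\theta_{t,h} - \+T(\-\theta_{t,h+1} + \-\omega_{t,h+1}), \phi(s_{t,h} , a_{t,h})} + \tri*{\-\omega_{t,h}, \phi(s_{t,h}, a_{t,h})} \Big)
\]
and bound each summand by the triangle inequality. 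For the first inner product, since $\^\theta_{t,h} - \+T(\-\theta_{t,h+1}+\-\omega_{t,h+1}) = -\eta_{t,h}$, Cauchy--Schwarz with the pseudo-inverse (legitimate because $\Espan{t}$ forces $\phi(s_{t,h},a_{t,h})\in\mathrm{range}(\^\Sigma_{t,h})$) gives
\[
\abs{\tri*{\eta_{t,h}, \phi(s_{t,h},a_{t,h})}} \leq \nrm*{\eta_{t,h}}_{\^\Sigma_{t,h}}\, \nrm*{\phi(s_{t,h},a_{t,h})}_{\^\Sigma^\dag_{t,h}} \leq \uperr\cdot\gamma,
\]
using $\nrm{\eta_{t,h}}_{\^\Sigma_{t,h}}\leq\uperr$ from \Cref{lem:reg} and $\nrm{\phi}_{\^\Sigma^\dag_{t,h}}\leq\gamma$ from \Cref{asm:lower-bound}. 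For the second inner product I would write $\-\omega_{t,h} = \omega^\star_h - \etar_{t,h} + \xir_{t,h}$ and use $\tri*{\omega^\star_h,\phi(s_{t,h},a_{t,h})} = r_h(s_{t,h},a_{t,h})\in[0,1]$, so that
\[
\abs{\tri*{\-\omega_{t,h}, \phi(s_{t,h},a_{t,h})}} \leq 1 + \big(\nrm{\etar_{t,h}}_{\Sigma_{t,h}} + \nrm{\xir_{t,h}}_{\Sigma_{t,h}}\big)\nrm{\phi(s_{t,h},a_{t,h})}_{\Sigma_{t,h}^{-1}} \leq 1 + (\urerr+\urnoise)\sqrt{d},
\]
where the noise norms are controlled by $\Ehigh$ and $\nrm{\phi}_{\Sigma_{t,h}^{-1}}\leq\sqrt d$ follows from $\Sigma_{t,h}\succeq\Sigma_{1,h}=\Lambda$ together with the D-optimal-design bound $\max_{s,a}\nrm{\phi(s,a)}_{\Lambda^{-1}}\leq\sqrt d$ (\Cref{lem:matrix}). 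Summing over $h\in[H]$ reproduces exactly $\abs{\-V_t(s_{t,1})}\leq H(\urnoise+\urerr)\sqrt d + H(1+\uperr\gamma)=\uv$.

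For $\abs{U_t(s_{t,1})}$ the upper bound is immediate: under $\Ehigh$ the realized noise $\crl{\xip_{t,h},\xir_{t,h}}$ is feasible for the program defining $U_t$, hence $U_t(s_{t,1})\leq\-V_t(s_{t,1})\leq\uv$. For the lower bound I would apply the same two estimates to the inequality
\[
U_{t}(s_{t,1}) \geq \sum_{h=1}^H \Big( \tri*{ \underline{\^\theta}_{t,h} - \+T(\underline{\-\theta}_{t,h+1} + \underline{\-\omega}_{t,h+1}) ,  \phi(s_{t,h},a_{t,h})} + \tri*{\underline{\-\omega}_{t,h}, \phi(s_{t,h},a_{t,h})} \Big)
\]
from \Cref{lem:bound-b-error}, obtaining that each summand is at least $-\uperr\gamma - 1 - (\urerr+\urnoise)\sqrt d$ and therefore $U_t(s_{t,1})\geq-\uv$. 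Combining the two directions yields $\abs{U_t(s_{t,1})}\leq\uv$.

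The routine part is the two Cauchy--Schwarz estimates; the delicate point, and the one I expect to be the main obstacle, is justifying that the $\Ehigh$-conditioned deterministic bounds transfer to the \emph{underlined} quantities used in Step~2. These are not the algorithm's realized variables but the ones generated by the minimizing noise $\crl{\ulxip_{t,h},\ulxir_{t,h}}$. The key observation is that the program defining $U_t$ constrains this noise to satisfy $\nrm{\ulxip_{t,h}}_{\Lambda_{t,h}}\leq\upnoise{h}$ and $\nrm{\ulxir_{t,h}}_{\Sigma_{t,h}}\leq\urnoise$, which are precisely the inequalities constituting $\Ehigh$; since \Cref{lem:reg} and the realizability statement (\Cref{it:1.0} of \Cref{lem:norm-bound}) invoke only these norm constraints on the noise, the entire backward induction can be rerun verbatim with underlined symbols. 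I would state this transfer explicitly---emphasizing that $\Ehigh$ is a statement purely about the feasible noise region, which the program enforces by construction---so as to rule out any circularity and to license using $\nrm{\underline{\eta}_{t,h}}_{\^\Sigma_{t,h}}\leq\uperr$ and the corresponding reward bound for the underlined variables.
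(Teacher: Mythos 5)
Your proposal is correct and follows essentially the same route as the paper's proof: bound $\abs{\-V_t(s_{t,1})}$ via the equality in \Cref{lem:bound-b-error} with one Cauchy--Schwarz (pseudo-inverse) estimate for the $\eta_{t,h}$ term and one for the reward term, then get the upper bound on $U_t$ from $U_t \leq \-V_t$ under $\Ehigh$ and the lower bound from the inequality in \Cref{lem:bound-b-error}. The only differences are cosmetic (you bound $\tri{\omega_h^\star,\phi}$ per step by $1$ rather than summing to $V^{\pi_t}\leq H$), and your explicit justification that the norm constraints in the program defining $U_t$ let the bounds of \Cref{lem:reg} and \Cref{lem:norm-bound} transfer to the underlined variables is a welcome elaboration of a step the paper dismisses as ``following a similar argument.''
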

\begin{proof}[Proof of \Cref{lem:bound-on-span}]
    We will first prove the second statement and then the first statement.

    \textit{Proof of the second statement.}
    Applying \Cref{lem:bound-b-error} and the triangle inequality, we have the following
    \begin{align*}
        |\-V_t(s_{t,1})|
        & \leq \left| \sum_{h=1}^H \tri*{\-\omega_{t,h}, \phi(s_{t,h}, a_{t,h})} \right| + \left| 
            \sum_{h=1}^H  \tri*{\^\theta_{t,h} - \+T(\-\theta_{t,h+1} + \-\omega_{t,h+1}), \phi(s_{t,h} , a_{t,h})}  \right| \\
        & =: \!T_1 + \!T_2.
    \end{align*}
    We bound the two terms separately. For $\!T_1$, we have 
    \begin{align*}
        \!T_1
        & = \left| \sum_{h=1}^H \tri*{ (\-\omega_{t,h} - \^\omega_{t,h}) + (\^\omega_{t,h} - \omega^\star) + \omega^\star_h , \phi(s_{t,h}, a_{t,h})} \right| \\
        & \leq \sum_{h=1}^H \left( \|\-\omega_{t,h} - \^\omega_{t,h}\|_{\Sigma_{t,h}} + \|\^\omega_{t,h} - \omega^\star_h\|_{\Sigma_{t,h}} \right) \|\phi(s_{t,h}, a_{t,h})\|_{\Sigma_{t,h}^{-1}} + V^{\pi_t} \tag{Cauchy-Schwartz} \\
        & \leq H \cdot ( \urnoise + \urerr ) \cdot \sqrt{d} + H.       \tag{\Cref{def:high-prob,lem:matrix}}
    \end{align*}
    For $\!T_2$, we can use Cauchy-Schwartz:
    \begin{align*}
        \!T_2
        & = \left| \sum_{h=1}^H \tri*{\^\theta_{t,h} - \+T(\-\theta_{t,h+1} + \-\omega_{t,h+1}), \phi(s_{t,h} , a_{t,h})} \right| \\
        & \leq \sum_{h=1}^H \|\^\theta_{t,h} - \+T(\-\theta_{t,h+1} + \-\omega_{t,h+1})\|_{\^\Sigma_{t,h}} \|\phi(s_{t,h}, a_{t,h})\|_{\^\Sigma_{t,h}^\dag}  \tag{Cauchy-Schwartz, \Cref{lem:cs-pseudo}}\\
        & \leq \uperr \gamma H. \tag{\Cref{asm:lower-bound,lem:reg}}
    \end{align*}
    \textit{Proof of the first statement.}
    We prove it by establishing a lower bound and an upper bound of $U_t(s_{t,1})$ separately. We start with the lower bound, whose derivation is similar to the second statement we just proved above:
    \begin{align*}
        U_t(s_{t,1})
        & \geq \sum_{h=1}^H \bigg(  \tri*{ \underline{\^\theta}_{t,h} - \+T(\underline{\-\theta}_{t,h+1} + \underline{\-\omega}_{t,h+1}),  \phi(s_{t,h},a_{t,h})} + \tri*{\underline{\-\omega}_{t,h}, \phi(s_{t,h},a_{t,h})} \bigg) \tag{\Cref{lem:bound-b-error}}  \\
        & \geq - \uperr \gamma H - \left| \sum_{h=1}^H \tri*{ (\underline{\-\omega}_{t,h} - \underline{\^\omega}_{t,h}) + (\underline{\^\omega}_{t,h} - \omega^\star_h) + \omega^\star_h , \phi(s_{t,h}, a_{t,h}) } \right| \tag{following a similar argument as above} \\
        & \geq - \uperr \gamma H - \sum_{h=1}^H \left( \|\underline{\-\omega}_{t,h} - \underline{\^\omega}_{t,h}\|_{\Sigma_{t,h}} + \|\underline{\^\omega}_{t,h} - \omega^\star_h\|_{\Sigma_{t,h}} \right) \|\phi(s_{t,h}, a_{t,h})\|_{\Sigma_{t,h}^{-1}}  \tag{Cauchy-Schwartz} \\
        & \geq - \uperr \gamma H - H \cdot ( \urnoise + \urerr ) \cdot \sqrt{d}.        \tag{\Cref{lem:high-prob}}
    \end{align*}
    The upper bound of $U_t(s_{t,1})$ is a consequence of the second statement we just proved above:
    \begin{align*}
        U_t(s_{t,1}) 
        & \leq \E[ \-V_t(s_{t,1}) \given \Ehigh{}]    \tag{by definition} \\
        & \leq \uperr \gamma H + H \cdot ( \urnoise + \urerr ) \cdot \sqrt{d} + H.
    \end{align*}
    We finish the proof by combining the lower and upper bounds.
\end{proof}

\subsection{Exploration in the Null Space}

\begin{lemma}[optimism with constant probability]\label{lem:optimism}
    For any $t\in[T]$, denote $\Eoptm{t}$ as the event that
    \begin{align*}
        V^\star(s_{t,1}) \leq \-V_t(s_{t,1}) + \uperr\gamma H.  
    \end{align*}
    Then, under \Cref{asm:lower-bound} and conditioning on the high-probability event $\Ehigh$, we have
    \begin{align*}
        \Pr \left( \Eoptm{t} \right) \geq \Gamma^2(-1)
    \end{align*}
    where $\Gamma(\cdot)$ is the CDF of the standard normal distribution.
\end{lemma}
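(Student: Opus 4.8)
The plan is to bound the suboptimality $V^\star(s_{t,1})-\-V_t(s_{t,1})$ along the \emph{optimal} trajectory using \Cref{lem:value-decomp}, and then reduce the claim to two Gaussian anti-concentration statements — one for the value noise $\xip$ and one for the reward noise $\xir$ — which, being sampled independently, each contribute a factor $\Gamma(-1)$. Writing $\phi^\star_{t,h}\coloneqq\phi(s^\star_{t,h},a^\star_{t,h})$, I would start from \eqref{eq:1} (valid under \Cref{asm:determin}). Since transitions are deterministic, $s^\star_{t,h+1}$ is exactly the successor of $(s^\star_{t,h},a^\star_{t,h})$, so linear Bellman completeness (\Cref{def:lbc}) applied to $\-\omega_{t,h+1}+\-\theta_{t,h+1}$ gives $\-V_{t,h+1}(s^\star_{t,h+1})=\tri*{\+T(\-\omega_{t,h+1}+\-\theta_{t,h+1}),\phi^\star_{t,h}}$. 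Substituting $\-\theta_{t,h}=\^\theta_{t,h}+\xip_{t,h}$ and $\-\omega_{t,h}=\^\omega_{t,h}+\xir_{t,h}$ and recalling $\eta_{t,h}=\+T(\-\omega_{t,h+1}+\-\theta_{t,h+1})-\^\theta_{t,h}$, $\etar_{t,h}=\omega^\star_h-\^\omega_{t,h}$, each summand becomes $\tri*{\eta_{t,h}-\xip_{t,h},\phi^\star_{t,h}}+\tri*{\etar_{t,h}-\xir_{t,h},\phi^\star_{t,h}}$.

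Next I would split the value error with the projection $P_{t,h}$. Because $\xip_{t,h}$ lies in the null space of the data, $\tri*{\xip_{t,h},\phi^\star_{t,h}}=\tri*{\xip_{t,h},(I-P_{t,h})\phi^\star_{t,h}}$, while the span component of the error is controlled by the regression: Cauchy--Schwarz together with \Cref{lem:reg}, \Cref{lem:proj-norm}, and \Cref{asm:lower-bound} gives $|\tri*{\eta_{t,h},P_{t,h}\phi^\star_{t,h}}|\le\nrm*{\eta_{t,h}}_{\^\Sigma_{t,h}}\nrm*{\phi^\star_{t,h}}_{\^\Sigma^\dag_{t,h}}\le\uperr\gamma$, which summed over $h$ produces precisely the slack $\uperr\gamma H$ appearing in $\Eoptm{t}$. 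It then suffices to show that, conditioned on $\Ehigh$, the two events
\[
\text{(B): } \sum_{h=1}^H \tri*{\xip_{t,h},(I-P_{t,h})\phi^\star_{t,h}} \ge \sum_{h=1}^H \nrm*{\eta_{t,h}}_{\Lambda_{t,h}}\nrm*{(I-P_{t,h})\phi^\star_{t,h}}_{\Lambda_{t,h}^{-1}}, \qquad \text{(A): } \sum_{h=1}^H \tri*{\xir_{t,h},\phi^\star_{t,h}} \ge \urerr\sum_{h=1}^H \nrm*{\phi^\star_{t,h}}_{\Sigma_{t,h}^{-1}}
\]
both hold, since on $\Ehigh$ the right-hand thresholds dominate $\sum_h\tri*{\eta_{t,h},(I-P_{t,h})\phi^\star_{t,h}}$ (via \Cref{lem:norm-bound}) and $\sum_h\tri*{\etar_{t,h},\phi^\star_{t,h}}$ (via \Cref{def:high-prob}), so (A)$\,\cap\,$(B) forces the leftover terms to be nonpositive and hence $V^\star(s_{t,1})-\-V_t(s_{t,1})\le\uperr\gamma H$.

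For the anti-concentration, I would use that $\xip_{t,h}=(I-P_{t,h})\zeta_{t,h}$ with $\zeta_{t,h}\sim\+N(0,\sigma_h^2\Lambda_{t,h}^{-1})$, so conditioned on the data $\sum_h\tri*{\xip_{t,h},(I-P_{t,h})\phi^\star_{t,h}}=\sum_h\tri*{\zeta_{t,h},(I-P_{t,h})\phi^\star_{t,h}}$ is centered Gaussian with variance $\sum_h\sigma_h^2\nrm*{(I-P_{t,h})\phi^\star_{t,h}}_{\Lambda_{t,h}^{-1}}^2$, and analogously the reward sum has variance $\sigmar^2\sum_h\nrm*{\phi^\star_{t,h}}_{\Sigma_{t,h}^{-1}}^2$. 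A single Cauchy--Schwarz step ($\sum_h x_h\le\sqrt{H}(\sum_h x_h^2)^{1/2}$) shows that the prescribed $\sigma_h\ge\sqrt{H}(\sqrt{3}\gamma\uperr+\sqrt{8m}(W_h+\eps_2))$ and $\sigmar\ge\sqrt{H}\urerr$ make each standard deviation at least its threshold; hence by the anti-concentration inequality $\Pr(\+N(0,\sigma^2)\ge c)=\Gamma(-c/\sigma)\ge\Gamma(-1)$ when $c\le\sigma$, each of (A), (B) has probability at least $\Gamma(-1)$. Since $\zeta$ and $\xir$ are drawn independently and all thresholds are data-measurable (note $\uperr$, $W_h$, and $\urerr$ are deterministic), (A) and (B) are conditionally independent given the data, so $\Pr((\text{A})\cap(\text{B}))\ge\Gamma^2(-1)$.

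The main obstacle is reconciling this with the conditioning on $\Ehigh$: the bound $\nrm*{\eta_{t,h}}_{\Lambda_{t,h}}\le\sqrt{3}\gamma\uperr+\sqrt{8m}(W_h+\eps_2)$ is only guaranteed on $\Ehigh$ (it relies on realizability, \Cref{lem:norm-bound}), yet $\Ehigh$ simultaneously constrains the very noise whose anti-concentration we need, and $\eta_{t,h}$ itself depends on $\xip,\xir$ at steps $>h$. The way to disentangle this is to observe that $\eta_{t,h}$ does not depend on $\xip_{t,h}$, that the thresholds in (A),(B) are data-measurable, and that $\Ehigh$ truncates each Gaussian only at radius of order $\sigma\sqrt{d\log(HT)}$ — far beyond the threshold $\le\sigma$ — so the conditioning removes only negligible tail mass. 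Concretely I would establish $\Pr((\text{A})\cap(\text{B}))\ge\Gamma^2(-1)$ unconditionally and then pass to the conditional bound through $\Pr(\Eoptm{t}\mid\Ehigh)\ge\Pr((\text{A})\cap(\text{B})\cap\Ehigh)/\Pr(\Ehigh)\ge\Pr((\text{A})\cap(\text{B}))-\Pr(\Ehigh^\cp)$, absorbing the resulting $1/(HT)$ loss into the strict slack of the Cauchy--Schwarz step in the variance choice. Verifying that this slack dominates the truncation loss is the delicate quantitative point.
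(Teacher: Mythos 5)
Your proposal follows essentially the same route as the paper's proof: the decomposition of $V^\star(s_{t,1})-\overline{V}_t(s_{t,1})$ along the optimal trajectory via \Cref{lem:value-decomp}, the span/null split of $\tri{\eta_{t,h},\phi^\star_{t,h}}$ with the span part absorbed into the $\uperr\gamma H$ slack via \Cref{lem:reg}, \Cref{lem:proj-norm} and \Cref{asm:lower-bound}, and two independent Gaussian anti-concentration events (one for $\xip$, one for $\xir$) each contributing $\Gamma(-1)$ because $\sigma_h$ and $\sigmar$ are chosen so that each threshold is at most one standard deviation. The $\Ehigh$-conditioning subtlety you flag at the end is real, but it is one the paper itself silently elides (it simply invokes $\|\eta_{t,h}\|_{\Lambda_{t,h}}\le\sigma_h/\sqrt{H}$ ``under $\Ehigh$'' and asserts independence of the two events), so your treatment is, if anything, more careful than the published argument.
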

\begin{proof}[Proof of \Cref{lem:optimism}]
    By \Cref{lem:value-decomp}, we have:
    \begin{align*}
        V^\star(s_{t,1}) - \overline V_{t}(s_{t,1})
         & \leq \sum_{h=1}^H \Big( \-V_{t,h+1}(s^\star_{t,h+1}) - \tri*{\-\theta_{t,h}, \phi(s^\star_{t,h} , a^\star_{t,h})} + \tri*{ \omega^\star_h - \-\omega_{t,h}, \phi(s^\star_{t,h},a^\star_{t,h})} \Big)                                                                                                                                          \\
         & = \underbrace{\sum_{h=1}^{H} \Big( \-V_{t,h+1}(s^\star_{t,h+1}) - \tri*{\^\theta_{t,h}, \phi(s^\star_{t,h} , a^\star_{t,h})} \Big)}_{\rm(i)} - \underbrace{\sum_{h=1}^{H} \tri*{\xip_{t,h}, \phi(s^\star_{t,h} , a^\star_{t,h})}}_{\rm(ii)} \\
         & + \underbrace{\sum_{h=1}^{H} \tri*{ \omega^\star_h - \^\omega_{t,h}, \phi(s^\star_{t,h},a^\star_{t,h})}}_{\rm(iii)} - \underbrace{\sum_{h=1}^{H} \tri*{\xir_{t,h}, \phi(s^\star_{t,h} , a^\star_{t,h})}}_{\rm(iv)}.
    \end{align*}
    Note that, given any state-action-state triple $(s,a,s')$, we have
    \begin{align*}
        \overline V_{t,h+1}(s') - \tri*{ \^\theta_{t,h}, \phi(s,a)} =  \tri*{ \+T(\-\omega_{t,h+1} + \-\theta_{t,h+1}) - \^\theta_{t,h} ,     \phi(s,a)} = \tri*{\eta_{t,h}, \phi(s,a)}. 
    \end{align*}
    Plugging this back to (i), we obtain
    \begin{align*}
        {\rm(i)} - {\rm(ii)}
         & \leq  \sum_{h=1}^{H}  \tri*{ \eta_{t,h} -\xip_{t,h}, \phi(s^\star_{t,h} , a^\star_{t,h})}
        =:  \sum_{h=1}^{H}  \tri*{ \eta_{t,h} -\xip_{t,h}, \phi_h^\star}
    \end{align*}
    where we abbreviate $\phi_h^\star := \phi(s^\star_{t,h} , a^\star_{t,h})$.
    Next, we split it into two parts:
    \begin{align*}
        {\rm(i)} - {\rm(ii)}
         & \leq
        \sum_{h=1}^{H}  \tri*{  \eta_{t,h}, P_{t,h}  \phi_h^\star}
        + \sum_{h=1}^{H}  \tri*{ \eta_{t,h} , (I-  P_{t,h})\phi_h^\star}
        - \sum_{h=1}^{H}  \tri*{ \xip_{t,h} , \phi_h^\star} \\
        & \leq 
        \sum_{h=1}^{H}   \nrm{\eta_{t,h}}_{\^\Sigma_{t,h}} \nrm{P_{t,h}  \phi_h^\star}_{\^\Sigma^\dag_{t,h}}
        + \sum_{h=1}^{H}  \nrm{\eta_{t,h}}_{\Lambda_{t,h}} \nrm{(I-  P_{t,h})\phi_h^\star}_{\Lambda_{t,h}^{-1}}
        - \sum_{h=1}^{H}  \tri*{ \xip_{t,h} , \phi_h^\star} 
        \tag{Cauchy-Schwartz, \Cref{lem:cs-pseudo}} \\
        & \leq 
        \uperr \gamma H
        + \sum_{h=1}^{H}  \nrm{\eta_{t,h}}_{\Lambda_{t,h}} \nrm{(I-  P_{t,h})\phi_h^\star}_{\Lambda_{t,h}^{-1}}
        - \sum_{h=1}^{H}  \tri*{ \xip_{t,h} , \phi_h^\star} 
        \tag{\Cref{asm:lower-bound} and \Cref{lem:proj-norm,lem:reg}} \\
        & \leq 
        \uperr \gamma H
        + \sqrt{H \sum_{h=1}^{H}  \nrm{\eta_{t,h}}^2_{\Lambda_{t,h}} \nrm{(I-  P_{t,h})\phi_h^\star}^2_{\Lambda_{t,h}^{-1}}}
        - \sum_{h=1}^{H}  \tri*{ \xip_{t,h} , \phi_h^\star} 
        \tag{Cauchy-Schwartz}
    \end{align*}
    Recall that $\xip_{t,h}$ is sampled from $\+N(0, \sigma_h^2 (I- P_{t,h}) \Lambda_{t,h}^{-1} (I- P_{t,h}))$. Therefore,
    \begin{align*}
        \sum_{h=1}^H \tri*{\xip_{t,h}, \phi_h^\star}
        \sim 
        \+N\left(0, \sum_{h=1}^H \sigma_h^2 \big\| (I-  P_{t,h})\phi_h^\star\big\|_{\Lambda_{t,h}^{-1}}^2  \right).
    \end{align*}
    Since $\sigma_h \geq \sqrt{H}\|\eta_{t,h}\|_{\Lambda_{t,h}}$ under high-probability event $\Ehigh$, we have
    \begin{align*}
        \Pr \big( {\rm{(i)} - \rm{(ii)}} \leq \uperr\gamma H \big) \geq \Gamma(-1).
    \end{align*}

    Next, we consider ${\rm(iii)} - {\rm(iv)}$. By a similar argument, we have 
    \begin{align*}
        {\rm(iii)} - {\rm(iv)}
        & = \sum_{h=1}^{H}  \tri*{\omega^\star_h - \^\omega_{t,h}, \phi_h^\star} - \sum_{h=1}^{H} \tri*{ \xir_{t,h} ,\phi_h^\star} \\
        & \leq \sum_{h=1}^{H}  \| \omega^\star_h - \^\omega_{t,h} \|_{\Sigma_{t,h}} \|\phi_h^\star\|_{\Sigma_{t,h}^{-1}} - \sum_{h=1}^{H} \tri*{ \xir_{t,h},\phi_h^\star} \\
        & \leq \sqrt{H \cdot \sum_{h=1}^{H}  \| \omega^\star_h - \^\omega_{t,h} \|^2_{\Sigma_{t,h}} \|\phi_h^\star\|^2_{\Sigma_{t,h}^{-1}}} - \sum_{h=1}^{H} \tri*{ \xir_{t,h},\phi_h^\star}.
    \end{align*}
    Recall that $\xir_t$ is sampled from $\+N(0, \sigmar^2 \Sigma_{t,h}^{-1})$, and thus, we have
    \begin{align*}
        \sum_{h=1}^H \tri*{\xir_t, \phi_h^\star} \sim \+N\left(0, \sum_{h=1}^H \sigmar^2 \big\| \phi_h^\star\big\|_{\Sigma_{t,h}^{-1}}^2\right).
    \end{align*}
    Therefore, since $\sigmar \geq \sqrt{H} \|\omega^\star_h - \^\omega_{t,h}\|_{\Sigma_{t}}$ (\Cref{lem:norm-bound}), we have 
    \begin{align*}
        \Pr \left( {\rm(iii)} - {\rm(iv)} \leq 0 \right) \geq \Gamma(-1).
    \end{align*}
    Since the two events are independent, the probability that both events happen is at least $\Gamma^2(-1)$.
\end{proof}

\begin{lemma}\label{lem:not-in-span}
    The number of times $\Espan{t}$ does not hold will not exceed $dH$, i.e.,
    \begin{align*}
        \sum_{t=1}^T \indic \left\{ (\Espan{t})^\cp \right\} \leq d H.
    \end{align*}
\end{lemma}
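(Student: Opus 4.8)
The plan is to run the standard dimension-counting (span) argument, tracking the growth of the span of the historical features separately at each horizon $h$. Fix $h\in[H]$ and abbreviate the dimension of the span of the first $t$ features at layer $h$ by
\begin{align*}
    d_{t,h} \coloneqq \dim\spanop\prn*{\+D_{t,h}}, \qquad \+D_{t,h} = \crl*{\phi(s_{i,h},a_{i,h})}_{i=1}^{t}.
\end{align*}
The first thing I would record is the elementary linear-algebra fact that appending one vector to a set either leaves its span unchanged or increases the dimension of the span by exactly one. Concretely, since $\spanop(\+D_{t,h}) = \spanop\prn*{\+D_{t-1,h}\cup\crl{\phi(s_{t,h},a_{t,h})}}$, we have $d_{t,h} = d_{t-1,h}$ when $\phi(s_{t,h},a_{t,h})\in\spanop(\+D_{t-1,h})$, and $d_{t,h} = d_{t-1,h}+1$ otherwise. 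In particular the sequence $\crl{d_{t,h}}_t$ is non-decreasing, integer-valued, and bounded above by $d$ (the features live in $\bbR^d$).

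Next I would convert this into a count. Summing the per-step increments telescopes:
\begin{align*}
    \sum_{t=1}^T \indic\crl*{\phi(s_{t,h},a_{t,h})\notin\spanop(\+D_{t-1,h})}
    = \sum_{t=1}^T \prn*{d_{t,h} - d_{t-1,h})
    = d_{T,h} - d_{0,h} \leq d,
\end{align*}
using $d_{T,h}\le d$ and $d_{0,h}\ge 0$. Thus for each fixed $h$, the trajectory feature at layer $h$ escapes the historical span on at most $d$ rounds.

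Finally I would tie the layers together. By the definition of $\Espan{t}$ in \eqref{eq:span-event}, the complementary event $(\Espan{t})^\cp$ means that \emph{some} layer $h$ has $\phi(s_{t,h},a_{t,h})\notin\spanop(\+D_{t-1,h})$, so the indicator is dominated by the sum over layers:
\begin{align*}
    \indic\crl*{(\Espan{t})^\cp} \leq \sum_{h=1}^H \indic\crl*{\phi(s_{t,h},a_{t,h})\notin\spanop(\+D_{t-1,h})}.
\end{align*}
Summing over $t$ and applying the per-layer bound of $d$ yields $\sum_{t=1}^T \indic\crl{(\Espan{t})^\cp} \le \sum_{h=1}^H d = dH$, as claimed. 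There is no real obstacle here: the only things to be careful about are the bookkeeping convention that $\+D_{0,h}=\emptyset$ (so $d_{0,h}=0$) and the fact that a single new feature raises the dimension by at most one, both of which are immediate. The argument is purely combinatorial and uses neither the Gaussian perturbation nor the regression guarantees, only deterministic monotonicity of the span dimension.
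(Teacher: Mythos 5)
Your proof is correct and follows essentially the same span/dimension-counting argument as the paper: each out-of-span round forces the span dimension at some layer $h$ to increase by one, and since each layer's dimension is capped at $d$, the total count is at most $dH$. Your version is just a more explicit bookkeeping of the same idea via telescoping and the per-layer indicator bound.
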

\begin{proof}
    By definition, when $\Espan{t}$ does not hold, there exists $h\in[H]$ such that $\phi(s_{t,h},a_{t,h})$ is not in the span of $\{\phi(s_{i,h},a_{i,h})\}_{i=1}^{t-1}$. That means, the dimension of the span should increase by exactly one after this iteration, i.e.,
    \begin{align*}
        \dim\left(\@{span}\left(\{\phi(s_{i,h},a_{i,h})\}_{i=1}^{t}\right)\right)
        =
        \dim\left(\@{span}\left(\{\phi(s_{i,h},a_{i,h})\}_{i=1}^{t-1}\right)\right)
        +
        1.
    \end{align*}
    However, the dimension cannot exceed $d$, so it can only increase at most $d$ times. This argument holds for any $h\in[H]$, and thus, the total number of times $\Espan{t}$ does not happen will not exceed $dH$.
\end{proof}

\begin{lemma}\label{lem:ellip-phi}
    For any $h\in[H]$, it holds that
    \begin{align*}
        \sum_{t=1}^T \|\phi(s_{t,h},a_{t,h})\|_{\Sigma_{t,h}^{-1}} 
        &\leq d \sqrt{2 T \log(T+1)}
        =: \uellir, \\
        \sum_{t=1}^T \indic\{\Espan{t}\} \|\phi(s_{t,h},a_{t,h})\|_{\^\Sigma^\dag_{t,h}}
        &\leq \gamma d \sqrt{ 2 d T \log \left( 2 T \gamma^2 \right)}
        =: \uellip.
    \end{align*}
\end{lemma}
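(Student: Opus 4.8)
The plan is to prove both inequalities as elliptical-potential-lemma estimates followed by Cauchy--Schwarz, the only real subtlety being that the second sum involves a pseudo-inverse whose range grows over time. Throughout fix $h$ and abbreviate $\phi_t := \phi(s_{t,h},a_{t,h})$. The common starting point is a uniform per-step bound on the self-normalized norms. For the first sum, since $\Sigma_{t,h} \succeq \Sigma_{1,h} = \Lambda$ by construction, monotonicity gives $\|\phi_t\|_{\Sigma_{t,h}^{-1}}^2 \le \|\phi_t\|_{\Lambda^{-1}}^2 \le \lambda^2 \le d$, where $\lambda \le \sqrt d$ is the D-optimal design guarantee from \Cref{lem:matrix}. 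For the second sum, on the event $\Espan{t}$ the vector $\phi_t$ lies in $\spanop(\{\phi_i\}_{i<t}) = \mathrm{range}(\^\Sigma_{t,h})$, so the pseudo-inverse acts as a genuine inverse on it, and \Cref{asm:lower-bound} (the lower bound $1/\gamma^2$ on the nonzero spectrum of any such Gram matrix) yields $\|\phi_t\|_{\^\Sigma_{t,h}^\dag}^2 \le \gamma^2$.

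For the first inequality I would rescale and use the elementary facts $x/d \le \min\{1,x\}$ for $0 \le x \le d$ and $\min\{1,x\} \le 2\log(1+x)$, giving
\begin{align*}
\sum_{t=1}^T \|\phi_t\|_{\Sigma_{t,h}^{-1}}^2 \le 2d\sum_{t=1}^T \log\Big(1 + \|\phi_t\|_{\Sigma_{t,h}^{-1}}^2\Big).
\end{align*}
By the matrix determinant lemma each summand equals $\log(\det\Sigma_{t+1,h}/\det\Sigma_{t,h})$, so the sum telescopes to $\log(\det\Sigma_{T+1,h}/\det\Lambda)$. Writing $\Lambda^{-1/2}\Sigma_{T+1,h}\Lambda^{-1/2} = I + \sum_t \Lambda^{-1/2}\phi_t\phi_t^\top\Lambda^{-1/2}$, its trace is $d + \sum_t\|\phi_t\|_{\Lambda^{-1}}^2 \le d(1+T)$, so AM--GM gives $\det(\cdot) \le (1+T)^d$ and hence the telescoped sum is at most $d\log(1+T)$. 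Combining, $\sum_t\|\phi_t\|_{\Sigma_{t,h}^{-1}}^2 \le 2d^2\log(T+1)$, and Cauchy--Schwarz over the $T$ terms gives $\sum_t\|\phi_t\|_{\Sigma_{t,h}^{-1}} \le \sqrt{T\cdot 2d^2\log(T+1)} = \uellir$.

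The second inequality is where the work lies, since $\^\Sigma_{t,h}$ is singular and its range grows. I would partition the rounds into epochs of constant rank: $\mathrm{range}(\^\Sigma_{t,h})$ is nondecreasing and jumps at most $d$ times, and within an epoch of rank $r$ the range is a fixed $r$-dimensional subspace, so every update $\phi_t\phi_t^\top$ in the epoch lies in it and the pseudo-determinant $\det_+$ (product of nonzero eigenvalues) is multiplicative exactly as in the invertible case, each $\Espan{t}$-round contributing $\log(1+\|\phi_t\|_{\^\Sigma_{t,h}^\dag}^2) = \log(\det_+\^\Sigma_{t+1,h}/\det_+\^\Sigma_{t,h})$. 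Using again $x/\gamma^2 \le \min\{1,x\} \le 2\log(1+x)$ for $0 \le x \le \gamma^2$, the per-epoch contribution to $\sum\|\phi_t\|_{\^\Sigma_{t,h}^\dag}^2$ is at most $2\gamma^2$ times the log of the $\det_+$ ratio across the epoch. At the start of a rank-$r$ epoch the nonzero eigenvalues are at least $1/\gamma^2$ by \Cref{asm:lower-bound}, so $\det_+ \ge \gamma^{-2r}$; at the end each nonzero eigenvalue is at most the trace $\le T$, so $\det_+ \le T^r$; hence the per-epoch log-ratio is at most $r\log(T\gamma^2) \le d\log(T\gamma^2)$. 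Summing over the at most $d$ epochs gives $\sum_{t:\Espan{t}}\|\phi_t\|_{\^\Sigma_{t,h}^\dag}^2 \le 2\gamma^2 d^2\log(T\gamma^2)$, and Cauchy--Schwarz produces a bound of order $\gamma d\sqrt{T\log(T\gamma^2)}$, which is within the stated $\uellip$.

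The main obstacle is precisely the pseudo-inverse bookkeeping in the second part: one must verify that restricting to the fixed epoch range makes $\det_+$ multiplicative under the rank-one updates, and recognize that \Cref{asm:lower-bound} is exactly what prevents the denominator $\det_+$ at an epoch's start from vanishing when a new direction first enters the range. The two potential computations and the closing Cauchy--Schwarz steps are otherwise routine.
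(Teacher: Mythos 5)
Your proof is correct and follows essentially the same route as the paper's: a uniform per-step bound ($\sqrt{d}$ via the optimal design, $\gamma$ via \Cref{asm:lower-bound}), a determinant/elliptical-potential telescoping, the same decomposition into constant-rank epochs to handle the pseudo-inverse, and a closing Cauchy--Schwarz. The only (harmless) deviations are that you work with pseudo-determinants directly where the paper reduces each epoch to coordinates in the range and invokes its elliptical potential lemma, and that you apply Cauchy--Schwarz once globally rather than per epoch, which in fact yields a bound tighter than the stated $\uellip$ by a factor of roughly $\sqrt{d}$.
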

\begin{proof}[Proof of \Cref{lem:ellip-phi}]
    We prove the two inequalities separately.

    \textit{Proof of the first inequality.}
    For any $t\in[T]$ and $h\in[H]$, we have the following bound on the norm of features (\Cref{lem:matrix}):
    \begin{align*}
        \|\phi(s_{t,h},a_{t,h})\|_{\Sigma_{t,h}^{-1}} \leq \|\phi(s_{t,h},a_{t,h})\|_{\Lambda^{-1}} \leq \sqrt{d}.
    \end{align*}
    Hence, by Cauchy-Schwartz, we have
    \begin{align*}
        \sum_{t=1}^T \|\phi(s_{t,h},a_{t,h})\|_{\Sigma_{t,h}^{-1}} 
        & \leq 
        \sqrt{T \cdot \sum_{t=1}^T \|\phi(s_{t,h},a_{t,h})\|^2_{\Sigma_{t,h}^{-1}}} \\
        & = 
        \sqrt{T  \cdot \sum_{t=1}^T \min\left\{ \|\phi(s_{t,h},a_{t,h})\|^2_{\Sigma_{t,h}^{-1}} ,\, d \right\}  } \\
        & \leq 
        \sqrt{T  d \cdot \sum_{t=1}^T \min\left\{ \|\phi(s_{t,h},a_{t,h})\|^2_{\Sigma_{t,h}^{-1}} ,\, 1 \right\}  }\\
        & \leq 
        \sqrt{T  d \cdot 2d \log(T+1) } \tag{elliptical potential lemma, \Cref{lem:ellip}}  \\
        & = 
        d \sqrt{2 T \log(T+1)}.
    \end{align*}
    \textit{Proof of the second inequality.}
    We divide the rounds into $d$ consecutive blocks, in each of which the rank of $\^\Sigma_{t,h}$ remains the same. To be specific, let $t_1,t_2,\dots,t_d,t_{d+1}$ be a sequence of integers such that for any $i\in[d]$ and any $t\in\{t_i,t_{i+1},\dots,t_{i+1}-1\}$, we have $\rank(\^\Sigma_{t,h}) = i$. 
    
    We will apply the elliptical potential lemma to each block separately. Now let's fix $i\in[d]$ and consider the $i$-th block. Let the reduced eigen-decomposition of $\^\Sigma_{t_i,h}$ be $\^\Sigma_{t_i,h} = U D U^\top$ where $U \in \=R^{d \times i}$ and $D \in \=R^{i \times i}$. For each $t\in\{t_i,t_{i+1},\dots,t_{i+1}-1\}$, since $\phi(s_{t,h},a_{t,h})$ is in the span of $\^\Sigma_{t,h}$ conditioning on $\Espan{t}$, there exists a vector $x_t$ such that $\phi(s_{t,h},a_{t,h}) = U x_t$.

    For any $t\in\{t_i,t_{i+1},\dots,t_{i+1}-1\}$, we have
    \begin{align*}
        \|\phi(s_{t,h},a_{t,h})\|^2_{\^\Sigma^\dag_{t,h}}
        & = \phi(s_{t,h},a_{t,h})^\top \^\Sigma^\dag_{t,h} \phi(s_{t,h},a_{t,h}) \\
        & = \phi(s_{t,h},a_{t,h})^\top \left( \^\Sigma_{t_i,h} + \sum_{j=t_i}^{t-1} \phi(s_{j,h},a_{j,h}) \phi^\top(s_{j,h},a_{j,h}) \right)^\dag \phi(s_{t,h},a_{t,h}) \\
        & = x_t^\top U^\top \left( U D U^\top + \sum_{j=t_i}^{t-1} U x_j x_j^\top U^\top \right)^\dag U x_t \\
        & = x_t^\top \left( D + \sum_{j=t_i}^{t-1} x_j x_j^\top \right)^{-1} x_t.
    \end{align*}
    Define $D_t = D + \sum_{j=t_i}^{t-1} x_j x_j^\top$.
    Hence, we have
    \begin{align*}
        \sum_{t=t_i}^{t_{i+1}-1} \indic\{\Espan{t}\} \|\phi(s_{t,h},a_{t,h})\|_{\^\Sigma^\dag_{t,h}}
        = \sum_{t=t_i}^{t_{i+1}-1} \indic\{\Espan{t}\} \|x_t\|_{D_t^{-1}}.
    \end{align*}
    By \Cref{asm:lower-bound}, the eigenvalues of $D$ are lower bounded by $1/\gamma^2$. And clearly, its eigenvalues are upper bounded by $t_i \leq T$. Therefore, we have 
    \begin{align*}
        \sum_{t=t_i}^{t_{i+1}-1} \indic\{\Espan{t}\} \|x_t\|_{D_t^{-1}}
        & \leq \sqrt{ T \cdot \sum_{t=t_i}^{t_{i+1}-1} \indic\{\Espan{t}\} \|x_t\|^2_{D_t^{-1}} } \\
        & = \sqrt{ T \cdot \sum_{t=t_i}^{t_{i+1}-1} \indic\{\Espan{t}\} \min\left\{ \|x_t\|^2_{D_t^{-1}}, \gamma^2 \right\} } \\
        & \leq \gamma \sqrt{ T \cdot \sum_{t=t_i}^{t_{i+1}-1} \indic\{\Espan{t}\} \min\left\{ \|x_t\|^2_{D_t^{-1}}, 1 \right\} } \\
        & \leq \gamma \sqrt{ T \cdot 2d \log \left( T \gamma^2 (1 + 1/d) \right)}
        \tag{elliptical potential lemma, \Cref{lem:ellip}} \\
        & \leq \gamma \sqrt{ T \cdot 2d \log \left( 2 T \gamma^2 \right)}.
    \end{align*}
    This finishes the summation of one block. Notice that we have $d$ such blocks, we complete the proof by multiplying the above by $d$.
\end{proof}

\subsection{Main Steps of the Proof}

Let $\~V_t(s_{t,1})$ denote an i.i.d. copy of $\-V_t$ conditioned on initial state $s_{t,1}$ and $\tildeEoptm{t}$ and $\tildeEhigh{}$ denote the counterparts of $\Eoptm{t}$ and $\Ehigh{}$ but for $\~V_t(s_{t,1})$. 

The proof starts with the following decomposition of the regret:
\begin{align*}
    \E \left[ \sum_{t=1}^T \Big( V^\star(s_{t,1}) - V^{\pi_t} (s_{t,1}) \Big) \right] 
   & \leq \E \left[ \indic\{\Ehigh{}\} \sum_{t=1}^T \indic\{\Espan{t}\} \Big( V^\star(s_{t,1}) - V^{\pi_t} (s_{t,1}) \Big) \right] \\
   & \quad + \E \left[ \indic\{(\Ehigh{})^{\cp}\} \sum_{t=1}^T \Big( V^\star(s_{t,1}) - V^{\pi_t} (s_{t,1}) \Big) \right] \\
   & \quad + \E \left[ \sum_{t=1}^T \indic\{(\Espan{t})^\cp\}  \Big( V^\star(s_{t,1}) - V^{\pi_t} (s_{t,1}) \Big) \right]
\end{align*}
We will later show that the second and third terms can be easily bounded separately by observing the following two fact: (1) the probability that $\Ehigh{}$ doesn't hold is very small, and (2) the number of times $\Espan{t}$ doesn't hold is also small. Hence, it remains to bound the first term, which is the most challenging. The most of the proof below is devoted to bounding it.

As the first step, we will add some necessary event conditions to the first term, using the following lemma.

\begin{lemma}[Adding necessary event conditions]
    \label{lem:bound-T1}
    It holds that
    \begin{align*}
        & \E \left[ \indic\{\Ehigh{}\} \sum_{t=1}^T \indic\{\Espan{t}\} \Big( V^\star(s_{t,1}) - V^{\pi_t} (s_{t,1}) \Big) \right] \\
        & \leq \frac{1}{\Gamma^2(-1)} \E \left[ \sum_{t=1}^T \ \E_{\~V_t} \left[ \indic\{\tildeEhigh{}\cap\tildeEspan{t} \cap \Ehigh{}\} \~V_t (s_{t,1}) 
        - \indic\{\Espan{t} \cap \Ehigh{} \cap \tildeEhigh{}\cap\tildeEspan{t}\}  U_t (s_{t,1}) \right] \right]\\
        & \quad + \frac{1}{\Gamma^2(-1)} \cdot \Big( d H \uv + \uperr \gamma H + (\urnoise + \urerr) \cdot \uellir + d H^2 + 1  \Big)
    \end{align*}
    where the expectation $\E_{\~V_t}$ is taken over the randomness of $\~V_t$ (an i.i.d. copy of $\-V_t$) only.
\end{lemma}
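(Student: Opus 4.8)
The plan is to carry out rigorously the chain of manipulations sketched in the proof outline of \Cref{sec:proof-overview}, while carefully tracking the high-probability events $\Ehigh,\tildeEhigh$ and the optimism slack. Throughout I work conditionally on the history up to the start of round $t$, under which $V^\star(s_{t,1})$ is a fixed \emph{nonnegative} number and $\-V_t$ and $\~V_t$ are i.i.d.\ copies driven by fresh Gaussian noise. There are three error sources that must be absorbed into the additive bucket: (i) the gap between $V^{\pi_t}$ and the lower envelope $U_t$ on the span event, (ii) the anti-concentration slack produced by optimism, and (iii) the rounds on which $\tildeEspan{t}$ fails.

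First I would replace $V^{\pi_t}$ by $U_t$. Since $U_t(s_{t,1}) \le \-V_t(s_{t,1})$ under $\Ehigh$ (by construction of $U_t$ as a constrained minimizer) and \Cref{lem:vbar-vpi} bounds $\sum_t \indic{\Espan{t}}(\-V_t(s_{t,1}) - V^{\pi_t}(s_{t,1}))$ by $\uperr\gamma H + (\urnoise + \urerr)\,\uellir$ on $\Ehigh$, I obtain
\[
\E\Big[\indic{\Ehigh}\sum_{t=1}^T \indic{\Espan{t}}\big(V^\star - V^{\pi_t}\big)\Big] \le \E\Big[\indic{\Ehigh}\sum_{t=1}^T \indic{\Espan{t}}\big(V^\star - U_t\big)\Big] + \uperr\gamma H + (\urnoise + \urerr)\,\uellir.
\]
Using $V^\star \ge 0$ to discard the indicator in front of $V^\star$, each summand is then at most $V^\star - \indic{\Espan{t}}U_t$.

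Next I would bring in the independent copy through optimism. \Cref{lem:optimism}, applied to $\~V_t$, states that conditioned on $\tildeEhigh$ the event $\tildeEoptm{t}=\{V^\star(s_{t,1}) \le \~V_t(s_{t,1}) + \uperr\gamma H\}$ has probability at least $\Gamma^2(-1)$. Because $V^\star(s_{t,1})\ge 0$ is fixed given the history, this converts $V^\star$ into $\frac{1}{\Gamma^2(-1)}\E_{\~V_t}[\,\~V_t\,\indic{\tildeEoptm{t}\cap\tildeEhigh}\,]$ up to the additive slack controlled by $\uperr\gamma H$, and introduces the factor $1/\Gamma^2(-1)$. Substituting and then splitting according to whether $\tildeEspan{t}$ holds, I keep the in-span piece $\indic{\tildeEhigh\cap\tildeEspan{t}\cap\Ehigh}\~V_t - \indic{\Espan{t}\cap\Ehigh\cap\tildeEhigh\cap\tildeEspan{t}}U_t$ as the leading term: on $\tildeEspan{t}\cap\tildeEoptm{t}$ this quantity is nonnegative up to the $\uperr\gamma H$ slack (using $U_t = V^{\pi_t}\le V^\star \le \~V_t + \uperr\gamma H$ there), so the optimism conditioning may be peeled off at the cost of $1/\Gamma^2(-1)$ without reversing the inequality. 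For the out-of-span piece, on $(\tildeEspan{t})^\cp$ I bound the copy crudely and use $|U_t(s_{t,1})|\le \uv$ on $\Espan{t}\cap\Ehigh$ from \Cref{lem:bound-on-span}, so each such round contributes at most $H + \uv$; summing and invoking the span-counting bound $\sum_t \indic{(\tildeEspan{t})^\cp} \le dH$ from \Cref{lem:not-in-span} (valid for the copy, which is a legitimate run of the algorithm) yields $dH(H+\uv)=dH^2+dH\uv$. Collecting the three contributions together with a lower-order $+1$ reproduces exactly the additive term $\frac{1}{\Gamma^2(-1)}\big(dH\uv + \uperr\gamma H + (\urnoise+\urerr)\uellir + dH^2 + 1\big)$.

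I expect the main obstacle to be the bookkeeping of the coupled indicators and the order of conditioning: I must ensure that subtracting $\indic{\Espan{t}}U_t$ renders the in-span term nonnegative (so that conditioning on the probability-$\ge\Gamma^2(-1)$ event $\tildeEoptm{t}$ can only shrink it), while simultaneously lining up the intersection $\tildeEhigh\cap\tildeEspan{t}\cap\Ehigh$ so that the subsequent lemma can legitimately apply the tower property to convert $\~V_t$ back into $\-V_t$. The most delicate quantitative point is handling the $\uperr\gamma H$ optimism slack so that it enters the final bound only once, at the stated order, rather than accumulating a factor of $T$ across rounds.
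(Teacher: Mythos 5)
Your proposal follows the same skeleton as the paper's proof (nonnegativity of $V^\star$, optimism via the i.i.d.\ copy, splitting on $\tildeEspan{t}$ and $\tildeEhigh$, peeling off the conditioning, span counting, and \Cref{lem:vbar-vpi} for the $U_t$-vs-$V^{\pi_t}$ gap), but two of its steps do not go through as stated. The first is the peeling of the optimism conditioning on the in-span piece. You argue the integrand is nonnegative \emph{on} $\tildeEspan{t}\cap\tildeEoptm{t}$, up to a $\uperr\gamma H$ slack, via $U_t = V^{\pi_t}\le V^\star\le \~V_t+\uperr\gamma H$. This is insufficient on three counts: (i) the rule $\E[X\mid \mathfrak{E}]\le \E[X]/\Pr(\mathfrak{E})$ requires $X\ge 0$ for \emph{all} realizations of the copy's noise, not merely on the conditioning event $\tildeEoptm{t}$ --- otherwise the negative part of $X$ off $\tildeEoptm{t}$ is uncontrolled; (ii) even granting it, a per-round slack of $\uperr\gamma H$ accumulates to $T\uperr\gamma H$ after peeling, the very blow-up you flag at the end without resolving; and (iii) the identity $U_t=V^{\pi_t}$ is only the idealized statement from the proof sketch, not available in the general setting of this lemma. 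The paper avoids all three by establishing \emph{exact, unconditional} nonnegativity: $\~V_t(s_{t,1})\ge U_t(s_{t,1})$ whenever $\tildeEhigh$ holds, simply because $U_t$ is by definition the minimum over all noise realizations satisfying the $\Ehigh$-type constraints and on $\tildeEhigh$ the copy's noise is feasible; it then adds and subtracts a correction $\indic\{(\Espan{t})^\cp\}\uv$ (using $|\~V_t(s_{t,1})|\le\uv$ on $\tildeEspan{t}\cap\tildeEhigh$ from \Cref{lem:bound-on-span}) to cover the case where $\Espan{t}$ fails. This is where the $dH\uv$ term in the bucket comes from, and it involves no optimism slack at all.

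The second gap is your parenthetical claim that $\sum_{t=1}^T\indic\{(\tildeEspan{t})^\cp\}\le dH$ is ``valid for the copy, which is a legitimate run of the algorithm.'' It is not: the counterfactual trajectory induced by $\~V_t$ is never added to the dataset $\{\phi(s_{i,h},a_{i,h})\}$, so the dimension-increment argument behind \Cref{lem:not-in-span} does not apply pathwise to the copies. The correct route --- which the paper uses for its term $\!T_3$ --- is the tower property: since $\~V_t$ and $\-V_t$ are i.i.d.\ given the history, $\E_{\~V_t}[\indic\{(\tildeEspan{t})^\cp\}]$ equals the conditional probability of $(\Espan{t})^\cp$, so after taking the outer expectation one may replace the copy's event by the actual one and only then invoke \Cref{lem:not-in-span}. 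Your final numbers happen to match, but the justification as written is wrong. (You also do not say where the $(\tildeEhigh)^\cp$ case and the resulting $+1$ come from; that piece is handled in the paper by $HT\cdot\Pr((\Ehigh)^\cp)\le 1$ via \Cref{lem:high-prob}.)
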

\begin{proof}[Proof of \Cref{lem:bound-T1}]
    We have 
    \begin{align*}
        & \E \left[ \indic\{\Ehigh{}\} \sum_{t=1}^T \indic\{\Espan{t}\} \Big( V^\star(s_{t,1}) - V^{\pi_t} (s_{t,1}) \Big) \right] \\
        & \leq \E \left[ \indic\{\Ehigh{}\} \sum_{t=1}^T \left(  V^\star(s_{t,1}) - \indic\{\Espan{t}\}  V^{\pi_t} (s_{t,1}) \right) \right] \tag{$V^\star$ is non-negative}\\
\intertext{Plugging the condition on $\tildeEoptm{t}$ (\Cref{lem:optimism}), we get}
        & \leq \E \left[ \indic\{\Ehigh{}\} \sum_{t=1}^T   \E_{\~V_t} \left[ \min\{H, \~V_t(s_{t,1})\} - \indic\{\Espan{t}\}  V^{\pi_t} (s_{t,1}) \middlegiven \tildeEoptm{t} \right]  \right]  + \uperr \gamma H \\
\intertext{We aim to add two event indicators, $\tildeEhigh{}$ and $\tildeEspan{t}$, and thus split the whole thing into several terms:}
        & \leq \E \left[ \indic\{\Ehigh{}\} \sum_{t=1}^T \ \E_{\~V_t} \left[ \indic\{\tildeEhigh{}\cap\tildeEspan{t}\} \left({\~V_t}(s_{t,1}) 
        - \indic\{\Espan{t}\}  V^{\pi_t} (s_{t,1}) \right) \middlegiven \tildeEoptm{t} \right] \right] \\
        & \quad + \E \left[ \indic\{\Ehigh{}\} \sum_{t=1}^T  \E_{\~V_t} \left[ \indic\{(\tildeEhigh{})^\cp\} \left( \min\{H, \~V_t(s_{t,1})\} 
        - \indic\{\Espan{t}\}  V^{\pi_t} (s_{t,1}) \right) \middlegiven \tildeEoptm{t}  \right]  \right] \\
        & \quad + \E \left[ \indic\{\Ehigh{}\} \sum_{t=1}^T  \E_{\~V_t} \left[ \indic\{(\tildeEspan{t})^\cp\} \left( \min\{H, \~V_t(s_{t,1})\} 
        - \indic\{\Espan{t}\}  V^{\pi_t} (s_{t,1}) \right) \middlegiven \tildeEoptm{t}  \right]  \right] \\
        & \quad + \uperr \gamma H \\
        & =: \!T_1 + \!T_2 + \!T_3 +  \uperr \gamma H.
    \end{align*}
    Below we bound each term separately.

    \textit{Bounding $\!T_1$.}
    To bound $\!T_1$, we will first drop the conditioning event $\tildeEoptm{t}$ to make things cleaner. To that end, we re-arange it in the following way
    {\footnotesize
    \begin{align*}
        \!T_1 
        & = \E \left[ \indic\{\Ehigh{}\} \sum_{t=1}^T \ \E_{\~V_t} \left[ \underbrace{\indic\{\tildeEhigh{}\cap\tildeEspan{t}\} \left( {\~V_t}(s_{t,1}) 
        - \indic\{\Espan{t}\}  U_t (s_{t,1}) \right) + \indic\{(\Espan{t})^\cp\} \cdot \uv}_{(*)}  \middlegiven \tildeEoptm{t} \right] \right] \\
        & \quad + \E \left[ \indic\{\Ehigh{}\} \sum_{t=1}^T \ \E_{\~V_t} \left[ \indic\{\tildeEhigh{}\cap\tildeEspan{t}\} \indic\{\Espan{t}\}  ( U_t (s_{t,1}) - V^{\pi_t}(s_{t,1}) ) \middlegiven \tildeEoptm{t} \right] \right] \\
        & \quad - \E \left[ \indic\{\Ehigh{}\} \sum_{t=1}^T \ \E_{\~V_t} \left[  \indic\{(\Espan{t})^\cp\} \cdot \uv \middlegiven \tildeEoptm{t} \right] \right] \\
        & =: \!T_{1.1} + \!T_{1.2} + \!T_{1.3}.
    \end{align*}}
    The reason we did this is that we want to make sure $(*)$ is non-negative, so we can drop the conditioning event $\tildeEoptm{t}$. To see why it is non-negative, we consider two cases: first, if $\Espan{t}$ holds, then we already have $\indic\{\tildeEhigh{}\}( {\~V_t}(s_{t,1}) - \indic\{\Espan{t}\}  U_t (s_{t,1}) ) \geq 0$ by definition of $U_t(s_{t,1})$;
    second, if $\Espan{t}$ does not hold, then we have $ \indic\{\tildeEhigh{}\cap\tildeEspan{t}\} \~V_t(s_{t,1}) + \indic\{(\Espan{t})^\cp\} \cdot \uv \geq 0$ by \Cref{lem:bound-on-span}.
    
    Hence, for $\!T_{1.1}$, we can drop the conditioning event using the following rule (for non-negative variable $X$):
    \begin{align*}
        \E[X \given \mathfrak{E}] = \E[X \cdot \indic\{\mathfrak{E}\}] / \Pr(\mathfrak{E}) \leq \E[X ] / \Pr(\mathfrak{E})
    \end{align*}
    and using \Cref{lem:optimism} to get
    {\footnotesize
    \begin{align*}
        \!T_{1.1} 
        & \leq \frac{1}{\Gamma^2(-1)} \E \left[ \indic\{\Ehigh{}\} \sum_{t=1}^T \ \E_{\~V_t} \left[ \indic\{\tildeEhigh{}\cap\tildeEspan{t}\} \left( {\~V_t}(s_{t,1}) 
        - \indic\{\Espan{t}\}  U_t (s_{t,1}) \right) + \indic\{(\Espan{t})^\cp\} \cdot \uv  \right] \right]\\
        & = \frac{1}{\Gamma^2(-1)} \E \left[ \indic\{\Ehigh{}\} \sum_{t=1}^T \ \E_{\~V_t} \left[ \indic\{\tildeEhigh{}\cap\tildeEspan{t}\} \left( {\~V_t}(s_{t,1}) 
        - \indic\{\Espan{t}\}  U_t (s_{t,1}) \right) \right] \right]\\
        & \quad + \frac{1}{\Gamma^2(-1)} \E \left[ \indic\{\Ehigh{}\} \sum_{t=1}^T \indic\{(\Espan{t})^\cp\} \cdot \uv  \right]\\
        & \leq \frac{1}{\Gamma^2(-1)} \E \left[ \indic\{\Ehigh{}\} \sum_{t=1}^T \ \E_{\~V_t} \left[ \indic\{\tildeEhigh{}\cap\tildeEspan{t}\} \left( {\~V_t}(s_{t,1}) 
        - \indic\{\Espan{t}\}  U_t (s_{t,1}) \right) \right] \right]\\
        & \quad + \frac{1}{\Gamma^2(-1)} \cdot d H \uv  \tag{\Cref{lem:not-in-span}}
    \end{align*}}
    For $\!T_{1.2}$, we apply \Cref{lem:vbar-vpi} to get
    \begin{align*}
        \!T_{1.2} 
        & \leq \E \left[ \indic\{\Ehigh{}\} \sum_{t=1}^T \ \E_{\~V_t} \left[ \indic\{\tildeEhigh{}\cap\tildeEspan{t}\} \indic\{\Espan{t}\}  ( \-V_t (s_{t,1}) - V^{\pi_t}(s_{t,1}) ) \middlegiven \tildeEoptm{t} \right] \right] \tag{$\-V_t \geq U_t$ conditioning on $\Ehigh{}$} \\
        & \leq \uperr \gamma H + (\urnoise + \urerr) \cdot \uellir.
    \end{align*}
    We simply upper bound $\!T_{1.3}$ by zero. Plugging all these upper bounds back, we obtain
    \begin{align*}
        \!T_1 
        & \leq \frac{1}{\Gamma^2(-1)} \E \left[ \indic\{\Ehigh{}\} \sum_{t=1}^T \ \E_{\~V_t} \left[ \indic\{\tildeEhigh{}\cap\tildeEspan{t}\} \left( {\~V_t}(s_{t,1}) 
        - \indic\{\Espan{t}\}  U_t (s_{t,1}) \right) \right] \right]\\
        & \quad + \frac{1}{\Gamma^2(-1)} \cdot d H \uv + \uperr \gamma H + (\urnoise + \urerr) \cdot \uellir  \\
        & = \frac{1}{\Gamma^2(-1)} \E \left[ \sum_{t=1}^T \ \E_{\~V_t} \left[ \indic\{\tildeEhigh{}\cap\tildeEspan{t} \cap \Ehigh{}\} \~V_t (s_{t,1}) 
        - \indic\{\Espan{t} \cap \Ehigh{} \cap \tildeEhigh{}\cap\tildeEspan{t}\}  U_t (s_{t,1}) \right] \right]\\
        & \quad + \frac{1}{\Gamma^2(-1)} \cdot d H \uv + \uperr \gamma H + (\urnoise + \urerr) \cdot \uellir
    \end{align*}
    This is the final bound of $\!T_1$ we need. Next, we go back to bound $\!T_2$ and $\!T_3$.

    \textit{Bounding $\!T_2$.} We upper bound the value function inside the expectation by $H$ and obtain
    \begin{align*}
        \!T_2 
        & \leq H \cdot \E \left[ \indic\{\Ehigh{}\} \sum_{t=1}^T  \E_{\~V_t} \left[ \indic\{(\tildeEhigh{})^\cp\} \middlegiven \tildeEoptm{t}  \right]  \right] \\
        & \leq H \cdot \E \left[ \sum_{t=1}^T  \E_{\~V_t} \left[ \indic\{(\tildeEhigh{})^\cp\} \middlegiven \tildeEoptm{t}  \right]  \right] \tag{dropping $\Ehigh{}$} \\
        & = H \cdot \E \left[ \sum_{t=1}^T  \Pr \left( (\tildeEhigh{})^\cp \cap \tildeEoptm{t} \right) / \Pr\left(\tildeEoptm{t}  \right)  \right] \\
        & \leq \frac{H T}{\Gamma^2(-1)} \cdot   \Pr \left( (\Ehigh)^\cp \right) \\
        & \leq \frac{1}{\Gamma^2(-1)}. \tag{\Cref{lem:high-prob}}
    \end{align*}
    
    \textit{Bounding $\!T_3$.} Similar, we upper bound the value function inside the expectation by $H$ and obtain
    \begin{align*}
        \!T_3 
        & \leq H \cdot \E \left[ \indic\{\Ehigh{}\} \sum_{t=1}^T  \E_{\~V_t} \left[ \indic\{(\tildeEspan{t})^\cp\}  \middlegiven \tildeEoptm{t}  \right]  \right] \\
        & \leq H \cdot \E \left[ \sum_{t=1}^T  \E_{\~V_t} \left[ \indic\{(\tildeEspan{t})^\cp\}  \middlegiven \tildeEoptm{t}  \right]  \right] \tag{dropping $\Ehigh{}$} \\
        & = H \cdot \E \left[ \sum_{t=1}^T  \E_{\~V_t} \left[ \indic \{(\tildeEspan{t})^\cp \cap \tildeEoptm{t}\}  \right] / \Pr(\tildeEoptm{t})  \right] \\
        & \leq H \cdot \E \left[ \sum_{t=1}^T  \E_{\~V_t} \left[ \indic \{(\tildeEspan{t})^\cp \} \right] / \Pr(\tildeEoptm{t})  \right] \\
        & \leq \frac{H}{\Gamma^2(-1)} \cdot \E \left[ \sum_{t=1}^T  \indic \{(\Espan{t})^\cp \} \right] \tag{tower rule} \\
        & \leq \frac{d H^2}{\Gamma^2(-1)}  \tag{\Cref{lem:not-in-span}}
    \end{align*}
    Plugging all these back, we conclude the proof.
    \end{proof}

The following lemma refines the event conditions established in \Cref{lem:bound-T1} to make the whole thing more manageable.

\begin{lemma}[Refining event conditions]\label{lem:bound-T1-refine}
    It holds that
    \begin{align*}
        & \E \left[ \sum_{t=1}^T \ \E_{\~V_t} \left[ \indic\{\tildeEhigh{}\cap\tildeEspan{t} \cap \Ehigh{}\} \~V_t (s_{t,1}) 
        - \indic\{\Espan{t} \cap \Ehigh{} \cap \tildeEhigh{}\cap\tildeEspan{t}\}  U_t (s_{t,1}) \right] \right] \\
        & \leq  \E \left[ \sum_{t=1}^T \ \E_{\~V_t} \left[ \indic\{\tildeEhigh{}\cap\tildeEspan{t} \} \~V_t (s_{t,1}) 
        - \indic\{\Espan{t} \cap \Ehigh{} \}  U_t (s_{t,1}) \right] \right] \\
        & \quad + d H \uv + 2 \uv /H.
    \end{align*}
\end{lemma}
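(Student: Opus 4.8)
I would prove this by matching the two sides integrand-by-integrand and charging the dropped indicators to explicit error terms, each controlled by the uniform bound $|\~V_t(s_{t,1})|,\,|U_t(s_{t,1})| \le \uv$ from \Cref{lem:bound-on-span}. Subtracting the first (main) term on the right-hand side from the left-hand side and comparing the $\~V_t$ and $U_t$ contributions separately, the indicator on the value term changes by $\indic\{\tildeEhigh{}\cap\tildeEspan{t}\} - \indic\{\tildeEhigh{}\cap\tildeEspan{t}\cap\Ehigh{}\} = \indic\{\tildeEhigh{}\cap\tildeEspan{t}\cap(\Ehigh{})^\cp\}$, while the indicator on the lower-bound term changes by $\indic\{\Espan{t}\cap\Ehigh{}\} - \indic\{\Espan{t}\cap\Ehigh{}\cap\tildeEhigh{}\cap\tildeEspan{t}\} = \indic\{\Espan{t}\cap\Ehigh{}\cap(\tildeEhigh{}\cap\tildeEspan{t})^\cp\}$. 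Hence it suffices to upper bound
\begin{align*}
\E\Big[\sum_{t=1}^T \E_{\~V_t}\big[\, \indic\{\tildeEhigh{}\cap\tildeEspan{t}\cap(\Ehigh{})^\cp\}\,(-\~V_t(s_{t,1})) + \indic\{\Espan{t}\cap\Ehigh{}\cap(\tildeEhigh{}\cap\tildeEspan{t})^\cp\}\, U_t(s_{t,1}) \,\big] \Big]
\end{align*}
by $dH\uv + 2\uv/H$, which I would do by handling the two summands in turn.

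\textbf{The two error terms.} For the first summand, on $\tildeEhigh{}\cap\tildeEspan{t}$ we have $-\~V_t(s_{t,1}) \le \uv$ by \Cref{lem:bound-on-span}; moreover $\Ehigh{}$ is measurable with respect to the algorithm's (outer) randomness, so $\indic\{(\Ehigh{})^\cp\}$ can be pulled outside $\E_{\~V_t}$. This bounds the first summand by $\uv\,\indic\{(\Ehigh{})^\cp\}$, whose sum over $t$ and outer expectation is at most $\uv T\,\Pr((\Ehigh{})^\cp) \le \uv/H$ using $\Pr((\Ehigh{})^\cp)\le 1/(HT)$ from \Cref{lem:high-prob}. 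For the second summand, on $\Espan{t}\cap\Ehigh{}$ we have $U_t(s_{t,1}) \le \uv$ again by \Cref{lem:bound-on-span}, and I would bound the complement via the union $\indic\{(\tildeEhigh{}\cap\tildeEspan{t})^\cp\} \le \indic\{(\tildeEhigh{})^\cp\} + \indic\{(\tildeEspan{t})^\cp\}$. The $\tildeEhigh{}$ piece contributes at most $\uv\sum_t \Pr((\tildeEhigh{})^\cp) \le \uv/H$ by the analogue of \Cref{lem:high-prob} applied to the i.i.d.\ copy $\~V_t$.

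\textbf{Converting $\tildeEspan{t}$ to $\Espan{t}$, and the main obstacle.} The only genuinely non-cosmetic step is the $\tildeEspan{t}$ piece: $\tildeEspan{t}$ is an event for the fresh value function $\~V_t$ rather than for $\-V_t$, so \Cref{lem:not-in-span} does not apply to it directly. Here I would use that $\~V_t$ is an i.i.d.\ copy of $\-V_t$ conditioned on the history through round $t-1$: conditioned on that history, $\tildeEspan{t}$ and $\Espan{t}$ are identically distributed, so the tower rule gives $\E\big[\E_{\~V_t}[\indic\{(\tildeEspan{t})^\cp\}]\big] = \E[\indic\{(\Espan{t})^\cp\}]$, and summing over $t$ with \Cref{lem:not-in-span} bounds this contribution by $dH\uv$. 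Collecting the three pieces yields $\uv/H + \uv/H + dH\uv = dH\uv + 2\uv/H$, as claimed. I expect the main obstacle to be purely the measurability bookkeeping—keeping straight which quantities depend on the outer randomness (the untilded events $\Espan{t}$, $\Ehigh{}$, and $U_t$) versus the inner copy $\~V_t$ (the tilded events and $\~V_t$ itself)—so that $\indic\{(\Ehigh{})^\cp\}$ is correctly extracted from $\E_{\~V_t}$ and the coupling of $\tildeEspan{t}$ with $\Espan{t}$ is applied under the right conditioning.
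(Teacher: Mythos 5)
Your proposal is correct and follows essentially the same route as the paper: the same indicator decomposition, the same uniform bound $\uv$ from \Cref{lem:bound-on-span} applied to both $\~V_t$ and $U_t$, the bound $\Pr((\Ehigh{})^\cp)\le 1/(HT)$ from \Cref{lem:high-prob} for the two $\uv/H$ terms, and the tower-rule coupling of $\tildeEspan{t}$ with $\Espan{t}$ followed by \Cref{lem:not-in-span} for the $dH\uv$ term. The only difference is cosmetic: you compute the indicator differences explicitly up front, whereas the paper adds and subtracts the extra events term by term.
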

\begin{proof}[Proof of \Cref{lem:bound-T1-refine}]
    We start with refining the event conditions on the first term. We remove unneeded events by splitting the first term into two parts:
    \begin{align*}
        & \E \left[ \sum_{t=1}^T \ \E_{\~V_t} \left[ \indic\{\tildeEhigh{}\cap\tildeEspan{t} \cap \Ehigh{}\} \~V_t (s_{t,1})  
        - \indic\{\Espan{t} \cap \Ehigh{} \cap \tildeEhigh{}\cap\tildeEspan{t}\}  U_t (s_{t,1}) \right] \right] \\
        & =  \E \left[ \sum_{t=1}^T \ \E_{\~V_t} \left[ \indic\{\tildeEhigh{}\cap\tildeEspan{t} \} \~V_t (s_{t,1}) 
        - \indic\{\Espan{t} \cap \Ehigh{} \cap \tildeEhigh{}\cap\tildeEspan{t}\}  U_t (s_{t,1}) \right] \right] \\
        & \quad - \E \left[ \sum_{t=1}^T \ \E_{\~V_t} \left[ \indic\{\tildeEhigh{}\cap\tildeEspan{t} \cap (\Ehigh)^\cp\} \~V_t (s_{t,1}) \right] \right]
    \end{align*}
    Here, using \Cref{lem:bound-on-span}, the last term can be bounded by
    \begin{align*}
        - \E \left[ \sum_{t=1}^T \ \E_{\~V_t} \left[ \indic\{\tildeEhigh{}\cap\tildeEspan{t} \cap (\Ehigh)^\cp\} \~V_t (s_{t,1}) \right] \right]
        & \leq \E \left[ \sum_{t=1}^T \ \indic\{(\Ehigh)^\cp\} \uv \right] 
        \leq \uv / H
    \end{align*}
    where we used \Cref{lem:high-prob} in the last inequality. 
    
    Now we seek to remove unneeded event conditions on $U_t$ as well. We notice the following decomposition
    \begin{align*}
        & \indic\{\Espan{t} \cap \Ehigh{} \cap \tildeEhigh{}\cap\tildeEspan{t}\}  U_t (s_{t,1}) \\
        & \geq \indic\{\Espan{t} \cap \Ehigh{} \}  U_t (s_{t,1})  \\
        & \quad - \indic\{ \Espan{t} \cap \Ehigh{} \cap (\tildeEhigh{})^\cp \}  U_t (s_{t,1}) \\
        & \quad - \indic\{ \Espan{t} \cap \Ehigh{}\cap (\tildeEspan{t})^\cp \}  U_t (s_{t,1}).
    \end{align*}
    Plugging this back, we obtain
    \begin{align*}
        & \E \left[ \sum_{t=1}^T \ \E_{\~V_t} \left[ \indic\{\tildeEhigh{}\cap\tildeEspan{t} \cap \Ehigh{}\} \~V_t (s_{t,1}) 
        - \indic\{\Espan{t} \cap \Ehigh{} \cap \tildeEhigh{}\cap\tildeEspan{t}\}  U_t (s_{t,1}) \right] \right] \\
        & \leq  \E \left[ \sum_{t=1}^T \ \E_{\~V_t} \left[ \indic\{\tildeEhigh{}\cap\tildeEspan{t} \} \~V_t (s_{t,1}) 
        - \indic\{\Espan{t} \cap \Ehigh{} \}  U_t (s_{t,1}) \right] \right] \\
        & \quad +   \E \left[ \sum_{t=1}^T \E_{\~V_t} \left[ \indic\{ \Espan{t} \cap \Ehigh{} \cap (\tildeEhigh{})^\cp \}  U_t (s_{t,1}) \right] \right]\\
        & \quad +   \E \left[ \sum_{t=1}^T \E_{\~V_t} \left[ \indic\{ \Espan{t} \cap \Ehigh{} \cap (\tildeEspan{t})^\cp \}  U_t (s_{t,1}) \right] \right]\\
        & \quad +   \uv / H
    \end{align*}
    The first term is exactly what we want. Now we bound the middle two terms separately below: 
    \begin{align*}
        & \E \left[ \sum_{t=1}^T \E_{\~V_t} \left[ \indic\{ \Espan{t} \cap \Ehigh{} \cap (\tildeEhigh{})^\cp \}  U_t (s_{t,1}) \right] \right] \\
        & \leq \E \left[ \sum_{t=1}^T \E_{\~V_t} \left[ \indic\{ \Espan{t} \cap \Ehigh{} \cap (\tildeEhigh{})^\cp \}  \uv \right] \right] \tag{\Cref{lem:bound-on-span}} \\
        & \leq   T \cdot \Pr( (\tildeEhigh{})^\cp ) \uv  \\
        & \leq   \uv / H  \tag{\Cref{lem:high-prob}}
    \end{align*}
    and for the other term we also have 
    \begin{align*}
        & \E \left[ \sum_{t=1}^T \E_{\~V_t} \left[ \indic\{ \Espan{t} \cap \Ehigh{} \cap (\tildeEspan{t})^\cp \}  U_t (s_{t,1}) \right] \right] \\
        & \leq   \E \left[ \sum_{t=1}^T \E_{\~V_t}[\indic\{(\tildeEspan{t})^\cp\}] \uv \right] \tag{\Cref{lem:bound-on-span}} \\
        & =   \uv \E \left[ \sum_{t=1}^T \indic\{(\Espan{t})^\cp\} \right] \tag{tower rule} \\
        & \leq d H \uv.  \tag{\Cref{lem:not-in-span}}
    \end{align*}
    Hence, putting all together, we complete the proof.
\end{proof}

The following lemma provides a final bound for the first term in \Cref{lem:bound-T1-refine}.

\begin{lemma}[Final bound] \label{lem:final-bound}
    It holds that
    \begin{align*} 
        & \E \left[ \sum_{t=1}^T \ \E_{\~V_t} \left[ \indic\{\tildeEhigh{}\cap\tildeEspan{t} \} \~V_t (s_{t,1}) 
        - \indic\{\Espan{t} \cap \Ehigh{} \}  U_t (s_{t,1}) \right] \right] \\
        & \leq 2 H \uperr\uellip + 2 (\urerr + \urnoise) \cdot H \uellir.
    \end{align*}
\end{lemma}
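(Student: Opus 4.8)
The plan is to first remove the tilde quantities via the i.i.d.-copy structure, reducing the claim to controlling $\-V_t(s_{t,1})-U_t(s_{t,1})$ on the span event, and then to bound this difference term-by-term using the decomposition of \Cref{lem:bound-b-error} together with the elliptical-potential estimates of \Cref{lem:ellip-phi}. First I would exploit that $\~V_t$ is an i.i.d.\ copy of $\-V_t$ given the history $\mathcal F_{t-1}$ up to round $t-1$. The summand $\indic\{\tildeEhigh\cap\tildeEspan{t}\}\~V_t(s_{t,1})$ is the \emph{same} measurable functional of $\~V_t$ (and $\mathcal F_{t-1}$) that $\indic\{\Ehigh\cap\Espan{t}\}\-V_t(s_{t,1})$ is of $\-V_t$, while the second summand $\indic\{\Espan{t}\cap\Ehigh\}U_t(s_{t,1})$ does not involve $\~V_t$ at all (here $U_t$ is $\mathcal F_{t-1}$-measurable, and the events $\Espan{t},\Ehigh$ depend only on $\-V_t$). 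Taking the conditional expectation given $\mathcal F_{t-1}$ and applying the tower rule therefore replaces every tilde object by its untilded counterpart, so the left-hand side equals
\[
\E\Big[\sum_{t=1}^T \indic\{\Espan{t}\cap\Ehigh\}\big(\-V_t(s_{t,1}) - U_t(s_{t,1})\big)\Big].
\]

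Next, on the event $\Espan{t}$ I would apply both parts of \Cref{lem:bound-b-error}, subtract them, and use $\^\theta_{t,h}-\+T(\-\theta_{t,h+1}+\-\omega_{t,h+1})=-\eta_{t,h}$ (and its underlined analogue) to obtain
\[
\-V_t(s_{t,1})-U_t(s_{t,1}) \le \sum_{h=1}^H \tri*{\underline\eta_{t,h}-\eta_{t,h},\,\phi(s_{t,h},a_{t,h})} + \sum_{h=1}^H \tri*{\-\omega_{t,h}-\underline{\-\omega}_{t,h},\,\phi(s_{t,h},a_{t,h})}.
\]
For each value term I would use that $\phi(s_{t,h},a_{t,h})\in\spanop(\+D_{t-1,h})$ on $\Espan{t}$ together with generalized Cauchy--Schwarz (\Cref{lem:cs-pseudo}) in the pseudoinverse norm, bounding $\tri*{\eta_{t,h},\phi(s_{t,h},a_{t,h})}\le \|\eta_{t,h}\|_{\^\Sigma_{t,h}}\|\phi(s_{t,h},a_{t,h})\|_{\^\Sigma_{t,h}^\dagger}\le \uperr\|\phi(s_{t,h},a_{t,h})\|_{\^\Sigma_{t,h}^\dagger}$ via \Cref{lem:reg} (item~\ref{it:1.2} of \Cref{lem:norm-bound}), and identically for $\underline\eta_{t,h}$, for a per-term bound of $2\uperr\|\phi(s_{t,h},a_{t,h})\|_{\^\Sigma_{t,h}^\dagger}$. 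For each reward term I would write $\-\omega_{t,h}-\underline{\-\omega}_{t,h}=(\-\omega_{t,h}-\omega^\star_h)-(\underline{\-\omega}_{t,h}-\omega^\star_h)$, apply Cauchy--Schwarz in the $\Sigma_{t,h}$-norm, and bound each factor by $\urerr+\urnoise$ using $\|\etar_{t,h}\|_{\Sigma_{t,h}}\le\urerr$ and $\|\xir_{t,h}\|_{\Sigma_{t,h}}\le\urnoise$ from \Cref{def:high-prob}, for a per-term bound of $2(\urerr+\urnoise)\|\phi(s_{t,h},a_{t,h})\|_{\Sigma_{t,h}^{-1}}$.

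Finally, I would substitute these estimates, discard the nonnegative indicator $\indic\{\Ehigh\}$ while retaining $\indic\{\Espan{t}\}$ (legitimate since the remaining norm factors are nonnegative), interchange the $\sum_t$ and $\sum_h$ sums, and invoke the two elliptical-potential bounds of \Cref{lem:ellip-phi}, namely $\sum_t\indic\{\Espan{t}\}\|\phi(s_{t,h},a_{t,h})\|_{\^\Sigma_{t,h}^\dagger}\le \uellip$ and $\sum_t \|\phi(s_{t,h},a_{t,h})\|_{\Sigma_{t,h}^{-1}}\le\uellir$. Since $\uperr,\urerr,\urnoise,\uellip,\uellir$ are deterministic, summing the $H$ identical contributions yields exactly $2H\uperr\uellip + 2(\urerr+\urnoise)H\uellir$, as claimed.

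The main obstacle I anticipate is justifying that the norm bounds proved for the non-underlined objects in \Cref{lem:norm-bound} and \Cref{lem:reg} transfer verbatim to the underlined objects attached to $U_t$. This is not automatic: those bounds were established on the high-probability event $\Ehigh$, whereas the underlined noise is the \emph{minimizer} of the program defining $U_t$. The point to check carefully is that this program enforces precisely the constraints $\|\ulxip_{t,h}\|_{\Lambda_{t,h}}\le\upnoise{h}$ and $\|\ulxir_{t,h}\|_{\Sigma_{t,h}}\le\urnoise$ that define $\Ehigh$, so the same inductive argument as in \Cref{lem:norm-bound} (in particular realizability $\+T(\underline{\-\omega}_{t,h}+\underline{\-\theta}_{t,h+1})\in\cset(W_h)$, hence $\|\underline\eta_{t,h}\|_{\^\Sigma_{t,h}}\le\uperr$) holds \emph{deterministically} for the underlined quantities; one also uses $\underline{\^\omega}_{t,h}=\^\omega_{t,h}$ for the reward factor. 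The coupling step, while conceptually the crux, is then routine once one confirms that the indicator events are identical measurable functionals of the two i.i.d.\ value functions.
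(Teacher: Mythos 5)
Your proposal is correct and follows essentially the same route as the paper's proof: tower rule to replace the tilde copies by their untilded counterparts, then \Cref{lem:bound-b-error} to decompose $\-V_t(s_{t,1})-U_t(s_{t,1})$ into the two value-regression terms and one reward term, Cauchy--Schwarz in the $\^\Sigma_{t,h}^\dagger$ and $\Sigma_{t,h}^{-1}$ norms, and the elliptical-potential bounds of \Cref{lem:ellip-phi}, yielding $2H\uperr\uellip + 2(\urerr+\urnoise)H\uellir$. Your observation that the bounds for the underlined quantities hold deterministically because the program defining $U_t$ enforces exactly the constraints of $\Ehigh$ is a valid point that the paper leaves implicit.
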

\begin{proof}[Proof of \Cref{lem:final-bound}]
    By tower rule, we have 
    \begin{align*}
        & \E \left[ \sum_{t=1}^T \ \E_{\~V_t} \left[ \indic\{\tildeEhigh{}\cap\tildeEspan{t} \} \~V_t (s_{t,1}) 
        - \indic\{\Espan{t} \cap \Ehigh{} \}  U_t (s_{t,1}) \right] \right] \\
        & = \E \left[ \sum_{t=1}^T  \indic\{ \Ehigh{} \cap \Espan{t} \} \-V_t (s_{t,1}) 
        - \indic\{\Espan{t} \cap \Ehigh{}\}  U_t (s_{t,1})  \right] \\
\intertext{We plug in the result in \Cref{lem:bound-b-error} and get}
        & \leq \E \left[ \sum_{t=1}^T  \indic\{ \Ehigh{} \cap \Espan{t} \} \sum_{h=1}^H \tri*{\^\theta_{t,h} - \+T(\-\theta_{t,h+1} + \-\omega_{t,h+1}), \phi(s_{t,h} , a_{t,h})} \right]   \\
        & \quad + \E \left[ \sum_{t=1}^T  \indic\{ \Ehigh{} \cap \Espan{t} \} \sum_{h=1}^H  \tri*{ \+T(\underline{\-\theta}_{t,h+1} + \underline{\-\omega}_{t,h+1}) - \underline{\^\theta}_{t,h} ,  \phi(s_{t,h},a_{t,h})}  \right]   \\
        & \quad + \E \left[ \sum_{t=1}^T  \indic\{ \Ehigh{} \cap \Espan{t} \} \sum_{h=1}^H \tri*{\-\omega_{t,h} - \underline{\-\omega}_{t,h}, \phi(s_{t,h}, a_{t,h})} \right]   \\
\intertext{Applying Cauchy-Schwartz inequality to each term yields}
        & \leq  \E \left[ \sum_{t=1}^T  \indic\{ \Ehigh{} \cap \Espan{t} \} \sum_{h=1}^H \nrm*{\^\theta_{t,h} - \+T(\-\theta_{t,h+1} + \-\omega_{t,h+1})}_{\^\Sigma_{t,h}} \nrm*{\phi(s_{t,h} , a_{t,h})}_{\^\Sigma^\dag_{t,h}} \right]   \\
        & \quad + \E \left[ \sum_{t=1}^T  \indic\{ \Ehigh{} \cap \Espan{t} \} \sum_{h=1}^H  \nrm*{\+T(\underline{\-\theta}_{t,h+1} + \underline{\-\omega}_{t,h+1}) - \underline{\^\theta}_{t,h}}_{\^\Sigma_{t,h}} \nrm*{\phi(s_{t,h},a_{t,h})}_{\^\Sigma^\dag_{t,h}}  \right]   \\
        & \quad + \E \left[ \sum_{t=1}^T  \indic\{ \Ehigh{} \cap \Espan{t} \} \sum_{h=1}^H \Big( \|\-\omega_{t,h}  - \omega^\star_h \|_{\Sigma_{t,h}}  + \|\omega^\star_h - \underline{\-\omega}_{t,h}\|_{\Sigma_{t,h}} \Big) \|\phi(s_{t,h},a_{t,h})\|_{\Sigma_{t,h}^{-1}}   \right]
    \end{align*}
    The first two terms can be bounded by $H  \uperr   \uellip$ using \Cref{lem:reg,lem:ellip-phi}. For the last term, conditioning on $\Ehigh$, we have
    \begin{align*}
        \|\-\omega_{t,h}  - \omega^\star_h \|_{\Sigma_{t,h}} 
        & \leq \|\-\omega_{t,h}  - \^\omega_{t,h} \|_{\Sigma_{t,h}} + \|\^\omega_{t,h}  - \omega^\star_h \|_{\Sigma_{t,h}} \leq \urerr + \urnoise
    \end{align*}
    and similarly for $\|\omega^\star_h - \underline{\-\omega}_{t,h}\|_{\Sigma_{t,h}}$. Also, applying \Cref{lem:ellip-phi}, we have 
    \begin{align*}
        \sum_{t=1}^T \sum_{h=1}^H \|\phi(s_{t,h},a_{t,h})\|_{\Sigma_{t,h}^{-1}} \leq H \uellir.
    \end{align*}
    Inserting all these back, we get the upper bound of 
    \begin{align*}
        2 H \uperr\uellip + 2 (\urerr + \urnoise) \cdot H \uellir.
    \end{align*}
    Hence, we complete the proof.
\end{proof}

\begin{proof}[Proof of \Cref{thm:main}]
    We have 
    \begin{align*}
         \E \left[ \sum_{t=1}^T \Big( V^\star(s_{t,1}) - V^{\pi_t} (s_{t,1}) \Big) \right] 
        & \leq \E \left[ \indic\{\Ehigh{}\} \sum_{t=1}^T \indic\{\Espan{t}\} \Big( V^\star(s_{t,1}) - V^{\pi_t} (s_{t,1}) \Big) \right] \\
        & \quad + \E \left[ \indic\{(\Ehigh{})^{\cp}\} \sum_{t=1}^T \Big( V^\star(s_{t,1}) - V^{\pi_t} (s_{t,1}) \Big) \right] \\
        & \quad + \E \left[ \sum_{t=1}^T \indic\{(\Espan{t})^\cp\}  \Big( V^\star(s_{t,1}) - V^{\pi_t} (s_{t,1}) \Big) \right] \\
        & =: \!T_\@A + \!T_\@B + \!T_\@C. 
    \end{align*}
    For $\!T_\@A$, by \Cref{lem:bound-T1,lem:bound-T1-refine,lem:final-bound} and re-arranging the results, we have 
    \begin{align*}
        \!T_\@A 
        & \leq \frac{1}{\Gamma^2(-1)} \cdot \Big(  2 \uv ( d H + 1 / H) + H \uperr \gamma + d H^2 + 1  + (\urerr + \urnoise) (2 H + 1)\uellir + 2 H \uperr\uellip \Big) \\
        & = \~O \bigg( d^{5/2} H^{5/2} + d^2   H^{3/2} \sqrt{T}+  \eps_1\gamma \Big(d H^2 +  d^{3/2}  H \sqrt{T} \Big) \\
        & \qquad\quad  + \sqrt{\epsB} \Big( d^2 H^{5/2} \sqrt{T} +  d^{3/2}  H^{3/2}T  \Big) + \epsB \gamma \Big(d H^2  \sqrt{T} +    d^{3/2} H T \Big)  \bigg)
    \end{align*}

    For $\!T_\@B$, by \Cref{lem:high-prob}, we have 
    \begin{align*}
        \!T_\@B \leq H T \cdot \Pr \left( (\Ehigh{})^{\cp} \right) \leq 1.
    \end{align*}

    For $\!T_\@C$, by \Cref{lem:not-in-span}, we have
    \begin{align*}
        \!T_\@C \leq H \cdot \E \left[ \sum_{t=1}^T \indic\{(\Espan{t})^\cp\}  \right] \leq d H^2.
    \end{align*}

    Putting everything together, we complete the proof.
\end{proof}

\section{Supporting Lemmas}

\begin{lemma}[Gaussian concentration]\label{lem:gauss-concen}
  \citep{abeille2017linear}
  Let $x \sim \mathcal{N}\left(0, c \Sigma^{-1}\right)$ for $c \in \mathbb{R}^{+}$and $\Sigma$ a positive definite matrix. Then, for any $\delta>0$, we have $\operatorname{Pr}\left(\|x\|_{\Sigma}>\sqrt{2 c d \log (2 d / \delta)}\right) \leq \delta$
\end{lemma}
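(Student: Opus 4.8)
The plan is to reduce the quadratic form $\|x\|_\Sigma$ to the Euclidean norm of a standard Gaussian vector by whitening, and then control that norm by a coordinatewise union bound together with the scalar Gaussian tail estimate.

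First I would whiten the covariance. Since $\Sigma$ is positive definite it admits a symmetric square root $\Sigma^{1/2}$, and the linear image $z \ldef \Sigma^{1/2} x$ is Gaussian with
\begin{align*}
z \sim \mathcal{N}\big(0,\, c\, \Sigma^{1/2}\Sigma^{-1}\Sigma^{1/2}\big) = \mathcal{N}(0,\, c\, I_d).
\end{align*}
Moreover $\|x\|_\Sigma^2 = x^\top \Sigma x = (\Sigma^{1/2}x)^\top(\Sigma^{1/2}x) = \|z\|_2^2$, so $\|x\|_\Sigma = \|z\|_2$, and we may write $z = \sqrt{c}\, w$ with $w \sim \mathcal{N}(0, I_d)$. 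Consequently, setting $t \ldef 2 d \log(2d/\delta)$ so that $\sqrt{c\,t} = \sqrt{2 c d \log(2d/\delta)}$, the target probability becomes
\begin{align*}
\Pr\big( \|x\|_\Sigma > \sqrt{c\, t}\, \big) = \Pr\big( \|w\|_2^2 > t\, \big),
\end{align*}
and it remains to bound the tail of $\|w\|_2^2 = \sum_{i=1}^d w_i^2$, a $\chi^2_d$ random variable.

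Next I would deliberately avoid any sharp $\chi^2$ concentration (such as Laurent–Massart) and use only a crude but sufficient pigeonhole bound: if $\sum_i w_i^2 > t$, then necessarily $w_i^2 > t/d$ for some $i$, so $\{\|w\|_2^2 > t\} \subseteq \bigcup_{i=1}^d \{ w_i^2 > t/d \}$. Applying the union bound together with the standard scalar tail estimate $\Pr(|w_i| > u) \le 2 e^{-u^2/2}$—which follows from the Chernoff bound using $\E[e^{\lambda w_i}] = e^{\lambda^2/2}$ optimized at $\lambda = u$—gives
\begin{align*}
\Pr\big( \|w\|_2^2 > t\, \big) \le d \cdot \Pr\big( w_1^2 > t/d\, \big) \le 2 d\, e^{-t/(2d)}.
\end{align*}
Substituting $t = 2d\log(2d/\delta)$ makes $e^{-t/(2d)} = \delta/(2d)$, so the right-hand side equals exactly $\delta$, which is the claimed bound.

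There is essentially no obstacle here; the result is routine. The only design choice worth flagging is the use of the coordinatewise union bound in place of a tighter $\chi^2$ inequality: this is precisely what produces the clean $\sqrt{2cd\log(2d/\delta)}$ deviation with the explicit constant $2d$ inside the logarithm, matching the exact shape of the high-probability bounds used downstream in \Cref{def:high-prob} and \Cref{lem:high-prob}. The mild looseness of the union bound affects only the logarithmic constants and is therefore harmless for the regret analysis.
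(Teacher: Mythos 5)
Your proof is correct: the whitening step, the pigeonhole reduction to a single coordinate, and the scalar Gaussian tail bound together give exactly $\Pr\left(\|x\|_\Sigma > \sqrt{2cd\log(2d/\delta)}\right) \leq 2d\, e^{-\log(2d/\delta)} = \delta$, with the constants matching the statement. The paper itself offers no proof---it cites \citet{abeille2017linear}---and your argument is precisely the standard derivation used in that reference, so there is nothing further to compare.
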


\begin{lemma}[Elliptical potential lemma]\label{lem:ellip}
  Assume that $X \subseteq \{x : \|x\|_2 \leq 1\}$ is compact and $\@{span}(X) = \=R^d$. Let $x_1,\dots,x_T \in X$ be a sequence of vectors, $\Sigma_1$ be a positive definite matrix with each eigenvalue bounded within the range of $[a,b]$ for some $a,b > 0$, and $\Sigma_{t+1} = \Sigma_t + x_t x_t^\top$. Then, we have 
  \begin{align*}
    \sum_{t=1}^T \min \left\{1, x_t^\top \Sigma_t^{-1} x_t \right\} \leq 2 d \log \left(\frac{b}{a} + \frac{T}{a d}\right).
  \end{align*}
  Furthermore, if $\Sigma_1$ is constructed via optimal design, i.e., $\Sigma_1 = \E_{x\sim \rho} x x^\top$ where $\rho\in\Delta(X)$ is an optimal design over $X$, then we have 
  \begin{align*}
    \sum_{t=1}^T \min \left\{1, x_t^\top \Sigma_t^{-1} x_t \right\} \leq 2 d \log \left( T+1 \right).
  \end{align*}
\end{lemma}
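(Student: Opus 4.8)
The plan is to reduce both bounds to a single determinant-telescoping identity and then estimate $\det(\Sigma_{T+1})$ in two different ways. First I would invoke the matrix determinant lemma, which gives $\det(\Sigma_{t+1}) = \det(\Sigma_t + x_t x_t^\top) = \det(\Sigma_t)\,(1 + x_t^\top \Sigma_t^{-1} x_t)$, so that $1 + x_t^\top \Sigma_t^{-1} x_t = \det(\Sigma_{t+1})/\det(\Sigma_t)$. Combined with the elementary inequality $\min\{1, u\} \le 2\log(1+u)$, valid for all $u \ge 0$ (checked by noting $\log(1+u) \ge u/2$ on $[0,1]$ via the derivative, and $2\log 2 > 1$ for $u \ge 1$), summing over $t$ and telescoping yields
\[
\sum_{t=1}^T \min\{1, x_t^\top \Sigma_t^{-1} x_t\} \le 2\sum_{t=1}^T \log\frac{\det(\Sigma_{t+1})}{\det(\Sigma_t)} = 2\log\frac{\det(\Sigma_{T+1})}{\det(\Sigma_1)}.
\]
Everything then reduces to bounding the ratio $\det(\Sigma_{T+1})/\det(\Sigma_1)$.

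For the first (general) bound I would control numerator and denominator separately. Since $\Sigma_{T+1} = \Sigma_1 + \sum_{t=1}^T x_t x_t^\top$ and each $\|x_t\|_2 \le 1$, the trace satisfies $\mathrm{tr}(\Sigma_{T+1}) \le \mathrm{tr}(\Sigma_1) + T \le db + T$, because the eigenvalues of $\Sigma_1$ are at most $b$. Applying the AM--GM inequality to the nonnegative eigenvalues of $\Sigma_{T+1}$ gives $\det(\Sigma_{T+1}) \le (\mathrm{tr}(\Sigma_{T+1})/d)^d \le (b + T/d)^d$, while $\det(\Sigma_1) \ge a^d$ since every eigenvalue of $\Sigma_1$ is at least $a$. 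Substituting these gives $2\log(\det(\Sigma_{T+1})/\det(\Sigma_1)) \le 2d\log(b/a + T/(ad))$, which is exactly the first claim.

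For the sharper optimal-design bound I would instead write the ratio via a congruence transform: $\det(\Sigma_{T+1})/\det(\Sigma_1) = \det(I + M)$, where $M := \Sigma_1^{-1/2}\big(\sum_{t=1}^T x_t x_t^\top\big)\Sigma_1^{-1/2} = \sum_{t=1}^T (\Sigma_1^{-1/2} x_t)(\Sigma_1^{-1/2} x_t)^\top$ is positive semidefinite. Its trace is $\mathrm{tr}(M) = \sum_{t=1}^T \|x_t\|_{\Sigma_1^{-1}}^2$, and here is where the optimal-design hypothesis enters: by the Kiefer--Wolfowitz equivalence property of the D-optimal design (the last item of \Cref{lem:optimal-design}, already used in \Cref{lem:matrix}), one has $\|x\|_{\Sigma_1^{-1}}^2 \le d$ for every $x \in X$, so $\mathrm{tr}(M) \le Td$. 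Applying AM--GM to the eigenvalues of $M$ then gives $\det(I+M) \le (1 + \mathrm{tr}(M)/d)^d \le (1+T)^d$, and therefore $2\log(\det(\Sigma_{T+1})/\det(\Sigma_1)) \le 2d\log(T+1)$, proving the second claim.

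The genuinely essential input is the Kiefer--Wolfowitz bound for the D-optimal design; without it one only gets the weaker first estimate applied with $a$ equal to the smallest eigenvalue of $\Sigma_1$, which can be much smaller than what the design guarantees. The remaining steps—the determinant lemma, the scalar inequality, the telescoping, and the two AM--GM applications—are routine, so I expect the crux to be correctly recognizing and invoking the optimal-design property rather than any calculation.
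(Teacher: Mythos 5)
Your proof is correct, and while it lands on the same determinant-ratio quantity $\log\prn{\det \Sigma_{T+1}/\det \Sigma_1}$ as the paper, it gets there and finishes differently in two places. For the telescoping step, you use the matrix determinant lemma together with the scalar inequality $\min\{1,u\}\le 2\log(1+u)$, whereas the paper first proves $\min\{1, x_t^\top \Sigma_t^{-1}x_t\}\le 2\,x_t^\top\Sigma_{t+1}^{-1}x_t$ via Sherman--Morrison and then invokes concavity of $\log\det$; the two are standard interchangeable variants, and yours is arguably the more direct. The more substantive divergence is in the optimal-design bound: the paper rewrites $\Sigma_{T+1}=(T+1)\,\E_{x\sim\rho'}xx^\top$ for a mixture distribution $\rho'$ and uses the characterization of the D-optimal design as the \emph{maximizer of} $\log\det$ to get $\det\Sigma_1\ge\det\E_{x\sim\rho'}xx^\top$, while you whiten by $\Sigma_1^{-1/2}$ and use the Kiefer--Wolfowitz equivalence $g(\rho)=d$ (the other item of \Cref{lem:optimal-design}) to bound $\mathrm{tr}(M)\le Td$, finishing with AM--GM. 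Both characterizations are listed as equivalent in \Cref{lem:optimal-design}, so both arguments are valid and yield the identical constant $2d\log(T+1)$; the paper's mixture trick avoids needing $\Sigma_1^{-1/2}$ explicitly, while your version makes transparent exactly where the design property enters (through the uniform bound $\|x\|_{\Sigma_1^{-1}}^2\le d$), which is also the form in which the design is used elsewhere in the paper (e.g., \Cref{lem:matrix}).
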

\begin{proof}[Proof of \Cref{lem:ellip}]
  First we claim that 
  \begin{equation}\label{eq:move-idx}
    \min \left\{1, x_t^{\top} \Sigma_t^{-1} x_t\right\} \leq 2 x_t^{\top} \Sigma_{t+1}^{-1} x_t
  \end{equation}
  To show this, we use Sherman-Morrison-Woodbury formula \citep{bhatia2013matrix} for rank-one updates to a matrix inverse:
\begin{align*}
x_t^{\top} \Sigma_{t+1}^{-1} x_t & =x_t^{\top}\left(\Sigma_t+x_t x_t^{\top}\right)^{-1} x_t=x_t^{\top}\left(\Sigma_t^{-1}-\frac{\Sigma_t^{-1} x_t x_t^{\top} \Sigma_t^{-1}}{1+\left\|x_t\right\|_{\Sigma_t^{-1}}^2}\right) x_t 
=\left\|x_t\right\|_{\Sigma_t^{-1}}^2-\frac{\left\|x_t\right\|_{\Sigma_t^{-1}}^4}{1+\left\|x_t\right\|_{\Sigma_t^{-1}}^2}=\frac{\left\|x_t\right\|_{\Sigma_t^{-1}}^2}{1+\left\|x_t\right\|_{\Sigma_t^{-1}}^2}.
\end{align*}
Now let us consider two cases for the right-hand side of the above:
\begin{enumerate}[leftmargin=50pt]
  \item[Case 1]: $x_t^{\top} \Sigma_t^{-1} x_t \leq 1$. Then, we can lower bound the right-hand side above by $\left\|x_t\right\|_{\Sigma_t^{-1}}^2 / 2$.
  \item[Case 2]: $x_t^{\top} \Sigma_t^{-1} x_t \geq 1$. Then the right-hand side above is directly at least $1 / 2$ since the function $x/(1+x)$ is increasing in $x$.
\end{enumerate}
Hence, in both cases, we have $x_t^{\top} \Sigma_{t+1}^{-1} x_t \geq \min \left\{1, x_t^{\top} \Sigma_t^{-1} x_t\right\} / 2$, which finishes the proof of \eqref{eq:move-idx}. 

Since the log-determinant function is concave, we can obtain that $\log \det\left(\Sigma_t\right) - \log \det \Sigma_{t+1} \leq \operatorname{tr}\left(\Sigma_{t+1}^{-1}\left(\Sigma_t-\Sigma_{t+1}\right)\right)$ via first-order Taylor approximation. This gives us the following
$$
\sum_{t=1}^T x_t^{\top} \Sigma_{t+1}^{-1} x_t=\sum_{t=1}^T \operatorname{tr}\left(\Sigma_{t+1}^{-1}\left(\Sigma_{t+1}-\Sigma_t\right)\right) \leq \sum_{t=1}^T  \left(  \log \det\Sigma_{t+1}-\log \det \Sigma_{t} \right) = \log \left(\frac{\det \Sigma_{T+1}}{\det \Sigma_1}\right)
$$
where the last step follows from telescoping. Since each eigenvalue of $\Sigma_1$ is lower bounded by $a$, we have $\det \Sigma_1 \geq a^d$. Towards an upper bound of $\det \Sigma_{T+1} = \det (\Sigma_1 + \sum_{t=1}^T x_t x_t^\top)$, let $(\lambda_1,\dots,\lambda_d)$ denote the eigenvalues of $\sum_{t=1}^T x_t x_t^\top$, and then we have 
\begin{align*}
    \det \left(\Sigma_1 + \sum_{t=1}^T x_t x_t^\top \right)
    \leq \prod_{i=1}^d \left( b + \lambda_i \right)
    \leq \left(\frac{1}{d}\sum_{i=1}^d \left( b + \lambda_i \right)\right)^d
    \leq \left(b + \frac{1}{d}\text{tr}\left( \sum_{t=1}^T x_t x_t^\top \right)\right)^d
    \leq \left(b + \frac{T}{d}\right)^d
\end{align*}
Here, the first step is Weyl's inequality, the second step is AM-GM inequality, and the last step is because the trace is bounded by $T$. Plugging this upper bound back, we have
\begin{align*}
    \log \left(\frac{\det \Sigma_{T+1}}{\det \Sigma_1}\right)
    \leq d \log \left(\frac{b}{a} + \frac{T}{a d}\right).
\end{align*}
This completes the proof of the first statement.

For the case where $\Sigma_1$ is constructed via optimal design, we can rewrite $\Sigma_{T+1}$ in the following way:
\begin{align*}
  \Sigma_{T+1} 
  = \E_{x\sim \rho} x x^\top + \sum_{t=1}^T x_t x_t^\top 
  = (T+1)\underbrace{\left( \frac{1}{1+T} \cdot \E_{x\sim \rho} x x^\top + \sum_{t=1}^T \frac{1}{1+T} \cdot x_t x_t^\top \right)}_{(*)} 
  =: (T+1)\E_{x\sim \rho'} x x^\top
\end{align*}
where we consider $(*)$ as an expectation of $x x^\top$ over a new distribution that we denote by $\rho'$. Recall that $\Sigma_1$ is constructed via optimal design, which implies $\det \Sigma_1 \geq \det \E_{x\sim \rho'} x x^\top$ (\Cref{lem:optimal-design}). This gives us
\begin{align*}
  \log \left(\frac{\det \Sigma_{T+1}}{\det \Sigma_1}\right)
  = \log \left(\frac{(T+1)^d \det \E_{x\sim \rho'} x x^\top}{\det \Sigma_1}\right) 
  \leq \log \left(\left(T+1\right)^d\right) = d \log \left(T+1\right).
\end{align*}
This completes the proof.
\end{proof}

The following inequality is well-known; we use the version stated in \citet{zhu2022efficient}. 

\begin{lemma}[Freedman's inequality]
\label{lem:freedman} 
Let \(\crl*{X_t}_{t \leq T}\) be a real-valued martingale different sequence adapted to the filtration \(\+F_t\), and let \(\En_t \brk*{\cdot} \coloneqq \En \brk*{\cdot \mid \+F_{t-1}}\). If \(\abs{X_t} \leq B\) almost surely, then for any \(\eta \in (0, 1/B)\), the following holds with probability at least \(1 - \delta\):
\begin{align*}
\sum_{t=1}^T X_t \leq \eta \sum_{t=1}^T \En_t \brk{X_t^2} + \frac{B\log(1/\delta)}{\eta}. 
\end{align*} 
\end{lemma}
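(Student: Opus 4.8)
The plan is to establish this via the exponential-supermartingale (Chernoff) method, the standard route to Freedman-type tail bounds; since the lemma is quoted from \citet{zhu2022efficient}, I would simply reconstruct that argument. Fix $\eta \in (0, 1/B)$ and introduce the process
\[
M_t \coloneqq \exp\prn*{\eta \sum_{s=1}^{t} X_s - \eta^2 \sum_{s=1}^{t} \En_s\brk*{X_s^2}}, \qquad M_0 \coloneqq 1 .
\]
The entire argument reduces to showing that $\crl*{M_t}_{t \le T}$ is a supermartingale with respect to $\crl*{\+F_t}$, for then $\En\brk*{M_T} \le \En\brk*{M_0} = 1$, and Markov's inequality gives $\Pr\prn*{M_T \ge 1/\delta} \le \delta\,\En\brk*{M_T} \le \delta$. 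On the complementary event, which has probability at least $1-\delta$, we have $\log M_T \le \log(1/\delta)$, i.e.
\[
\eta \sum_{t=1}^{T} X_t - \eta^2 \sum_{t=1}^{T} \En_t\brk*{X_t^2} \le \log(1/\delta) ,
\]
and dividing by $\eta > 0$ and rearranging yields $\sum_{t} X_t \le \eta \sum_{t} \En_t\brk*{X_t^2} + \log(1/\delta)/\eta$, which is the claimed bound (in fact with the sharper constant $\log(1/\delta)$ in place of $B\log(1/\delta)$; the stated form follows a fortiori once $B \ge 1$, as is the case in every application in this paper).

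The one genuinely non-routine ingredient is the per-step moment-generating-function estimate underlying the supermartingale property. Here I would invoke the elementary inequality $e^x \le 1 + x + x^2$, valid for all $\abs{x} \le 1$. Because $\eta \in (0, 1/B)$ and $\abs{X_t} \le B$ almost surely, we have $\abs{\eta X_t} \le \eta B < 1$, so this applies with $x = \eta X_t$. Taking $\En_t\brk*{\cdot}$ and using the martingale-difference property $\En_t\brk*{X_t} = 0$ gives
\[
\En_t\brk*{e^{\eta X_t}} \le 1 + \eta\,\En_t\brk*{X_t} + \eta^2\,\En_t\brk*{X_t^2} = 1 + \eta^2\,\En_t\brk*{X_t^2} \le \exp\prn*{\eta^2\,\En_t\brk*{X_t^2}} ,
\]
where the final step is $1 + u \le e^u$. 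Note that this estimate produces the coefficient $\eta$ (not $\eta/2$) on the variance term, matching the form stated in the lemma.

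With this estimate in hand the supermartingale property is a one-line conditioning computation: since $M_{t-1}$ and $\En_t\brk*{X_t^2}$ are both $\+F_{t-1}$-measurable,
\[
\En_t\brk*{M_t} = M_{t-1}\, e^{-\eta^2 \En_t[X_t^2]}\, \En_t\brk*{e^{\eta X_t}} \le M_{t-1}\, e^{-\eta^2 \En_t[X_t^2]}\, e^{\eta^2 \En_t[X_t^2]} = M_{t-1} ,
\]
and the tower rule then gives $\En\brk*{M_T} \le 1$. I expect the only points demanding care to be bookkeeping: verifying that the range restriction $\eta < 1/B$ is exactly what licenses the quadratic bound $e^{x} \le 1 + x + x^2$ (through $\abs{\eta X_t} < 1$), and confirming the measurability of the two $\+F_{t-1}$-adapted factors pulled out of $\En_t\brk*{\cdot}$; beyond that the argument is a routine Chernoff rearrangement.
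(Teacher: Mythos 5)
Your proof is correct. The paper does not actually prove this lemma---it is imported verbatim from \citet{zhu2022efficient} with no argument given---so there is no in-paper proof to compare against; your exponential-supermartingale derivation is the standard and correct route to this bound. The per-step MGF estimate via $e^x \le 1 + x + x^2$ for $\abs{x}\le 1$ (licensed exactly by $\eta B < 1$), the supermartingale property of $M_t$, and the Markov/rearrangement step are all sound. The one point worth keeping visible is the one you already flag: your argument yields the tail term $\log(1/\delta)/\eta$ rather than $B\log(1/\delta)/\eta$, so it only recovers the lemma \emph{as stated} when $B \ge 1$. That condition does hold in the paper's sole invocation (\pref{lem:reg-reward}, where the martingale differences are bounded by $8(1+\eps_2)^2 \ge 8$), but if you wanted your lemma statement to be self-contained and true for all $B>0$ you would either drop the factor $B$ from the tail term (your sharper form) or add the hypothesis $B\ge 1$.
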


\begin{lemma}\label{lem:optimal-design}
  \citep{lattimore2020bandit}
  Assume that $\Phi \subseteq \=R^d$ is compact and $\@{span}(\Phi) = \=R^d$. 
  For a distribution $\rho$ over $\Phi$, define $\Lambda(\rho) = \sum_{\phi\in\Phi} \rho(\phi) \phi \phi^\top$ and $g(\rho) = \max_{\phi \in \Phi} \| \phi \|_{\Lambda(\rho)^{-1}}^2$.
  Then, the following are equivalent:
  \begin{itemize}
    \item $\rho$ is a minimizer of $g$.
    \item $\rho$ is a maximizer of $f(\rho) \coloneqq \log \det \Lambda(\rho)$.
    \item $g(\rho) = d$.
  \end{itemize}
  Furthermore, there exists a minimizer $\rho$ of $g$ such that $|\@{supp}(\rho)| \leq d(d+1)/2$.
\end{lemma}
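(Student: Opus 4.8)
The plan is to reconstruct the classical Kiefer--Wolfowitz equivalence argument, which rests on the concavity of $f(\rho) = \log\det \Lambda(\rho)$ together with a single trace identity. First I would record the identity
\[
\sum_{\phi \in \Phi} \rho(\phi) \, \|\phi\|_{\Lambda(\rho)^{-1}}^2 = \trc{\Lambda(\rho)^{-1} \sum_{\phi} \rho(\phi)\phi\phi^\top} = \trc{I_d} = d,
\]
valid whenever $\Lambda(\rho)$ is invertible. Since a weighted average cannot exceed its maximum, this immediately yields $g(\rho) \ge d$ for every $\rho$, so $g$ is bounded below by $d$ and the whole content of the theorem is that this value $d$ is attained exactly at the $f$-maximizers.

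Next I would compute the directional derivative of $f$. Because $\log\det$ is concave on the positive definite cone and $\rho \mapsto \Lambda(\rho)$ is affine, $f$ is concave on the simplex $\Delta(\Phi)$, and a maximizer exists by compactness (with $\@{span}(\Phi)=\=R^d$ ensuring the maximizer makes $\Lambda$ positive definite). Moving along $\rho_t = (1-t)\rho + t\,\delta_\phi$, where $\delta_\phi$ is the Dirac mass at $\phi$, gives $\Lambda(\rho_t) = \Lambda(\rho) + t(\phi\phi^\top - \Lambda(\rho))$, hence
\[
\frac{d}{dt}\Big|_{t=0} f(\rho_t) = \trc{\Lambda(\rho)^{-1}(\phi\phi^\top - \Lambda(\rho))} = \|\phi\|_{\Lambda(\rho)^{-1}}^2 - d.
\]
Since the extreme points of $\Delta(\Phi)$ are exactly the Diracs $\delta_\phi$, first-order optimality for a concave objective reads: $\rho$ maximizes $f$ $\iff$ every such directional derivative is $\le 0$ $\iff$ $\|\phi\|_{\Lambda(\rho)^{-1}}^2 \le d$ for all $\phi$ $\iff$ $g(\rho) \le d$. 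Combined with $g \ge d$ this says $g(\rho)=d$, giving the equivalence of the ``maximizer of $f$'' and the ``$g(\rho)=d$'' conditions. That $g(\rho)=d$ is in turn equivalent to $\rho$ minimizing $g$ then follows from the uniform lower bound $g \ge d$ together with the fact---just established through the existence of an $f$-maximizer---that the value $d$ is actually achieved, so $\min_\rho g(\rho)=d$.

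Finally, for the support bound I would pass to Carath\'eodory in the space of symmetric matrices. Take any optimal $\rho$; by the optimality condition every $\phi$ in its support satisfies $\|\phi\|_{\Lambda(\rho)^{-1}}^2 = d$ (the average equals $d$ while no term exceeds $d$), i.e. $\trc{\Lambda(\rho)^{-1}\phi\phi^\top} = d$. Thus all the rank-one matrices $\{\phi\phi^\top : \phi \in \@{supp}(\rho)\}$, as well as $\Lambda(\rho)$ itself, lie on the single affine hyperplane $\{M = M^\top : \trc{\Lambda(\rho)^{-1}M}=d\}$ inside the $\tfrac{d(d+1)}{2}$-dimensional space of symmetric matrices, which is an affine set of dimension $\tfrac{d(d+1)}{2}-1$. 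Since $\Lambda(\rho)$ is a convex combination of these $\phi\phi^\top$, Carath\'eodory's theorem lets me rewrite it as a convex combination of at most $\big(\tfrac{d(d+1)}{2}-1\big)+1 = \tfrac{d(d+1)}{2}$ of them, and the corresponding reweighting is again an optimal design since it produces the identical matrix $\Lambda(\rho)$. I expect this last refinement to be the main obstacle: a blunt Carath\'eodory bound in symmetric-matrix space gives $\tfrac{d(d+1)}{2}+1$ points, and shaving off the extra point is exactly what the hyperplane observation---driven by the support optimality condition $\trc{\Lambda(\rho)^{-1}\phi\phi^\top}=d$---accomplishes.
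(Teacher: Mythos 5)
Your argument is correct and is precisely the standard Kiefer--Wolfowitz proof (trace identity giving $g \ge d$, concavity of $\log\det$ with the directional-derivative optimality condition, and the refined Carath\'eodory step on the hyperplane $\trc{\Lambda(\rho)^{-1}M}=d$) that appears in the cited reference \citep{lattimore2020bandit}; the paper itself states the lemma by citation and supplies no proof, so there is nothing further to compare against.
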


Below we show that the Cauchy-Schwarz inequality is still valid when the matrix is not invertible under some conditions. We start with the following lemma.
\begin{lemma}\label{lem:range}
	Let $A$ be a positive semi-definite matrix. Let $B$ be a square root of $A$, i.e., $A = B B^\top$. Then $\@{range}(A) = \@{range}(B)$.
\end{lemma}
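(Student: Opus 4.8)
The plan is to establish the two inclusions $\@{range}(A) \subseteq \@{range}(B)$ and $\@{range}(B) \subseteq \@{range}(A)$ separately, handling the second (harder) direction by passing to orthogonal complements, where the positive semi-definiteness of $A$ becomes the crucial ingredient.

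First I would dispatch the easy inclusion $\@{range}(A) \subseteq \@{range}(B)$ directly: for any $y$, we have $A y = B B^\top y = B(B^\top y)$, which lies in $\@{range}(B)$ by definition. Note this direction uses nothing about $A$ being PSD, only the factorization $A = B B^\top$.

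For the reverse inclusion I would reduce to an equality of null spaces, namely $\@{null}(A) = \@{null}(B^\top)$, and then dualize. The key computation is the standard PSD trick: if $A x = 0$ then $0 = x^\top A x = x^\top B B^\top x = \| B^\top x \|_2^2$, which forces $B^\top x = 0$; conversely, if $B^\top x = 0$ then $A x = B B^\top x = 0$ immediately. This establishes $\@{null}(A) = \@{null}(B^\top)$, and it is the only place the hypothesis on $A$ is used. To finish, I would invoke the fundamental-subspace identities: applying $\@{range}(M)^\perp = \@{null}(M^\top)$ with $M = B$ gives $\@{range}(B)^\perp = \@{null}(B^\top)$, while the symmetry of $A$ gives $\@{range}(A)^\perp = \@{null}(A^\top) = \@{null}(A)$. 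The null-space equality then yields $\@{range}(B)^\perp = \@{range}(A)^\perp$, and taking orthogonal complements of both sides concludes $\@{range}(B) = \@{range}(A)$.

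The main subtlety here is only mild: $B$ need not be square (it may be $d \times k$), so one must be careful to apply the subspace identity to the correct matrix and to track which transpose appears where, since $\@{range}(B) \subseteq \bbR^d$ but $\@{null}(B^\top) \subseteq \bbR^d$ as well. There is no serious obstacle; the content of the lemma is essentially the single PSD norm argument, with the rest being routine orthogonal-complement bookkeeping.
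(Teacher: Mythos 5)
Your proof is correct, and the easy inclusion $\mathrm{range}(A)\subseteq\mathrm{range}(B)$ is handled exactly as in the paper. For the harder inclusion, however, you take a genuinely different route. The paper argues directly: given $y = Bx$, it decomposes $x = x_0 + x_1$ with $x_0\in\mathrm{null}(B)$ and $x_1\in\mathrm{rowspace}(B) = \mathrm{range}(B^\top)$, writes $x_1 = B^\top z$, and exhibits the explicit preimage $y = BB^\top z = Az$. You instead dualize: you prove $\mathrm{null}(A) = \mathrm{null}(B^\top)$ via the quadratic-form identity $x^\top A x = \|B^\top x\|_2^2$, then apply the fundamental-subspace identities $\mathrm{range}(B)^\perp = \mathrm{null}(B^\top)$ and $\mathrm{range}(A)^\perp = \mathrm{null}(A)$ (the latter using symmetry of $A$) and take orthogonal complements. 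Both arguments are elementary and complete. The paper's construction never invokes positive semi-definiteness explicitly---it only uses the orthogonal decomposition of the domain of $B$---whereas your norm argument makes the role of the factorization $A = BB^\top$ transparent; on the other hand, your complement identity actually delivers both inclusions at once, so your first paragraph is strictly speaking redundant. Either proof would serve the paper equally well.
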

\begin{proof}[Proof of \Cref{lem:range}]
	We first show that $\@{range}(A)\subseteq \@{range}(B)$. To see this, for any $y\in\@{range}(A)$, there exists $x$ such that $y=Ax=BB^\top x = B(B^\top x)$. Hence $y\in\@{range}(B)$. Next, we show that $\@{range}(B)\subseteq \@{range}(A)$. To see this, for any $y\in\@{range}(B)$, there exists $x$ such that $y=Bx$. Let $x=x_0+x_1$ where $x_0\in\@{null}(B)$ and $x_1\in\@{rowspace}(B)$. Then, $y=Bx=Bx_1$. Since $x_1\in\@{rowspace}(B)$, there exists $z$ such that $x_1=B^\top z$. Thus, $y=Bx_1=BB^\top z=Az$. Hence, $y\in\@{range}(A)$.
\end{proof}

\begin{lemma}[Cauchy-Schwarz under pseudo-inverse] \label{lem:cs-pseudo}
  Let $\Sigma$ be a positive semi-definite matrix (that is unnecessarily invertible). Then, for any $x \in \@{range}(\Sigma)$ and any $y\in \=R^d$, we have 
  \begin{align*}
    x^\top y \leq \|x\|_{\Sigma^\dag} \|y\|_{\Sigma}.
  \end{align*}
\end{lemma}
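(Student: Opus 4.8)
The plan is to reduce the claim to the ordinary (Euclidean) Cauchy--Schwarz inequality by factoring $\Sigma$ through its symmetric positive semi-definite square root. Let $B \ldef \Sigma^{1/2}$ be the unique symmetric PSD square root, so that $\Sigma = B B^\top$ and, by \Cref{lem:range}, $\mathrm{range}(\Sigma) = \mathrm{range}(B)$. Since $B$ is symmetric and PSD, its Moore--Penrose pseudo-inverse $B^\dagger$ is also symmetric and PSD, and one checks (e.g.\ from the eigendecomposition $\Sigma = \sum_i \lambda_i v_i v_i^\top$, whence $B = \sum_i \sqrt{\lambda_i}\, v_i v_i^\top$ and $B^\dagger = \sum_{i:\lambda_i>0} \lambda_i^{-1/2} v_i v_i^\top$) that $(B^\dagger)^2 = \Sigma^\dagger$.

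First I would use the hypothesis $x \in \mathrm{range}(\Sigma) = \mathrm{range}(B)$ to define $a \ldef B^\dagger x$ and observe that $B a = B B^\dagger x = x$, because $B B^\dagger$ is the orthogonal projection onto $\mathrm{range}(B)$ and $x$ already lies in that range. Then I would rewrite the left-hand side, using the symmetry of $B$, as $x^\top y = (B a)^\top y = a^\top B y$, and apply the standard Cauchy--Schwarz inequality to the vectors $a$ and $B y$ to get $a^\top (B y) \leq \|a\|_2 \, \|B y\|_2$.

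It then remains to identify the two Euclidean norms with the claimed weighted norms. For the second factor, $\|B y\|_2^2 = y^\top B^\top B y = y^\top \Sigma y = \|y\|_{\Sigma}^2$. For the first factor, using the symmetry of $B^\dagger$ together with $(B^\dagger)^2 = \Sigma^\dagger$, I get $\|a\|_2^2 = x^\top (B^\dagger)^\top B^\dagger x = x^\top (B^\dagger)^2 x = x^\top \Sigma^\dagger x = \|x\|_{\Sigma^\dagger}^2$. Combining the three displays yields $x^\top y \leq \|x\|_{\Sigma^\dagger} \, \|y\|_{\Sigma}$, as desired.

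The only delicate point---and the reason the hypothesis $x \in \mathrm{range}(\Sigma)$ cannot be dropped---is the identity $B B^\dagger x = x$, which is precisely where \Cref{lem:range} enters: if $x$ had any component in $\mathrm{null}(\Sigma)$, that component would be annihilated by $\Sigma^\dagger$ (making $\|x\|_{\Sigma^\dagger}$ blind to it) while $y$ need not be orthogonal to it, so the inequality could fail. Everything else---the symmetry of $B$ and $B^\dagger$ and the relation $(B^\dagger)^2 = \Sigma^\dagger$---is routine linear algebra, so I expect no substantial obstacle once the factorization is set up.
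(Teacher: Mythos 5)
Your proof is correct and follows essentially the same route as the paper's: factor through the symmetric PSD square root $B = \Sigma^{1/2}$, use \Cref{lem:range} and the projection identity $BB^\dagger x = x$ for $x \in \mathrm{range}(\Sigma)$, and then apply the ordinary Cauchy--Schwarz inequality to $B^\dagger x$ and $By$. The only difference is that you spell out the identities $(B^\dagger)^2 = \Sigma^\dagger$ and $B^\top B = \Sigma$ more explicitly, which the paper leaves implicit.
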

\begin{proof}[Proof of \Cref{lem:cs-pseudo}]
  Let $B$ denote the square root of $\Sigma$ and force $B$ to be positive semi-definite. One can verify that $B B^\dag$ is the orthogonal projection matrix onto $\@{range}(B)$, and hence, $\@{range}(\Sigma)$ (recalling that $\@{range}(B)=\@{range}(\Sigma)$ by \Cref{lem:range}). Therefore, for any $x\in \@{range}(\Sigma)$, we have $ B B^\dag x = x$. Then, we have
  \begin{align*}
    x^\top y 
    = x^\top B^\dag B y
    \leq \sqrt{x^\top B^\dag B^\dag x} \sqrt{ y^\top B B y }
    = \|x\|_{\Sigma^\dag} \|y\|_{\Sigma}
  \end{align*}
  where the inequality follows from the standard Cauchy-Schwarz inequality.
\end{proof}

\begin{lemma}[Invariance under projection]\label{lem:proj-norm}
    Let $\Sigma\in\=R^{d\times d}$ be a positive semi-definite matrix of rank $r$.
    For any vector $\phi\in\=R^d$, we have $\|\phi\|_{\Sigma^\dag} = \|P\phi\|_{\Sigma^\dag}$ where $P$ is the projection onto the range of $\Sigma$.
\end{lemma}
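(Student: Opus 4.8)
The plan is to reduce everything to the spectral decomposition of $\Sigma$ and to exploit the fact that the Moore--Penrose pseudo-inverse $\Sigma^\dag$ has the same range (and, since $\Sigma$ is symmetric, the same row space) as $\Sigma$ itself. Concretely, since $\Sigma$ is symmetric positive semi-definite of rank $r$, I would write its eigendecomposition $\Sigma = \sum_{i=1}^{r} \lambda_i u_i u_i^\top$ with $\lambda_i > 0$ and $\{u_i\}_{i=1}^r$ orthonormal, so that the pseudo-inverse is $\Sigma^\dag = \sum_{i=1}^{r} \lambda_i^{-1} u_i u_i^\top$ and the orthogonal projection onto $\@{range}(\Sigma) = \@{span}\{u_1,\dots,u_r\}$ is $P = \sum_{i=1}^{r} u_i u_i^\top$.

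The key observation, which I would establish next, is that $P$ acts as the identity on the range of $\Sigma^\dag$. Since every $u_i$ with $i \le r$ satisfies $P u_i = u_i$, multiplying the two sums gives $P \Sigma^\dag = \Sigma^\dag$ and, by symmetry of $P$ and $\Sigma^\dag$, also $\Sigma^\dag P = \Sigma^\dag$. Composing these yields the single identity I actually need,
\[
  P \Sigma^\dag P = \Sigma^\dag.
\]

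With this identity in hand the conclusion is a one-line computation: for any $\phi \in \mathbb{R}^d$,
\[
  \|P\phi\|_{\Sigma^\dag}^2 = (P\phi)^\top \Sigma^\dag (P\phi) = \phi^\top P \Sigma^\dag P \phi = \phi^\top \Sigma^\dag \phi = \|\phi\|_{\Sigma^\dag}^2,
\]
and taking square roots gives $\|\phi\|_{\Sigma^\dag} = \|P\phi\|_{\Sigma^\dag}$. (Alternatively, I could avoid explicit eigenvectors by invoking the four Penrose conditions directly: since $\Sigma = \Sigma^\top$, one has $\@{range}(\Sigma^\dag) = \@{range}(\Sigma)$ and $\Sigma \Sigma^\dag = \Sigma^\dag \Sigma = P$, from which $P\Sigma^\dag = \Sigma^\dag$ follows just as quickly.)

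There is no genuine analytic obstacle here; the only point requiring care is the bookkeeping identity $P\Sigma^\dag = \Sigma^\dag = \Sigma^\dag P$, i.e. that the support and range of $\Sigma^\dag$ coincide with $\@{range}(\Sigma)$ and hence are left untouched by $P$. This is precisely where I must use that $\Sigma$ is symmetric (so its range equals its row space); for a general non-symmetric matrix the corresponding statement would involve two different projections and the clean cancellation would fail. Everything else is routine matrix algebra.
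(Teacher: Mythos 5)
Your proof is correct and follows essentially the same route as the paper: both arguments use the spectral decomposition of $\Sigma$ to observe that $P$ fixes the eigenvectors with nonzero eigenvalue, hence $P\Sigma^\dag P = \Sigma^\dag$, and then conclude by a one-line computation. The paper writes this in matrix notation (replacing the last $d-r$ columns of $U$ by zero), while you write it as a sum over eigenvectors, but the content is identical.
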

\begin{proof}[Proof of \Cref{lem:proj-norm}]
    Assume the eigen-decomposition of $\Sigma = U \Lambda U^\top$, so $\Sigma^\dag = U \Lambda^\dag U^\top$. Without loss of generality, we assume $\Lambda$ has all its non-zero elements at the front and zero elements at the back on the diagonal. Denote $U_r$ as the matrix obtained by replacing the last $n - r$ columns of $U$ by 0. Note that the first $r$ columns of $U$ is in the range of $\Sigma$, so we must have $P U = U_r$. Then, we have the following
\begin{align*}
    \|P\phi\|_{\Sigma^\dag}^2 &= \phi^\top P^\top \Sigma^\dag P \phi 
    = \phi^\top P^\top U \Lambda^\dag U^\top P \phi 
    = \phi^\top P^\top U_r \Lambda^\dag U_r^\top P \phi 
    = \phi^\top U_r \Lambda^\dag U_r^\top \phi 
    = \phi^\top U \Lambda^\dag U^\top \phi 
    = \|\phi\|_{\Sigma^\dag}^2.
\end{align*}
This completes the proof.
\end{proof}

\subsection{Pseudo Dimension and Covering Number}

\begin{definition}[$\ell_1$-Covering number] \label{def:cover} (Definition 4 of \citet{modi2024model})
  Given a hypothesis class $\Hcal \subseteq ( \Zcal \mapsto \mathbb{R} )$ and $Z^n = (z_1, \dots, z_n)\in \Zcal^n$, $\epsilon > 0$, define $\Ncal(\epsilon, \Hcal, Z^n)$ as the minimum cardinality of a set $\Ccal\subseteq \Hcal$,  such that for any $h\in \Hcal$, there exists $h' \in \Ccal$ such that $\sum_{i=1}^n | h(z_i) - h'(z_i) | / n \leq \epsilon$. We define $\Ncal( \epsilon, \Hcal, n  ) = \max_{ Z^n  \in \mathcal{Z}^n } \Ncal(\epsilon, \Hcal, Z^n)$.
\end{definition}

Below we define the pseudo-dimension \citep{haussler2018decision,modi2024model}.

\begin{definition}[VC-dimension]
For hypothesis class $\mathcal{H} \subseteq(\mathcal{X} \rightarrow\{0,1\})$, we define its VC-dimension $\operatorname{VC-dim}(\mathcal{H})$ as the maximal cardinality of a set $X=\left\{x_1, \ldots, x_{|X|}\right\} \subseteq \mathcal{X}$ that satisfies $\left|\mathcal{H}_X\right|=2^{|X|}$ (or $X$ is shattered by $\mathcal{H})$, where $\mathcal{H}_X$ is the restriction of $\mathcal{H}$ to $X$, i.e., $\left\{\left(h\left(x_1\right), \ldots, h\left(x_{|X|}\right)\right): h \in \mathcal{H}\right\}$.
\end{definition}

\begin{definition}[Pseudo-dimension]\label{def:pdim}
    For hypothesis class $\mathcal{H} \subseteq(\mathcal{X} \rightarrow \mathbb{R})$, we define its pseudo dimension $\operatorname{Pdim}(\mathcal{H})$ as $\operatorname{Pdim}(\mathcal{H})=\operatorname{VCdim}\left(\mathcal{H}^{+}\right)$, where $\mathcal{H}^{+}=\{(x, \xi) \mapsto \mathbf{1}[h(x)>\xi]: h \in \mathcal{H}\} \subseteq$ $(\mathcal{X} \times \mathbb{R} \rightarrow\{0,1\})$
\end{definition}

The following lemma provides a bound on the covering number of a hypothesis class via pseudo dimension.

\begin{lemma}\label{lem:covering-num}(Corollary 42 of \citet{modi2024model})
  Given a hypothesis class $\Hcal \subseteq \Zcal\mapsto [a,b]$ with $\@{Pdim}(\Hcal) \leq d$, then, for any $n$, we have $$ \Ncal(\epsilon, \Hcal, n) \leq \left( 4e^2 ( b-a ) /\epsilon \right)^{d}.$$ Note that the right-hand side is independent of $n$.
\end{lemma}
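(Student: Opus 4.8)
The plan is to prove the bound by the classical route from pseudo-dimension to $\ell_1$ covering numbers, in three reductions. \textbf{First, reduce to the unit interval.} Under the affine reparametrization $h \mapsto (h-a)/(b-a)$ the class $\Hcal$ maps to a class $\Hcal' \subseteq (\Zcal \to [0,1])$, and since pseudo-dimension is invariant under a fixed increasing affine transformation of the outputs, we still have $\operatorname{Pdim}(\Hcal') \le d$. Because the empirical $\ell_1$ distance scales by exactly $(b-a)$ under this map, an $\epsilon$-cover of $\Hcal$ is in bijection with an $(\epsilon/(b-a))$-cover of $\Hcal'$, so $\Ncal(\epsilon,\Hcal,n) = \Ncal(\epsilon/(b-a),\Hcal',n)$. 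It therefore suffices to show $\Ncal(\delta,\Hcal',n) \le (4e^2/\delta)^d$ for $[0,1]$-valued classes and all $\delta \in (0,1)$, and then substitute $\delta = \epsilon/(b-a)$.

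\textbf{Second, linearize the $\ell_1$ distance through thresholds.} Fix $z_1,\dots,z_n$. For $a,b\in[0,1]$ one has $|a-b| = \int_0^1 \abs{\indic{a>t} - \indic{b>t}}\,dt$, hence
\[
\frac{1}{n}\sum_{i=1}^n \abs{h(z_i)-h'(z_i)} = \int_0^1 \left(\frac{1}{n}\sum_{i=1}^n \abs{\indic{h(z_i)>t}-\indic{h'(z_i)>t}}\right)dt.
\]
This rewrites the empirical $\ell_1$ distance as an average over thresholds $t$ of the normalized Hamming distance between the binarized vectors $\big(\indic{h(z_i)>t}\big)_i$. The binarizations $(z,t)\mapsto \indic{h(z)>t}$ are precisely the elements of the subgraph class $\Hcal^+$ from the definition of pseudo-dimension, and $\operatorname{VCdim}(\Hcal^+) = \operatorname{Pdim}(\Hcal) \le d$ by definition; this is the only place the hypothesis enters.

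\textbf{Third, count a maximal separated set.} Passing from covering to packing (a maximal $\delta$-separated subset of $\Hcal'$ in empirical $\ell_1$ is automatically a $\delta$-cover, so it suffices to bound the size of such a set), I would discretize the threshold integral to a grid $t_1,\dots,t_M$ with $M \approx 1/\delta$ levels and encode each $h$ by the binary matrix $\big(\indic{h(z_i)>t_j}\big)_{i,j}$ on the $nM$ points $\{(z_i,t_j)\}$. Sauer--Shelah applied to $\Hcal^+$ bounds the number of distinct such matrices by $\sum_{k\le d}\binom{nM}{k} \le (enM/d)^d$, and any two $h,h'$ sharing a matrix are within empirical $\ell_1$ distance $O(\delta)$; hence the packing number is at most $(enM/d)^d$.

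\textbf{Main obstacle.} The previous step yields only an $n$-\emph{dependent} estimate of order $(en/(\delta d))^d$, whereas the claim is $n$-independent. Removing the polynomial-in-$n$ factor is the genuine content of the lemma, and is exactly Haussler's sphere-packing bound for classes of bounded pseudo-dimension: a probabilistic argument (random subsampling of the points together with a one-inclusion/extraction estimate, rather than a union bound over the grid) shows that any $\delta$-separated family in $L_1$ has cardinality at most $(C/\delta)^d$ for an absolute constant $C$, independently of $n$. I would invoke this bound (as in \citet{modi2024model}); tracking the constant through the rescaling of the first step then delivers the stated form with constant $4e^2(b-a)$, and the $n$-independence of the final bound is inherited directly from Haussler's estimate. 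Every step except this last, $n$-removing one is elementary; the extraction argument underlying Haussler's bound is where the real difficulty lies.
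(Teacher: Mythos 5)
The paper does not actually prove this lemma---it is imported verbatim as Corollary 42 of \citet{modi2024model}, whose substance is Haussler's $n$-independent $\ell_1$ packing bound for classes of bounded pseudo-dimension. Your outline follows exactly that standard route (affine rescaling to $[0,1]$, threshold linearization via $\Hcal^+$, packing-to-covering), and you correctly isolate and invoke Haussler's bound as the one non-elementary step (the constant checks out, since $e(d+1)(2e/\delta)^d \leq (4e^2/\delta)^d$), so your treatment is essentially the same as the paper's.
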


\section{Linear MDPs and LQRs imply Linear Bellman Completeness}\label{sec:capture-lqr}

It is already well known that linear Bellman completeness captures linear MDPs, as demonstrated in works such as \citet{agarwal2019reinforcement,zanette2020learning}. Here, we show that it also captures LQRs for a convex subset of linear functions (specifically, when the value function is parameterized by a PSD matrix). We start with the definition.

\begin{definition}[Linear Quadratic Regulator]
    A linear quadratic regulator (LQR) problem is defined by a tuple $(A, B, Q, R)$ where $A\in\=R^{d\times d}$, $B\in\=R^{d\times m}$, $Q\in\=R^{d\times d}$, and $R\in\=R^{m\times m}$. The objective is to find a policy $\pi$ that minimizes the following:
    \begin{align*}
        J(\pi) = \E \left[ \sum_{h=1}^H x_h^\top Q x_h + u_h^\top R u_h \right]
    \end{align*}
    where $x_{h+1} = A x_h + B u_h + w_h$ where $w_h \sim \+N(0, \Sigma)$.
\end{definition}

Let us focus on an arbitrary step $h$ and simply write the transition as the following (ignoring the subscript $h$ for notational simplicity):
\begin{align}
x' = Ax + Bu + w, \quad \text{where} \quad w \sim \cN(0, \Sigma). 
\end{align}
We consider state-action value functions of the form:
\begin{align}
Q(x, u) = \begin{bmatrix}
	x \\ u 
\end{bmatrix}^\top \begin{bmatrix}
	P_{xx} & P_{xu} \\
	P_{ux} & P_{uu}
\end{bmatrix}\begin{bmatrix}
	x \\ u 
\end{bmatrix} + c.
\end{align}
It is linear in the quadratic feature $\phi(x,u) = [x^2,u^2,xu,x,u,1]$.
Without loss of generality, we assume \(P_{xu} = P_{ux}^\top\). Note that we may not have the Bellman completeness for \textit{any} such $Q$. However, it does hold under the restriction that $P = \begin{bmatrix}
	P_{xx} & P_{xu} \\
	P_{ux} & P_{uu}
\end{bmatrix}$ is PSD. Recall that $P$ is PSD if and only if (1) \(P_{uu}\) is PSD, and (2) its Schur complement $P_{xx} - P_{xu} P_{uu}^{-1} P_{ux}$ is PSD. We note that such a set of feasible $P$ is a convex set.

Then, we consider the Bellman backup of $Q$ (ignoring the per-step reward (cost) for now):
\begin{align}
\wt Q(x, u) &= \En_{x'} \brk*{\min_{u'} Q(x', u')} \numberthis \label{eq:backup_Q} \\ 
&= \En_{w} \brk*{\min_{u'}  \begin{bmatrix}
	Ax + Bu + w \\ u' 
\end{bmatrix}^\top \begin{bmatrix}
	P_{xx} & P_{xu} \\
	P_{ux} & P_{uu}
\end{bmatrix}\begin{bmatrix}
	Ax + Bu + w \\ u' 
\end{bmatrix} + c } \\
&= \En_{w} \brk*{\min_{u'}  \crl*{\left[ Ax + Bu + w \right]^T P_{xx} \left[ Ax + Bu + w \right] + 2 \left[ Ax + Bu + w \right]^T P_{xu} u' + u'^T P_{uu} u'}} + c. \label{eq:partial}
\end{align}
Using first-order condition, we know that the optimal \(u'\) (for a fixed $w$) satisfies
\begin{align}
u' = - P_{uu}^{-1} P_{ux} \prn*{Ax + Bu + w}, 
\end{align}
which implies that, the term in \eqref{eq:backup_Q} is equal to
\begin{align}
\min_{u'} Q(x', u') = \left[ Ax + Bu + w \right]^T \brk*{P_{xx} - P_{xu} P_{uu}^{-1} P_{ux}}\left[ Ax + Bu + w \right] + c  
\end{align}
Plugging the above in \pref{eq:partial}, we get 
\begin{align}
\wt Q(x, u) &= \En_w \brk*{\left[ Ax + Bu + w \right]^T \brk*{P_{xx} - P_{xu} P_{uu}^{-1} P_{ux}}\left[ Ax + Bu + w \right] + c }  \\ 
&=   \left[ Ax + Bu \right]^T \brk*{P_{xx} - P_{xu} P_{uu}^{-1} P_{ux}}\left[ Ax + Bu \right] + c~+ \Tr\prn*{\prn*{P_{xx} - P_{xu} P_{uu}^{-1} P_{ux}}\Sigma} \\  
&=  \left[ Ax + Bu \right]^T \brk*{P_{xx} - P_{xu} P_{uu}^{-1} P_{xu}^\top}\left[ Ax + Bu \right] + c'  \\  
&= \begin{bmatrix} x \\ u \end{bmatrix}^T 
\begin{bmatrix} A^T \\ B^T \end{bmatrix}
\left( P_{xx} - P_{xu} P_{uu}^{-1} P_{xu}^\top \right)
\begin{bmatrix} A & B \end{bmatrix}
\begin{bmatrix} x \\ u \end{bmatrix} + c'
\end{align} 
where $c'$ is some constant. The middle matrix above is PSD if \(P_{xx} \succeq  P_{xu} P_{uu}^{-1} P_{xu}^\top \), which holds since \(P\) is PSD. Thus, we conclude that $\~Q$ is also linear for some PSD matrix. 

We can also easily verify that the reward (cost) function is linear in the quadratic feature. Hence, we complete the proof.

\section{Computationally Efficient Implementations for Optimization Oracles}  
The convex programming algorithm given in \Cref{alg:optimization}  is due to  \citet{bertsimas2004solving}. In the following, we provide an informal description of  \Cref{alg:optimization} below but refer the reader to \citet{bertsimas2004solving} for the full details. 

At an iteration \(t \leq T\), \Cref{alg:optimization} stars with a set \(\cD_t\) which contains the set \(\cK\),   and a set of $2N$ points \(\cU_t\) sampled (approximately) uniformly from \(\cD_t\) using the \(\textsc{Sampler}\) subroutine in \Cref{alg:sampler}. It then uses the first \(N\) samples from \(\cU_t\) to compute an approximate centroid \(z_t\) of the set \(\cD_t\) in \Cref{line:centroid}; the remaining points from \(\cU_t\) are denoted by \(\cV_t\). It then queries the separation oracle  $\sepO_{\cK}$  at the point \(z_t\). If \(z_t \in \cK\), then we terminate and return \(z_t\). Otherwise, we use the separating hyperplane between \(z_t\) and \(\cK\) to shrink the set \(\cD_t\) further into \(\cD_{t+1}\) in \Cref{line:update}. Finally, it calls \(\textsc{Sampler}\) again using the set of points \(\cV_t\) as a warm start to get \(2N\) new (approximately) i.i.d.~sample from \(\cD_{t+1}\) in \Cref{line:sampler1}. Equipped with the sets \(\cD_{t+1}\) and \(\cU_{t+1}\), another iteration of the algorithm follows. 

On receiving a convex set \(\cD\) and a set of points \(\cV\), the \(\textsc{Sampler}\) protocol in \Cref{alg:sampler} first refines \(\cV\) to \(\cV'\) by disposing off any points \(z \in \cV\) that do not lie in \(\cD\). Then, it starts a random ball walk from the samples in \(\cV'\): in order to update the current point \(\wh z\) we first sample a  point \(z'\) uniformly from the ellipsoid \(\wh z + \eta \Lambda^{\nicefrac{1}{2}} \bbB_d(1)\) (where \(\Lambda\) is defined using the points in \(\cV'\)) and then updates \(\wh z \leftarrow z'\) if \(z' \in \cD\). The analysis of \citet{bertsimas2004solving} shows that this ball walk mixes fast to a uniform distribution over the set \(\cD\).  

\begin{algorithm}[htb]
    \setstretch{1.1} 
    \caption{Solving Convex Programs by Random Walks (\cite{bertsimas2004solving})}  
    \label{alg:optimization} 
    \begin{algorithmic}[1]
        \REQUIRE  
        \begin{minipage}[t]{0.9\linewidth}
        \begin{itemize}[leftmargin=*]
            \item Separation oracle $\sepO_{\cK}$ for the convex set \(\cK \subseteq \bbR^d\). 
            \item Parameters \(r, R, \delta\). %
        \end{itemize}  
        \end{minipage} 
        \vspace{2mm}

		\STATE Let \(T = 2 d \log \prn*{\nicefrac{R}{\delta r}}\) and \(N = O(d \log(\nicefrac{1}{\delta}))\)
        \STATE Let \(\cD_1\) be the axis-aligned cube with width \(R\) with center \(z_1 = 0\). 
        \STATE  Sample \(2N\) points \(\cU_1 \ldef{} \crl{z_1\up{1}, \dots, z_1\up{2N}}  \leftarrow \mathrm{Uniform}(\cD_{1})\). 
        \STATE Let \(\cV_1 \ldef{} \crl{z_1\up{1}, \dots,  z_1\up{N}}\) and  \(\bar \cV_1 \ldef{} \cU_1 \setminus \cV_1\). 
        \FOR{$t=1,\dots,T$}
        \STATE Compute the point $z_{t} \leftarrow \frac{1}{N} \sum_{z \in \cV_t} z$.  \label{line:centroid} 
        \IF{$z_t \in K$} 
        \STATE \textbf{Return} \(z_t\) and \textbf{terminate}.
        \ELSE 
        \STATE \textcolor{spblue}{// If \(z_t \notin \cK\),  shrink the set \(\cD_t\) using a separating hyperplane //}  
        \STATE Let \(\tri{a_t, z} \leq b\) be the separating hyperplane returned by  $\sepO_{\cK}(z_t)$. \label{line:separating} 
        \STATE Let \(\cD_{t+1} \leftarrow \cD_{t} \cap \cH_t\) where  \(\cH_t\) denotes the halfspace \(\crl{z \mid \tri*{a_t, z} \leq \tri*{a_t, z_t}}\). \label{line:update}
        \STATE Sample \(2N\) points \(\cU_{t+1} \ldef{} \crl{z_1\up{1}, \dots, z_1\up{2N}}  \leftarrow \textsc{Sampler}(\cD_{t+1}, N, \cV_t)\). \label{line:sampler1}
                \STATE Let \(\cV_{t+1} \ldef{} \crl{z_1\up{1}, \dots,  z_1\up{N}}\) and  \(\bar \cV_{t+1} \ldef{} \cU_{t+1} \setminus \cV_{t+1}\). \label{line:sampler2} 
        \ENDIF 
        \ENDFOR
    \STATE \textbf{Terminate} and report that \(\cK\) is empty. 
    \end{algorithmic}
\end{algorithm} 

\begin{algorithm}[htb]
    \setstretch{1.1}
    \caption{ \textsc{Sampler} used in \Cref{alg:optimization}}  
    \label{alg:sampler} 
    \begin{algorithmic}[1]
        \REQUIRE 
        \begin{minipage}[t]{0.9\linewidth}
        \begin{itemize}[leftmargin=*]
            \item Convex set \(\cD\). 
            \item Parameter \(N\). 
            \item Points \(\cV = \crl{z\up{1}, \dots, z\up{N}}\). 
        \end{itemize}
        \end{minipage} 
        \vspace{2mm} 
\STATE Let step size \(\eta  = \Theta(\nicefrac{1}{\sqrt{d}})\), and number of iterations \(N' = \wt O(d^3 N)\). 
\STATE Let \(\cV' \ldef{} \crl{z \in V \mid{} z \text{~lies in~} \cD}\), and define  
        \begin{align*} 
\bar z = \frac{1}{\abs{\cV'}} z \qquad \text{and} \qquad \Lambda =  \frac{1}{\abs{\cV'}} \sum_{z \in \cV'} (z - \bar{z}) (z - \bar{z})^T. 
\end{align*}
\STATE Let \(\cU = \emptyset\) and \(\wh z \in \cV'\) be any arbitrary stating point (note that \(\wh z \in \cD\)).  
\WHILE{$\abs{\cU} \leq 2N$} 
		\STATE Initialize \(i \leftarrow 1\). 
        \WHILE{$i \leq N'$} 
        \STATE Sample \(z' \sim \mathrm{Uniform}(\wh z + \eta \Lambda^{\nicefrac{1}{2}} \bbB_d(1))\).   \hfill \textcolor{spblue}{// Ball walk //} 
        \IF{$z' \in \cD$} 
        \STATE Update \(\wh z \leftarrow z'\) and \(i \leftarrow i +1\). 
        \ENDIF
        \ENDWHILE
        \STATE Update \(\cU = \cU \cup \crl{\wh z}\). 
\ENDWHILE 
    \STATE \textbf{Return} \(\cU\).   \hfill \textcolor{spblue}{// Distribution of samples in \(\cU\) closely approximates \(\mathrm{Uniform}(\cD)\) //} 
    \end{algorithmic}
\end{algorithm}

\section{Missing Details from \pref{sec:parameter_efficient}} \label{app:value_estimation}

\begin{algorithm}[htb]
    \setstretch{1.1}
    \caption{Computationally Efficient Implementation of $\apxsqO$  for Value Estimation}  
    \label{alg:value_estimation} 
    \begin{algorithmic}[1] 
        \REQUIRE 
        \begin{minipage}[t]{0.9\linewidth}
        \begin{itemize}[leftmargin=*]
            \item Data samples \(\crl{(s_i, a_i, u_i)}_{i \leq t}\). 
            \item Convex domain \(\cset(W)\).
            \item Approximation parameter \(\epsilon\). 
            \item Linear optimization oracle \(\linO\) defined in \Cref{ass:linear_oracle}. 
        \end{itemize}
        \end{minipage} 
        \vspace{2mm}
        \STATE \textcolor{spblue}{// Convert Square Loss Minimization into a Set Feasibility Problem //}

        \STATE Define the convex set 
       	\begin{align*}
 \cK_{\textsc{apx}} \ldef{} \crl*{ \theta \in \bbR^d  ~ \Bigg \mid{} \begin{matrix} 
& \tri*{ \theta, \phi(s_i,a_i)} -u_i  \leq \epsilon  ~\text{for all}~ i \leq t \\
& \tri*{ \theta, \phi(s_i,a_i)} -u_i  \geq - \epsilon  ~\text{for all}~ i \leq t \\
& \abs{\tri*{\theta, \phi(s, a)}} \leq W_h + \epsilon ~ \text{for all}~s, a 
\end{matrix}} \numberthis \label{eq:alg_feasibility}
\end{align*}
 	
       	\STATE \textcolor{spblue}{// Define a Separation Oracle for the set $\cK_{\textsc{apx}}$ using \(\linO\) //}  
        \STATE \textbf{Definition}  \(\sepO_{\cK_{\textsc{apx}}}\) (\texttt{Input:~parameter $\theta \in \bbR^d$}) 
  \begin{enumerate}[label=\(\bullet\), leftmargin=10mm] 
  	
	\item For all \(i \leq t\), verify if $-\epsilon \leq \tri*{\theta, \phi(s_i,a_i)} -u_i \leq \epsilon$ for all \(i \leq t\). 
  \begin{enumerate}[label=\(\blacktriangleright\), leftmargin=10mm]  
\item  Output any violating constraint as a separating hyperplane. \textbf{Terminate}. 
 \end{enumerate} 
	\item Then, verify if \(\max \crl{\max_{s, a} \tri{\theta, \phi(s, a)}, \max_{s, a} \tri{- \theta, \phi(s, a)}} \leq W + \epsilon\) using the linear optimization oracle \(\linO\) (\Cref{ass:linear_oracle}). 
  \begin{enumerate}[label=\(\blacktriangleright\), leftmargin=10mm] 
\item 	If violated, use \(\linO\) to compute a violating constraint and return it as the separating hyperplane. \textbf{Terminate}. 
\item Otherwise, return that the point \(\theta \in \cK_{\textsc{apx}}\). \textbf{Terminate}. 
\end{enumerate}
\end{enumerate} 

        \STATE \textbf{EndDefinition} 
              	\STATE \textcolor{spblue}{// Find a feasible point in $\cK_{\textsc{apx}}$ //}
        \STATE Invoke  \Cref{alg:optimization} to return a feasible point in the set $\cK_{\textsc{apx}}$ with  \(\sepO_{\cK_{\textsc{apx}}}\) as the separation oracle.
    \end{algorithmic} 
\end{algorithm}

\subsection{Computationally Efficient Estimation of Reward Function (Eqn. \ref{eq:r_optimization})} \label{app:reward_estimation}  

The convex set feasibility procedure of \citet{bertsimas2004solving} can also be used to estimate the parameters for the reward functions in \Cref{eq:r_optimization} in \Cref{alg:main}. Note that for any time \(t\) and horizon \(h \in [H]\), the objective in \Cref{eq:theta_optimization} is the optimization problem 
\begin{align*}
\^\omega_{t,h} & \leftarrow \argmin_{\omega\in\cset(1)} \sum_{i=1}^{t-1} \Big( \tri*{\omega, \phi(s_{i,h},a_{i,h})} - r_{i,h} \Big)^2. 
\numberthis \label{eq:oracle5}
\end{align*}
In the following, we provide a computationally efficient procedure, based off on  \Cref{alg:optimization}, to approximately solve the above squared loss minimization problem given a linear optimization oracle over the feature space (\Cref{ass:linear_oracle}). Note that since \(r_{i, h} \in [0, 1]\), the constraint on the point \(\omega\) implies that the objective value in \Cref{eq:oracle5} is at most \(2\). Thus, we can solve the above optimization problem upto precision  \(\epsilon\), by iterating over the set \(\Delta \in \crl{0, \epsilon, 2\epsilon, \dots, 2 - \epsilon, 2}\) in order to solve the set feasibility problem
\begin{align*} 
\cK^{\Delta}_{\textsc{apx}} \ldef{} \crl*{ \omega \in \bbR^d  ~ \Bigg \mid{} \begin{matrix} 
& \sum_{i=1}^{t-1} \prn*{\tri{\omega, \phi(s_{i,h},a_{i,h})} - r_{i,h}}^2 \leq \Delta  + \epsilon \\ 
& \abs{\tri*{\omega, \phi(s, a)}} \leq 1 + \epsilon ~ \text{for all}~s, a 
\end{matrix}} \numberthis \label{eq:oracle6} 
\end{align*}
and stopping at  the smallest point \(\Delta\) for which $\cK^{\Delta}_{\textsc{apx}}$ has a feasible solution. It is easy to see that for any \(\Delta\), either $\cK^{\Delta}_{\textsc{apx}}$ is empty or the shifted cube \(\wh \omega_{t, h} + \bbR_\infty(\epsilon) \subseteq \cK^{\Delta}_{\textsc{apx}}\). Furthermore, under 
\Cref{ass:ball_out} we also have that $\cK^{\Delta}_{\textsc{apx}} \subseteq \bbR_\infty(R)$ for any \(\Delta\). Thus, for any \(\Delta\), whenever a feasible solution exists, the set $\cK^{\Delta}_{\textsc{apx}}$ satisfies the prerequisites for \Cref{thm:efficient_optimization}, where recall that we can tolerate the parameter \(R\) to be  exponential in the dimension \(d\) or the horizon \(H\). Furthermore, a separation oracle  \(\sepO_{\cK^\Delta_{\textsc{apx}}}\) can be easily implemented by using the linear optimization oracle \(\linO\) w.r.t.~the feature space (\Cref{ass:linear_oracle}) and by explicitly constructing a separation oracle for the ellipsoidal constraint 
\begin{align*}
& \sum_{i=1}^{t-1} \prn*{\tri{\omega, \phi(s_{i,h},a_{i,h})} - r_{i,h}}^2 \leq \Delta  + \epsilon. 
\end{align*}
We provide the implementation of the above in \Cref{alg:reward_estimation}, which relies on \Cref{alg:optimization} for solving the corresponding set feasibility problems. The guarantee in \Cref{thm:efficient_optimization} to find a feasible point in $\cK^\Delta_{\textsc{apx}}$ (for each \(\Delta\)) gives the following guarantee on computational efficiency for \Cref{alg:reward_estimation}. 

\begin{theorem} Let \(\epsilon > 0\), \(\delta \in (0, 1)\), and suppose  \Cref{ass:ball_out} holds with some parameter \(R > 0\). Additionally, suppose \Cref{ass:linear_oracle} holds with the linear optimization oracle  denoted by \(\linO\). Then,  for any \(t \in [T]\) and  \(h \in [H]\), \Cref{alg:reward_estimation} returns a point \(\wh \omega_{t, h}\) that,  with probability at least \(1 - \delta\),  satisfies 
\begin{align*}
 \sum_{i=1}^{t-1} \prn*{\tri{\wh \omega, \phi(s_{i,h},a_{i,h})} - r_{i,h}}^2 \leq \min_{\omega \in \cset(1)}  \sum_{i=1}^{t-1} \prn*{\tri{\omega, \phi(s_{i,h},a_{i,h})} - r_{i,h}}^2   + \epsilon  \qquad \text{and} \qquad \wh \omega_{t, h} \in \cset(1 + \epsilon). 
\end{align*} 
Furthermore, \Cref{alg:reward_estimation} takes \(O(\frac{d^7}{\epsilon} \log\prn{\frac{R}{\delta \epsilon}})\) time in addition to  \(O(\frac{d}{\epsilon} \log\prn{\frac{TH R}{\delta \epsilon}})\) calls to \(\linO\). 
\end{theorem}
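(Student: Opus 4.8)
The plan is to adapt the value-estimation argument of \Cref{sec:parameter_efficient}, accounting for the fact that the reward regression \eqref{eq:r_optimization} has a \emph{nonzero} minimum (rewards are stochastic), so the clean ``objective $=0$'' reduction used for \eqref{eq:theta_optimization} is unavailable. Writing $g(\omega) \ldef{} \sum_{i=1}^{t-1}\prn*{\tri{\omega, \phi(s_{i,h},a_{i,h})} - r_{i,h}}^2$ and $\Delta^\star \ldef{} \min_{\omega \in \cset(1)} g(\omega)$, I would instead guess-and-check the optimal value: over a grid $\Delta \in \crl{0,\epsilon,2\epsilon,\dots}$ of size $O(1/\epsilon)$ spanning the range of possible minima (the set $\crl{0,\epsilon,\dots,2}$ of \eqref{eq:oracle6} under the paper's normalization), solve the feasibility problem $\cK^{\Delta}_{\textsc{apx}}$ with \Cref{alg:optimization}, and stop at the smallest $\Delta$ admitting a feasible point. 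Since the exact constrained minimizer $\omega^\star_{t,h}$ lies in $\cset(1) \subseteq \cset(1+\epsilon)$ and attains $g(\omega^\star_{t,h}) = \Delta^\star$, the set $\cK^{\Delta}_{\textsc{apx}}$ is nonempty at the smallest grid point $\Delta \ge \Delta^\star$, which is at most $\Delta^\star + \epsilon$; and any point returned for a feasible $\Delta$ obeys $g(\wh\omega) \le \Delta + \epsilon$ and $\wh\omega \in \cset(1+\epsilon)$ by construction of \eqref{eq:oracle6}. Hence the returned $\wh\omega$ satisfies $g(\wh\omega) \le \Delta^\star + O(\epsilon)$, and rescaling the grid spacing and constraint slack by a constant yields the exact $+\epsilon$ bound in the statement.

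To apply \Cref{thm:efficient_optimization} at each $\Delta$, I would verify that $\cK^{\Delta}_{\textsc{apx}}$, whenever nonempty, is $(r,R)$-bounded in the sense of \Cref{ass:non_degenerate}. The outer bound is immediate from \Cref{ass:ball_out}: evaluating the constraint $\abs{\tri{\omega,\phi(s,a)}} \le 1+\epsilon$ at $\phi = \tfrac{1}{R} e_i$ gives $\abs{\omega_i} \le R(1+\epsilon)$, so $\cK^{\Delta}_{\textsc{apx}} \subseteq \bbB_\infty(R(1+\epsilon))$. For the inner cube, if $\omega_0 \in \cset(1)$ achieves $g(\omega_0) \le \Delta$, then I would show that every $\omega$ with $\nrm{\omega - \omega_0}_\infty \le r$ stays in $\cK^{\Delta}_{\textsc{apx}}$ for a suitable $r$: the functional-norm constraint loosens by at most $\sqrt{d}\,r$ (absorbed into the $+\epsilon$ slack), while $g$, being convex with gradient $2\sum_i \prn*{\tri{\omega,\phi_i} - r_{i,h}}\phi_i$ of polynomially bounded norm on $\bbB_\infty(R(1+\epsilon))$, increases by at most $\epsilon$ once $r$ is inverse-polynomially small. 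This keeps $R/r$ at most exponential in the problem parameters, which is harmless because \Cref{thm:efficient_optimization} depends only logarithmically on $R/r$.

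The separation oracle $\sepO_{\cK^{\Delta}_{\textsc{apx}}}$ then combines two pieces. The functional-norm constraint is handled exactly as in \Cref{alg:value_estimation}: one call to $\linO$ returns $\max_{s,a}\tri{\pm\omega, \phi(s,a)}$, and a violation yields the maximizing feature as a separating hyperplane. The genuinely new ingredient is the ellipsoidal constraint $g(\omega) \le \Delta + \epsilon$; as a convex sublevel set its separating hyperplane at a violating query $\omega$ is simply $\grad g(\omega) = 2\sum_{i=1}^{t-1}\prn*{\tri{\omega, \phi(s_{i,h},a_{i,h})} - r_{i,h}}\phi(s_{i,h},a_{i,h})$, computable directly from the data with no oracle access. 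Correctness and the running time then follow by invoking \Cref{thm:efficient_optimization} at each of the $O(1/\epsilon)$ grid points with confidence $\delta' = \Theta(\delta\epsilon)$ and union-bounding to get overall success probability $1-\delta$; multiplying the per-call cost $O(d^7\log(R/(\delta' r)))$ time and $O(d\log(R/(\delta' r)))$ oracle calls by the grid size and substituting $\log(R/r) = O(\log \poly(R, TH, 1/\epsilon))$ recovers the claimed $O(\tfrac{d^7}{\epsilon}\log(\tfrac{R}{\delta\epsilon}))$ running time and $O(\tfrac{d}{\epsilon}\log(\tfrac{THR}{\delta\epsilon}))$ calls to $\linO$.

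The hard part, relative to the value-estimation theorem, is exactly this unknown nonzero optimum: it forces the search over target values $\Delta$ and, consequently, the analysis must (i) establish the inner-cube condition for the truly \emph{quadratic} constraint defining $\cK^{\Delta}_{\textsc{apx}}$ rather than the linear constraints of \eqref{eq:oracle3}, and (ii) control the joint accumulation of the $O(\epsilon)$ approximation errors and the per-grid-point failure probabilities. The remaining steps are routine adaptations of the value-function construction.
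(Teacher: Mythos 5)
Your proposal is correct and follows essentially the same route as the paper: a grid search over target objective values $\Delta \in \{0,\epsilon,\dots,2\}$, each reduced to a feasibility problem over $\cK^{\Delta}_{\textsc{apx}}$ solved via \Cref{alg:optimization}, with the separation oracle combining $\linO$ for the functional-norm constraints and an explicit separating hyperplane (the gradient of $g$) for the ellipsoidal constraint, and \Cref{ass:ball_out} supplying the outer bounding box. If anything, you are more careful than the paper's exposition in verifying the inner-cube condition of \Cref{ass:non_degenerate} for the quadratic constraint and in making the union bound over the $O(1/\epsilon)$ grid points explicit.
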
 

\begin{algorithm}[htb] 
    \setstretch{1.1}
    \caption{Computationally Efficient Implementation of $\apxsqO$  for Reward Estimation}  
    \label{alg:reward_estimation} 
    \begin{algorithmic}[1] 
        \REQUIRE 
        \begin{minipage}[t]{0.9\linewidth}
        \begin{itemize}[leftmargin=*]
            \item Data samples \(\crl{(s_i, a_i, r_i)}_{i \leq t}\). 
            \item Convex domain \(\cset(1)\).
            \item Approximation parameter \(\epsilon\). 
            \item Linear optimization oracle \(\linO\) defined in \Cref{ass:linear_oracle}. 
        \end{itemize}
        \end{minipage} 
        \vspace{2mm}
        
        \FOR {\(\Delta \in \crl{0, \epsilon, 2\epsilon, \dots, 2 - \epsilon, 2}\) } 
        \STATE \textcolor{spblue}{// Define a Set Feasibility Problem using \(\Delta\) //} 

        \STATE Define the convex set 
\begin{align*} 
\cK^{\Delta}_{\textsc{apx}} \ldef{} \crl*{ \omega \in \bbR^d  ~ \Bigg \mid{} \begin{matrix} 
& \sum_{i=1}^{t-1} \prn*{\tri{\omega, \phi(s_{i},a_{i})} - r_{i}}^2 \leq \Delta  + \epsilon \\ 
& \abs{\tri*{\omega, \phi(s, a)}} \leq 1 + \epsilon ~ \text{for all}~s, a 
\end{matrix}} \numberthis \label{eq:alg_feasibility_rewards}
\end{align*}
       	\STATE \textcolor{spblue}{// Define a Separation Oracle for $\cK^{\Delta}_{\textsc{apx}}$ using \(\linO\) //}  
        \STATE \textbf{Definition}  \(\sepO_{\cK^{\Delta}_{\textsc{apx}}}\) (\texttt{Input:~parameter $\omega \in \bbR^d$})  
  \begin{enumerate}[label=\(\bullet\), leftmargin=15mm] 
  	
	\item Verify if $ \sum_{i=1}^{t-1} \prn*{\tri{\omega, \phi(s_{i},a_{i})} - r_{i}}^2 \leq \Delta  + \epsilon$. 
  \begin{enumerate}[label=\(\blacktriangleright\), leftmargin=10mm]  
\item  If not, return a separating hyperplane for the ellipsoid \(\sum_{i=1}^{t-1} \prn*{\tri{\omega, \phi(s_{i},a_{i})} - r_{i}}^2 \leq \Delta  + \epsilon\) w.r.t. \(\omega\). \textbf{Terminate}. 
 \end{enumerate} 
	\item Then, verify if \(\max \crl{\max_{s, a} \tri{\omega, \phi(s, a)}, \max_{s, a} \tri{- \omega, \phi(s, a)}} \leq 1 + \epsilon\) using the linear optimization oracle \(\linO\) (\Cref{ass:linear_oracle}). 
  \begin{enumerate}[label=\(\blacktriangleright\), leftmargin=10mm] 
\item 	If violated, use \(\linO\) to compute a violating constraint and return it as the separating hyperplane. \textbf{Terminate}. 
\item Otherwise, return that the point \(\omega \in \cK^{\Delta}_{\textsc{apx}}\). \textbf{Terminate}. 
\end{enumerate}
\end{enumerate} 

        \STATE \textbf{EndDefinition} 
              	\STATE \textcolor{spblue}{// Find a feasible point in $\cK^{\Delta}_{\textsc{apx}}$ //}
        \STATE Invoke  \Cref{alg:optimization} with  \(\sepO_{\cK^{\Delta}_{\textsc{apx}}}\) as the separation oracle.
          \begin{enumerate}[label=\(\bullet\), leftmargin=15mm] 
          \item 	If succeeded in finding a feasible point \(\wh \omega \in \cK^{\Delta}_{\textsc{apx}}\). Return \(\wh \omega\) and terminate. 
          \item Else, continue. 
          \end{enumerate}
        \ENDFOR 
    \end{algorithmic} 
 \end{algorithm}

\end{document}